\documentclass[11pt]{article}

\usepackage{amsmath,amssymb,amsthm}
\usepackage{graphicx}
\usepackage{hyperref}
\usepackage{bm}
\usepackage{geometry}
\usepackage{mathtools}
\usepackage{booktabs}
\usepackage{multirow}
\usepackage{subcaption}
\usepackage{xcolor}

\usepackage{enumitem}
\theoremstyle{plain}
\newtheorem{theorem}{Theorem}[section]
\newtheorem{proposition}[theorem]{Proposition}

\newtheorem{corollary}[theorem]{Corollary}

\theoremstyle{definition}
\newtheorem{definition}[theorem]{Definition}
\newtheorem{hypothesis}[theorem]{Hypothesis}

\newtheorem{remark}[theorem]{Remark}

\DeclareMathOperator{\softmax}{softmax}
\DeclareMathOperator{\argmax}{arg\,max}
\DeclareMathOperator{\argmin}{arg\,min}
\DeclareMathOperator{\vol}{vol}

\DeclareMathOperator{\diag}{diag}
\DeclareMathOperator{\Ric}{Ric}
\DeclareMathOperator{\Sec}{Sec}

\DeclareMathOperator{\KL}{KL}
\DeclareMathOperator{\rank}{rank}

\graphicspath{{figures/}}

\title{Latent Semantic Manifolds in Large Language Models}

\author{Mohamed Mabrok\\
\textit{Department of Mathematics and Statistics}\\
\textit{Qatar University}\\
\texttt{m.a.mabrok@gmail.com}}

\date{}

\begin{document}

\maketitle

\begin{abstract}

Large Language Models (LLMs) operate on discrete tokens while performing
internal computations in high-dimensional continuous vector spaces.
While recent empirical work has documented geometric phenomena in
transformer representations---such as the intrinsic dimension
``hunchback'' pattern and correlations between geometry and prediction
loss---a unifying theoretical framework connecting these observations
to the fundamental limitations of discrete language has been lacking.
This paper addresses this gap by developing a rigorous mathematical
framework that interprets the internal representation space of LLMs as
a \emph{latent semantic manifold}: a Riemannian submanifold of the
ambient embedding space equipped with a natural metric inherited from
the Fisher information of the token distribution.
Within this framework, tokens correspond to Voronoi regions that
partition the manifold, and language generation becomes a
measure-theoretic projection from continuous semantic states onto a
finite vocabulary.
We introduce a rigorous definition of the \emph{expressibility gap}, 
a new geometric quantity measuring the mismatch between continuous
representations and finite vocabularies,  and prove two main
theorems: a fundamental lower bound on the semantic distortion of any
finite vocabulary via rate-distortion theory
(Theorem~\ref{thm:rate-distortion}), and a linear volume scaling law
for the expressibility gap derived from the coarea formula
(Theorem~\ref{thm:gap-scaling}).
Unlike prior empirical studies, we derive testable predictions from
the theorems and validate them across six transformer architectures
spanning two orders of magnitude in parameter count (124M--1.5B),
demonstrating that (i) intrinsic dimension follows a universal
hourglass pattern occupying only 1--3\% of the ambient space,
(ii) curvature profiles are consistent with smooth manifold structure,
and (iii) the normalized expressibility gap exhibits the predicted
linear scaling with slopes $0.87$--$1.12$ and $R^2 > 0.985$ across
all tested architectures.
We conclude with concrete implications of the geometric framework for
LLM architecture design, training diagnostics, model compression,
decoding strategies, and scaling laws.

\end{abstract}

\section{Introduction}

Large Language Models (LLMs) have demonstrated remarkable capabilities
in reasoning, knowledge representation, and language generation
\cite{brown2020language,touvron2023llama,openai2023gpt4}.
Despite operating on discrete tokens, the internal mechanisms of these
models rely on continuous vector representations that undergo
successive nonlinear transformations.
A typical language model maps tokens from a finite vocabulary into
vectors in a high-dimensional space and transforms these through a
sequence of attention and feedforward layers.
While the input and output spaces are discrete, the internal
representation space is continuous, high-dimensional, and richly
structured.

Recent empirical work has begun to reveal the geometric structure of
transformer representations.
Ansuini et al.\ \cite{ansuini2019intrinsic} discovered the
``hunchback'' pattern in intrinsic dimension across layers of deep
networks, and Valeriani et al.\ \cite{valeriani2023geometry} extended
this to large transformers, documenting expansion-contraction dynamics
across layers.
Ferrara et al.\ \cite{ferrara2025geometry} found correlations between
the geometry of token point clouds and next-token prediction loss.
However, these studies are exclusively empirical: they describe
\emph{what} geometric properties representations exhibit, but do not
explain \emph{why} these properties arise, nor do they derive
theoretical bounds on the consequences of the geometric structure for
language generation.
Meanwhile, Robinson et al.\ \cite{robinson2025token} have shown that
raw token embeddings (layer~0) violate the manifold hypothesis,
raising the question of whether manifold structure emerges in deeper
layers through the transformer's nonlinear processing.

This paper addresses these open questions by proposing that the
contextual hidden states of LLMs (layers~1 and beyond) can be
rigorously modeled as lying on a \emph{latent semantic manifold}:
a smooth, finite-dimensional submanifold of the ambient embedding
space.
Within this geometric framework, tokens correspond to regions of the
manifold defined by a Voronoi tessellation, and language generation
corresponds to a projection from continuous semantic states to
discrete symbols.

The key contributions of this paper are as follows.

\begin{enumerate}[label=(\roman*)]
\item 
We formalize the latent semantic manifold hypothesis, equip the
manifold with the Fisher information metric derived from the
token distribution, and introduce the \emph{expressibility gap}, a new geometric quantity that measures the fraction of
semantic space where the vocabulary fails to provide confident
token assignments.
Unlike prior empirical work
\cite{ansuini2019intrinsic,valeriani2023geometry,ferrara2025geometry},
we develop the full differential-geometric apparatus including
tangent bundles, geodesics, curvature, and Voronoi tessellation.

\item 
We prove two results that, to our knowledge, are the first
formal bounds connecting manifold geometry to the limitations of
finite vocabularies:
a rate-distortion lower bound $D \ge c_k (\vol(\mathcal{M})/N)^{2/k}$
on semantic distortion, and a linear volume scaling law
$\eta(\varepsilon) \propto \varepsilon$ for the expressibility
gap derived from the coarea formula.
No prior work provides analogous theoretical guarantees.

\item 
We derive testable predictions from the theorems and validate
them across six transformer architectures from three model
families (GPT-2, OPT, Pythia) at two scales (124M--1.5B).
This goes beyond prior empirical studies by testing
\emph{theoretical predictions} rather than merely documenting
phenomena, and by systematically varying architecture while
controlling for scale.

\item 
We translate the geometric insights into concrete, actionable
recommendations for architecture design, model compression,
training diagnostics, decoding strategies, and principled
scaling laws,  bridging the gap between mathematical theory
and engineering practice.
\end{enumerate}

This perspective suggests that natural language provides only a coarse
quantization of a richer semantic structure, and that understanding
this quantization is essential for understanding the capabilities and
limitations of language models.

\section{Related Work}

The manifold hypothesis,  that high-dimensional real-world data lies
near a low-dimensional manifold,  has a long history in machine
learning \cite{tenenbaum2000global,roweis2000nonlinear}.
Fefferman, Mitter, and Narayanan \cite{fefferman2016testing} provided
rigorous conditions under which manifold structure can be detected
in data.
In the context of deep learning, several works have studied how neural
networks learn representations that concentrate near low-dimensional
submanifolds \cite{brahma2016deep,ansuini2019intrinsic}.
Notably, Robinson et al.\ \cite{robinson2025token} recently showed
via statistical testing that raw token embeddings (the embedding
layer) violate the manifold hypothesis, finding that they are not
well-modeled as manifolds or even fiber bundles.
This finding is complementary to our work: we focus on \emph{contextual}
hidden states (layers~1 and beyond), where the transformer's nonlinear
processing organizes representations onto a low-dimensional manifold,
as confirmed by our intrinsic dimension estimates.

The intrinsic dimensionality of neural network representations has
been estimated empirically in numerous studies.
Ansuini et al.\ \cite{ansuini2019intrinsic} discovered the
``hunchback'' pattern,  intrinsic dimension rising then falling
across layers,  in CNNs trained on image classification.
Valeriani et al.\ \cite{valeriani2023geometry} extended this to
large transformers, documenting expansion-contraction dynamics in
protein language models and image transformers, and showing that
representations evolve similarly across different training tasks.
Most recently, Ferrara et al.\ \cite{ferrara2025geometry} studied
the geometry of token point clouds across layers of GPT-2, finding
correlations between intrinsic dimension and next-token prediction
loss.
Aghajanyan et al.\ \cite{aghajanyan2021intrinsic} connected intrinsic
dimensionality to fine-tuning efficiency.
All of these works are exclusively empirical: they measure geometric
properties but do not provide a theoretical framework that explains
why these properties arise or derives bounds on their consequences.
Our work provides this missing theoretical foundation, proving that
the observed geometric structure implies fundamental limits on the
expressiveness of finite vocabularies.

The use of Fisher information as a Riemannian metric on statistical
manifolds dates to the work of Rao \cite{rao1945information} and was
developed extensively by Amari \cite{amari2016information}.
The connection between information geometry and natural gradient
methods for neural networks was established by Amari
\cite{amari1998natural}.
More recently, information-geometric tools have been applied to
analyze the loss landscapes and representation spaces of deep
networks \cite{liang2019fisher}.
While the Fisher metric is well-established in information geometry,
its application as a Riemannian metric on the hidden-state manifold
of LLMs,  connecting the geometry of internal representations to
the distinguishability of token distributions via the explicit form
$G(h) = W^\top \Sigma_p W$,  is, to our knowledge, new.

The residual stream interpretation of transformers
\cite{elhage2021mathematical} views each layer as applying an
incremental update to a shared representation.
This viewpoint connects to the neural ODE framework of Chen et al.\
\cite{chen2018neural}, where discrete residual layers approximate
continuous dynamics.
Several works have studied the dynamical systems properties of
transformer inference
\cite{bai2019deep,geshkovski2024mathematical}.
Our framework formalizes the residual stream as a discrete flow on
an evolving family of manifolds
$\mathcal{M}^{(0)} \to \mathcal{M}^{(1)} \to \cdots \to
\mathcal{M}^{(L)}$, with each layer's flow map preserving or
transforming the manifold geometry.

The tension between continuous representations and discrete outputs
relates to vector quantization \cite{van2017neural,razavi2019generating}
and rate-distortion theory \cite{cover2006elements}.
Recent work has applied rate-distortion theory to LLM weight
compression \cite{huang2025radio}, but this addresses a different
problem (parameter quantization) from our focus on the fundamental
limits of vocabulary quantization of the semantic manifold.
The idea that vocabulary size limits expressiveness has been discussed
informally in the context of tokenization
\cite{kudo2018sentencepiece}, but to our knowledge has not been
formalized geometrically.
Our expressibility gap and the associated rate-distortion bound
(Theorem~\ref{thm:rate-distortion}) provide the first such
formalization.

Table~\ref{tab:positioning} summarizes how our work relates to the
closest prior studies.
The key distinction is that prior work is either purely empirical
(documenting geometric phenomena without theoretical bounds) or
purely theoretical (information geometry without connection to LLM
representations).
Our work bridges these two strands by developing a theoretical
framework, deriving proved bounds, and validating the predictions
empirically across multiple architectures.

\begin{table}[t]
\centering
\caption{Comparison with the closest prior work.
$\checkmark$ indicates a contribution is present;
$\times$ indicates it is absent.}
\label{tab:positioning}
\smallskip
\small
\begin{tabular}{lccccc}
\toprule
& \textbf{ID} & \textbf{Curv.} &
\textbf{Formal} & \textbf{Proved} &
\textbf{Multi-} \\
& \textbf{estim.} & \textbf{anal.} &
\textbf{framework} & \textbf{bounds} &
\textbf{arch.} \\
\midrule
Ansuini+19 \cite{ansuini2019intrinsic}
& $\checkmark$ & $\times$ & $\times$ & $\times$ & $\times$ \\
Valeriani+23 \cite{valeriani2023geometry}
& $\checkmark$ & $\times$ & $\times$ & $\times$ & $\checkmark$ \\
Ferrara+25 \cite{ferrara2025geometry}
& $\checkmark$ & $\times$ & $\times$ & $\times$ & $\times$ \\
Robinson+25 \cite{robinson2025token}
& $\times$ & $\times$ & $\checkmark$ & $\times$ & $\checkmark$ \\
\textbf{This work}
& $\checkmark$ & $\checkmark$ & $\checkmark$ & $\checkmark$
& $\checkmark$ \\
\bottomrule
\end{tabular}
\end{table}

\section{Preliminaries}
\label{sec:preliminaries}

\subsection{Token Space and Embedding}

Let $V$ denote a finite vocabulary with cardinality $|V| = N$.
Each token $t \in V$ is mapped to a vector in $\mathbb{R}^d$ through
an embedding function
\[
E : V \rightarrow \mathbb{R}^d, \qquad t \mapsto e_t = E(t).
\]
The embedding dimension $d$ is typically large (e.g., $d = 4096$
for many contemporary models), and we assume throughout that
$d \gg 1$ and $N \gg 1$ with $N \ll \exp(d)$.

We denote the set of all embedding vectors by
\[
\mathcal{E} = \{e_t \in \mathbb{R}^d : t \in V\}.
\]
This is a finite set of $N$ points in $\mathbb{R}^d$.

\subsection{Contextual Representations}

Given a sequence of tokens $x = (x_1, x_2, \ldots, x_n)$ with
$x_i \in V$, a transformer-based language model produces a sequence
of contextual representations through $L$ layers.
Let $h_i^{(\ell)} \in \mathbb{R}^d$ denote the hidden state at
position $i$ after layer $\ell$, for $\ell = 0, 1, \ldots, L$.
The initial hidden state is
\[
h_i^{(0)} = e_{x_i} + p_i,
\]
where $p_i \in \mathbb{R}^d$ is a positional encoding.

Each subsequent layer applies a transformation
\[
h_i^{(\ell+1)} = h_i^{(\ell)}
  + \Delta_{\mathrm{attn}}^{(\ell)}(h_1^{(\ell)}, \ldots, h_n^{(\ell)})
  + \Delta_{\mathrm{ff}}^{(\ell)}(h_i^{(\ell)}
    + \Delta_{\mathrm{attn}}^{(\ell)}),
\]
where $\Delta_{\mathrm{attn}}^{(\ell)}$ and
$\Delta_{\mathrm{ff}}^{(\ell)}$ denote the attention and
feedforward residual updates, respectively.

\begin{definition}[Contextual representation set]
\label{def:context-set}
Let $\mathcal{H}^{(\ell)}$ denote the set of all hidden states
that can be produced at layer $\ell$ over all possible input
sequences of all lengths:
\[
\mathcal{H}^{(\ell)} = \bigl\{h_i^{(\ell)} :
  x \in V^*, \; 1 \le i \le |x|\bigr\},
\]
where $V^*$ denotes the set of all finite sequences over $V$.
The final-layer representation set is
$\mathcal{H} = \mathcal{H}^{(L)}$.
\end{definition}

\subsection{Token Distribution}

The language model defines a conditional distribution over the next
token given a context representation $h \in \mathcal{H}$.
The logit for token $t$ given hidden state $h$ is
\[
\ell_t(h) = \langle w_t, h \rangle + b_t,
\]
where $w_t \in \mathbb{R}^d$ is the $t$-th row of the
unembedding matrix $W \in \mathbb{R}^{N \times d}$ and
$b_t \in \mathbb{R}$ is a bias term.
In many architectures, the unembedding matrix is tied to the
embedding matrix, so that $w_t = e_t$.

The token probability distribution is
\[
p(t \mid h) = \frac{\exp(\ell_t(h))}
  {\sum_{t' \in V} \exp(\ell_{t'}(h))}
= \softmax(\ell(h))_t,
\]
where $\ell(h) = (\ell_1(h), \ldots, \ell_N(h))^\top
\in \mathbb{R}^N$ is the logit vector.

\section{The Latent Semantic Manifold}
\label{sec:manifold}

\subsection{The Manifold Hypothesis for LLM Representations}

We now state the central hypothesis of this paper.

\begin{hypothesis}[Latent Semantic Manifold]
\label{hyp:manifold}
For each layer $\ell$, there exists a smooth, compact, connected
Riemannian manifold $(\mathcal{M}^{(\ell)}, g^{(\ell)})$ of
intrinsic dimension $k^{(\ell)}$ embedded in $\mathbb{R}^d$ such
that:
\begin{enumerate}[label=(\alph*)]
\item The contextual representation set is contained in the
manifold:
$\mathcal{H}^{(\ell)} \subseteq \mathcal{M}^{(\ell)}$.

\item The intrinsic dimension is much smaller than the ambient
dimension:
$k^{(\ell)} \ll d$.

\item The embedding is smooth:
$\mathcal{M}^{(\ell)}$ is a $C^\infty$-embedded submanifold
of $\mathbb{R}^d$.

\item The token embeddings lie on the initial manifold:
$\mathcal{E} \subset \mathcal{M}^{(0)}$.
\end{enumerate}
\end{hypothesis}

When no confusion arises, we drop the layer superscript and write
$(\mathcal{M}, g)$ for the final-layer manifold.

\begin{remark}
Hypothesis~\ref{hyp:manifold} is a refinement of the general
manifold hypothesis for data.
The key additional structure is that the manifold evolves across
layers, that each layer's manifold inherits geometric structure
from the model's weights, and that the manifold supports a
natural Riemannian metric derived from the token distribution
(see Section~\ref{sec:riemannian}).
\end{remark}

\subsection{Tangent Space and Local Structure}

At each point $h \in \mathcal{M}$, the tangent space
$T_h \mathcal{M}$ is a $k$-dimensional linear subspace of
$\mathbb{R}^d$.
The tangent space captures the directions of infinitesimal
variation in semantic content at $h$.

\begin{definition}[Semantic tangent vector]
A vector $v \in T_h \mathcal{M}$ is called a \emph{semantic
tangent vector} at $h$.
It represents an infinitesimal change in meaning: a direction in
which the semantic content encoded by $h$ can be smoothly varied
while remaining on the manifold.
\end{definition}

The tangent bundle $T\mathcal{M} = \bigsqcup_{h \in \mathcal{M}}
T_h \mathcal{M}$ carries the full first-order geometric structure
of the semantic space.

\subsection{Local Coordinate Charts}

Since $\mathcal{M}$ is a $k$-dimensional manifold, it admits an
atlas of coordinate charts.

\begin{definition}[Semantic coordinate chart]
\label{def:chart}
A \emph{semantic coordinate chart} is a pair $(U, \varphi)$
where $U \subseteq \mathcal{M}$ is an open set and
\[
\varphi : U \rightarrow \Omega \subseteq \mathbb{R}^k
\]
is a diffeomorphism onto an open subset $\Omega$ of
$\mathbb{R}^k$.
The coordinates $z = \varphi(h) = (z^1, z^2, \ldots, z^k)$ are
called \emph{semantic coordinates} of $h$.
\end{definition}

The inverse map $\varphi^{-1} : \Omega \to U \subset \mathbb{R}^d$
is a smooth embedding, which we also denote
\[
\Phi : \Omega \subseteq \mathbb{R}^k \to \mathbb{R}^d,
\qquad h = \Phi(z).
\]

The Jacobian of $\Phi$ at a point $z$ is the $d \times k$ matrix
\[
J_\Phi(z) = \frac{\partial \Phi}{\partial z}
= \Bigl[\frac{\partial \Phi}{\partial z^1} \;\Big|\;
  \cdots \;\Big|\; \frac{\partial \Phi}{\partial z^k}\Bigr]
\in \mathbb{R}^{d \times k},
\]
whose columns $\{\partial \Phi / \partial z^\alpha\}_{\alpha=1}^k$
form a basis for the tangent space $T_h \mathcal{M}$ at
$h = \Phi(z)$.

\begin{remark}[Interpretation of semantic coordinates]
The coordinates $z^\alpha$ may correspond to interpretable latent
factors.
For instance, in a region of the manifold encoding emotional
content, one coordinate might parametrize valence
(positive--negative), another arousal (calm--excited), and a
third specificity (vague--precise).
The coordinate representation makes precise the intuition that
meaning varies continuously along identifiable conceptual axes.
\end{remark}

\section{Riemannian Structure and the Fisher Information Metric}
\label{sec:riemannian}

\subsection{The Induced Metric}

As a submanifold of $\mathbb{R}^d$, $\mathcal{M}$ inherits the
Euclidean inner product.
The \emph{induced metric} (first fundamental form) at
$h = \Phi(z)$ is given in local coordinates by
\[
\bar{g}_{\alpha\beta}(z) = \Bigl\langle
\frac{\partial \Phi}{\partial z^\alpha},\;
\frac{\partial \Phi}{\partial z^\beta}
\Bigr\rangle
= \bigl(J_\Phi^\top J_\Phi\bigr)_{\alpha\beta},
\qquad 1 \le \alpha, \beta \le k.
\]

While natural, the Euclidean metric does not account for the
\emph{semantic} structure imposed by the model's output
distribution.
Two representations that are close in Euclidean distance may
produce very different token distributions, and conversely.

\subsection{The Fisher Information Metric}

A more semantically meaningful metric arises from information
geometry.
The language model defines a smooth map from the manifold to the
simplex of probability distributions over $V$:
\[
h \mapsto p(\cdot \mid h) \in \Delta^{N-1},
\]
where $\Delta^{N-1}$ is the $(N-1)$-dimensional probability
simplex.

\begin{definition}[Fisher information metric on $\mathcal{M}$]
\label{def:fisher}
The \emph{Fisher information metric} on $\mathcal{M}$ is the
pullback of the Fisher--Rao metric on $\Delta^{N-1}$ through the
map $h \mapsto p(\cdot \mid h)$.
In ambient coordinates, the Fisher information matrix at $h$ is
\[
G(h)_{ij} = \sum_{t \in V} p(t \mid h) \,
\frac{\partial \log p(t \mid h)}{\partial h^i} \,
\frac{\partial \log p(t \mid h)}{\partial h^j},
\qquad 1 \le i, j \le d.
\]
In local semantic coordinates $z = \varphi(h)$, the metric
components are
\[
g_{\alpha\beta}(z) = \sum_{t \in V} p(t \mid \Phi(z)) \,
\frac{\partial \log p(t \mid \Phi(z))}{\partial z^\alpha} \,
\frac{\partial \log p(t \mid \Phi(z))}{\partial z^\beta}.
\]
\end{definition}

\begin{proposition}[Explicit form of the Fisher metric]
\label{prop:fisher-explicit}
Under the softmax parameterization with logits
$\ell_t(h) = \langle w_t, h \rangle + b_t$, the Fisher
information matrix in ambient coordinates is
\[
G(h) = W^\top \bigl(\diag(p) - p \, p^\top\bigr) W
= W^\top \Sigma_p \, W,
\]
where $p = p(\cdot \mid h) \in \mathbb{R}^N$ is the token
probability vector and
$\Sigma_p = \diag(p) - p \, p^\top$ is the covariance matrix
of the categorical distribution.
\end{proposition}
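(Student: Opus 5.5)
The plan is a direct computation: first differentiate the log-softmax with respect to $h$, then substitute into Definition~\ref{def:fisher} and reorganize the sum over $t$ as a matrix product.

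\textbf{Step 1: the score.} Write $\log p(t\mid h) = \ell_t(h) - \log Z(h)$ with $Z(h)=\sum_{t'}\exp(\ell_{t'}(h))$. Because $\ell_t$ is affine in $h$, we have $\nabla_h \ell_t = w_t$ and
\[
\nabla_h \log Z = \sum_{t'} p(t'\mid h)\, w_{t'} = W^\top p .
\]
Hence the score is
\[
s_t := \nabla_h \log p(t\mid h) = w_t - W^\top p = W^\top (e_t - p),
\]
where $e_t$ here denotes the $t$-th standard basis vector of $\mathbb{R}^N$, not the token embedding. The bias $b_t$ drops out of the derivative entirely.

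\textbf{Step 2: substitute and factor.} Plugging the score into Definition~\ref{def:fisher} gives
\[
G(h) = \sum_t p_t\, s_t s_t^\top = W^\top \Bigl(\sum_t p_t (e_t - p)(e_t-p)^\top\Bigr) W .
\]
The bracketed sum is the covariance of the one-hot random vector $e_T$ with $T\sim p$. Expanding it gives
\[
\sum_t p_t e_t e_t^\top - p p^\top - p p^\top + \Bigl(\sum_t p_t\Bigr) p p^\top = \diag(p) - pp^\top,
\]
using $\sum_t p_t e_t = p$ and $\sum_t p_t = 1$. This identifies the bracket as $\Sigma_p$ and yields $G(h) = W^\top \Sigma_p W$.

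\textbf{Where care is needed.} There is no genuine obstacle; the argument only needs to be careful about two points. The first is indexing conventions: the $w_t$ are the rows of $W$, so $\sum_t p_t w_t = W^\top p$ and $w_t = W^\top e_t$. The second is notation: the basis vectors $e_t$ must not be confused with the embedding vectors in $\mathcal{E}$, so I would rename them (e.g.\ $\mathbf{1}_t$) in the written proof.

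\textbf{Optional remark on local coordinates.} I may also note that the chain rule gives $g(z) = J_\Phi^\top G(\Phi(z)) J_\Phi$, which links the ambient formula to the coordinate form of Definition~\ref{def:fisher}. This is not part of the proposition itself.
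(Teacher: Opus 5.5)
Your proposal is correct and follows the paper's own proof: differentiate the log-softmax to get the score $w_t - W^\top p$, substitute it into the definition, and rearrange to obtain $W^\top(\diag(p) - pp^\top)W$. The only difference is cosmetic. You factor $W^\top$ out first and expand the covariance of the one-hot vector, which shows the cross-term cancellation explicitly. The paper instead expands entrywise and passes directly to $\sum_t p_t (W^\top)_{i,t}(W^\top)_{j,t} - \bar{w}_i\bar{w}_j$.
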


\begin{proof}
From the softmax, we have
$\log p(t \mid h) = \ell_t(h) -
\log \sum_{t'} \exp(\ell_{t'}(h))$.
The gradient with respect to $h$ is
\[
\frac{\partial \log p(t \mid h)}{\partial h}
= w_t - \sum_{t' \in V} p(t' \mid h) \, w_{t'}
= w_t - \bar{w},
\]
where $\bar{w} = \mathbb{E}_{t' \sim p}[w_{t'}]$ is the
expected unembedding vector.
Therefore,
\begin{align*}
G(h)_{ij}
&= \sum_{t} p(t \mid h)(w_t - \bar{w})_i (w_t - \bar{w})_j \\
&= \sum_{t} p_t \, (W^\top)_{i,t}(W^\top)_{j,t}
   - \bar{w}_i \bar{w}_j \\
&= \bigl(W^\top \diag(p) W\bigr)_{ij}
   - \bigl(W^\top p\bigr)_i \bigl(W^\top p\bigr)_j \\
&= \bigl(W^\top (\diag(p) - pp^\top) W\bigr)_{ij}. \qedhere
\end{align*}
\end{proof}

\begin{remark}
The matrix $\Sigma_p = \diag(p) - pp^\top$ is positive
semidefinite with rank at most $N - 1$ (since $\Sigma_p \mathbf{1}
= 0$).
Consequently, $G(h) = W^\top \Sigma_p W$ is positive semidefinite
with $\rank(G) \le \min(N-1, d)$.
For the Fisher metric to be non-degenerate on $\mathcal{M}$, we
require that the restriction of $G$ to the tangent space
$T_h \mathcal{M}$ is positive definite.
In local coordinates, this means $g_{\alpha\beta}(z)$ must be a
positive definite $k \times k$ matrix, which is a regularity
condition on the interaction between the manifold geometry and the
unembedding map.
\end{remark}

\subsection{The Restricted Fisher Metric}

\begin{definition}[Restricted Fisher metric]
The restriction of the Fisher information metric to the tangent
spaces of $\mathcal{M}$ defines the \emph{semantic Fisher metric}:
\[
g_{\alpha\beta}^F(z) = \bigl(J_\Phi^\top \, G(\Phi(z)) \,
J_\Phi\bigr)_{\alpha\beta}.
\]
Explicitly,
\[
g^F(z) = J_\Phi(z)^\top \, W^\top \, \Sigma_{p(z)} \, W \,
J_\Phi(z),
\]
where $p(z) = p(\cdot \mid \Phi(z))$.
\end{definition}

This metric encodes the idea that distances in semantic space should
reflect the distinguishability of the resulting token distributions.
Two semantic states $h, h'$ that produce nearly identical
distributions are metrically close, regardless of their Euclidean
separation.

\section{Token Generation as Voronoi Projection}
\label{sec:projection}

\subsection{The Projection Map}

Language models generate tokens by selecting from the distribution
$p(\cdot \mid h)$.
The deterministic (greedy) selection is
\[
\Pi(h) = \argmax_{t \in V} \; p(t \mid h)
= \argmax_{t \in V} \; \ell_t(h).
\]

When the unembedding weights are tied ($w_t = e_t$) and biases
vanish ($b_t = 0$), this reduces to
\[
\Pi(h) = \argmax_{t \in V} \; \langle e_t, h \rangle.
\]

Equivalently, using the identity
$\langle e_t, h \rangle = -\tfrac{1}{2}\|h - e_t\|^2
+ \tfrac{1}{2}\|h\|^2 + \tfrac{1}{2}\|e_t\|^2$,
when all embedding vectors have equal norm
($\|e_t\| = c$ for all $t$), we obtain
\[
\Pi(h) = \argmin_{t \in V} \; \|h - e_t\|^2.
\]
This is precisely the nearest-neighbor assignment in the Voronoi
tessellation generated by the embedding vectors
(see Figure~\ref{fig:voronoi} for a schematic illustration).

\subsection{Voronoi Tessellation of the Manifold}

\begin{definition}[Voronoi region]
\label{def:voronoi}
For each token $t \in V$, the \emph{Voronoi region} or
\emph{concept region} on the manifold is
\[
R_t = \bigl\{h \in \mathcal{M} :
\ell_t(h) \ge \ell_{t'}(h) \text{ for all } t' \in V\bigr\}.
\]
In the tied-weight, unbiased case, this becomes
\[
R_t = \bigl\{h \in \mathcal{M} :
\|h - e_t\| \le \|h - e_{t'}\| \text{ for all } t' \in V\bigr\}.
\]
\end{definition}

\begin{proposition}[Voronoi partition]
\label{prop:partition}
The Voronoi regions $\{R_t\}_{t \in V}$ satisfy:
\begin{enumerate}[label=(\alph*)]
\item $\mathcal{M} = \bigcup_{t \in V} R_t$.
\item $\mathrm{int}(R_t) \cap \mathrm{int}(R_{t'}) = \emptyset$
for $t \ne t'$.
\item Each $R_t$ is a closed subset of $\mathcal{M}$.
\item The boundary $\partial R_t$ consists of points equidistant
(in the logit metric) from $t$ and at least one other token.
\end{enumerate}
\end{proposition}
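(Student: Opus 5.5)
The four parts are elementary consequences of the continuity of the logit functions $\ell_t(h)=\langle w_t,h\rangle+b_t$ and the finiteness of $V$. I will work throughout with the relative topology of $\mathcal{M}\subset\mathbb{R}^d$, so that ``interior'' and ``boundary'' mean interior and boundary in $\mathcal{M}$. I will read ``equidistant in the logit metric'' as a tie $\ell_t(h)=\ell_{t'}(h)$; in the tied, unbiased, equal-norm case this is equidistance in $\|\cdot\|$, by the identity preceding Definition~\ref{def:voronoi}.

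For (a), fix $h\in\mathcal{M}$. The finite set $\{\ell_t(h)\}_{t\in V}$ attains its maximum at some $t^\ast$, and then $h\in R_{t^\ast}$. For (c), write
\[
R_t=\bigcap_{t'\neq t}\{h\in\mathcal{M}:\ell_t(h)-\ell_{t'}(h)\ge 0\}.
\]
Each set in this finite intersection is the preimage of $[0,\infty)$ under a continuous function on $\mathcal{M}$, so $R_t$ is closed in $\mathcal{M}$.

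For (d), take $h\in\partial R_t$. Since $R_t$ is closed, $h\in R_t$, so $\ell_t(h)\ge\ell_{t'}(h)$ for all $t'$. Suppose every such inequality were strict. The finitely many continuous functions $\ell_t-\ell_{t'}$ are then positive at $h$, hence positive on a common neighbourhood of $h$ in $\mathcal{M}$. That neighbourhood lies in $R_t$, so $h\in\mathrm{int}(R_t)$, a contradiction. Therefore some $t'\neq t$ satisfies $\ell_t(h)=\ell_{t'}(h)$, which is the claimed characterisation of $\partial R_t$.

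Part (b) is the real obstacle, because it is false without a nondegeneracy condition. If $(w_t,b_t)=(w_{t'},b_{t'})$, then $R_t=R_{t'}$. More generally, $\mathcal{M}$ might contain an open patch on which the two logits agree, and there the interiors would coincide. I would therefore make explicit the standing assumption that, for $t\neq t'$, the function $f_{tt'}=(\ell_t-\ell_{t'})|_{\mathcal{M}}$ vanishes on no nonempty open subset of $\mathcal{M}$. Two natural sufficient conditions are that $0$ is a regular value of $f_{tt'}$, so that $f_{tt'}^{-1}(0)$ is a hypersurface in $\mathcal{M}$ and hence nowhere dense, or that $\mathcal{M}$ is real-analytic and $f_{tt'}\not\equiv 0$.

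Under this assumption, suppose $U=\mathrm{int}(R_t)\cap\mathrm{int}(R_{t'})$ is nonempty. It is open in $\mathcal{M}$. On $U$ we have both $\ell_t\ge\ell_{t'}$ and $\ell_{t'}\ge\ell_t$, so $f_{tt'}\equiv 0$ on $U$, which contradicts the assumption. I would add a remark that the same regular-value assumption gives the converse inclusion in (d): a tie point whose tying logits are jointly maximal is approached by points where $\ell_{t'}>\ell_t$, so it really lies on the boundary. I would also note that the partition is a partition only up to the measure-zero tie set, which is exactly the set that the later expressibility-gap analysis thickens.
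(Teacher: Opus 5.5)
Your proposal is correct, and on part (b) it is more careful than the paper's own argument. For (a) and (c) you do what the paper does: the paper proves (c) as the preimage of a closed condition and calls the rest immediate.

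The real difference is (b). The paper argues that membership in $\mathrm{int}(R_t)$ forces the strict inequalities $\ell_t(h)>\ell_{t'}(h)$ for all $t'\neq t$, so two interiors cannot meet. That characterization of the interior fails in exactly the degenerate situations you describe: coinciding $(w_t,b_t)$, or an open patch of $\mathcal{M}$ lying in a tie hyperplane $\{\ell_t=\ell_{t'}\}$, which is easy to arrange since $k\ll d$. So the paper's proof tacitly relies on a nondegeneracy hypothesis that you state explicitly.

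Under your regular-value assumption the paper's step becomes valid. Every tie point is approached by points with $\ell_{t'}>\ell_t$, so $\mathrm{int}(R_t)=\{h:\ell_t(h)>\ell_{t'}(h)\ \forall t'\neq t\}$. Your direct argument is cleaner and needs less: the intersection of the two interiors is an open set on which $f_{tt'}\equiv 0$. That requires only that the tie set has empty interior. This weaker condition does not by itself give the paper's characterization: a tangency, where $f_{tt'}$ touches $0$ without changing sign, produces a tie point lying in $\mathrm{int}(R_t)$.

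For (d) the paper gives no argument. Your continuity argument supplies the inclusion of $\partial R_t$ in the tie set. You are also right that the reverse inclusion needs the sign-change (regular-value) condition rather than mere nowhere-density.
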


\begin{proof}
Properties (a)--(d) follow directly from the definition.
The regions are defined by the maximizer of a finite set of
continuous functions $\{\ell_t\}_{t \in V}$, so each $R_t$ is the
preimage of a closed condition and is therefore closed.
The interiors are disjoint because $\mathrm{int}(R_t)$ requires
the strict inequality $\ell_t(h) > \ell_{t'}(h)$ for all
$t' \ne t$, and this cannot hold simultaneously for two distinct
tokens at the same point.
\end{proof}

\begin{definition}[Voronoi boundary]
The \emph{Voronoi boundary} is
\[
\partial \mathcal{V} = \bigcup_{t \in V} \partial R_t
= \bigl\{h \in \mathcal{M} : \exists \, t \ne t' \text{ with }
\ell_t(h) = \ell_{t'}(h) = \max_{s \in V} \ell_s(h)\bigr\}.
\]
\end{definition}

The Voronoi boundary is the set of maximally ambiguous semantic
states,  points where the model is undecided between two or more
tokens.
In the generic case, $\partial \mathcal{V}$ is a union of
codimension-1 submanifolds of $\mathcal{M}$.

\begin{figure}[t]
\centering
\includegraphics[width=0.85\columnwidth]{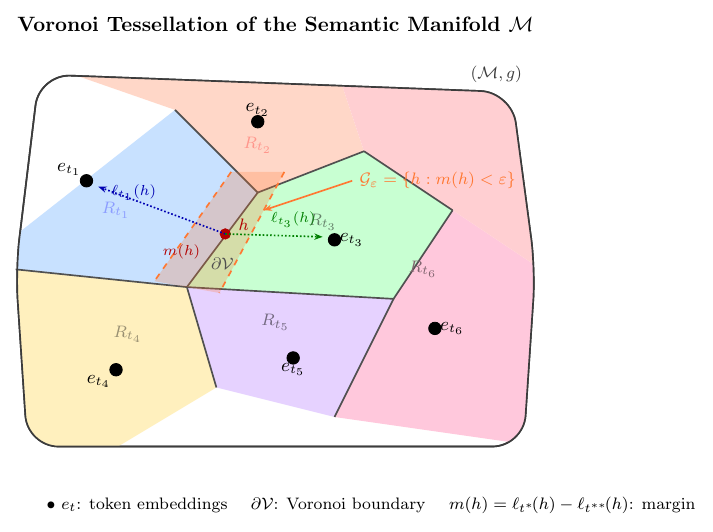}
\caption{Voronoi tessellation of the semantic manifold $\mathcal{M}$.
Each token embedding $e_t$ generates a Voronoi region
$R_t = \{h \in \mathcal{M} : \|h - e_t\| \le \|h - e_{t'}\|
\;\forall\, t'\}$ (colored cells).
The Voronoi boundary $\partial\mathcal{V}$ (solid gray lines) is the
locus of maximal ambiguity where two or more tokens tie.
For a hidden state $h$ near the boundary between $R_{t_1}$ and
$R_{t_3}$, the \emph{Voronoi margin}
$m(h) = \ell_{t^*}(h) - \ell_{t^{**}}(h)$ is small, placing $h$
inside the \emph{expressibility gap}
$\mathcal{G}_\varepsilon = \{h : m(h) < \varepsilon\}$ (orange
shaded strip).  Points deep inside a region have large margins and
are expressed unambiguously; points in $\mathcal{G}_\varepsilon$
represent semantic states that the finite vocabulary cannot capture
with high confidence.}
\label{fig:voronoi}
\end{figure}

\subsection{Voronoi Volume and the Semantic Weight of Tokens}

The Riemannian volume form on $(\mathcal{M}, g)$ induces a natural
measure on the manifold.

\begin{definition}[Semantic volume of a token]
\label{def:volume}
The \emph{semantic volume} of token $t$ is
\[
\mu(R_t) = \int_{R_t} \mathrm{d}\vol_g,
\]
where $\mathrm{d}\vol_g = \sqrt{\det g_{\alpha\beta}} \;
\mathrm{d}z^1 \wedge \cdots \wedge \mathrm{d}z^k$ is the
Riemannian volume element.
\end{definition}

The semantic volume $\mu(R_t)$ measures the ``conceptual breadth''
of a token: how much of semantic space it covers.
Tokens with large Voronoi regions are semantically broad or
polysemous; tokens with small regions are semantically precise.

\begin{proposition}[Volume partition of unity]
\[
\sum_{t \in V} \mu(R_t) = \vol(\mathcal{M}).
\]
\end{proposition}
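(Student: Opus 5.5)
The proof combines Proposition~\ref{prop:partition} with additivity of the Riemannian measure. By Proposition~\ref{prop:partition}(a), $\mathcal{M} = \bigcup_t R_t$. Since $V$ is finite, finite subadditivity gives $\vol(\mathcal{M}) \le \sum_t \mu(R_t)$. For the reverse inequality, note that the regions overlap only on the Voronoi boundary $\partial\mathcal{V}$. Each $R_t$ is closed by Proposition~\ref{prop:partition}(c), hence Borel, so every $\mu(R_t)$ is well defined and finite because $\mathcal{M}$ is compact. Inclusion--exclusion then gives
\[
\sum_{t} \mu(R_t) - \vol(\mathcal{M}) \le \sum_{t \ne t'} \mu(R_t \cap R_{t'}) \le \binom{N}{2}\,\mu(\partial\mathcal{V}).
\]
The whole statement therefore reduces to showing that $\mu(\partial\mathcal{V}) = 0$.

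To prove the null-boundary claim, I would write $\partial\mathcal{V} \subseteq \bigcup_{t \ne t'} Z_{tt'}$, where
\[
Z_{tt'} = \{h \in \mathcal{M} : \ell_t(h) - \ell_{t'}(h) = 0\}
\]
is the zero set on $\mathcal{M}$ of the smooth function $f_{tt'}(h) = \langle w_t - w_{t'}, h\rangle + b_t - b_{t'}$ restricted to $\mathcal{M}$. In a chart $\Phi$, the composite $f_{tt'}\circ\Phi$ is a smooth function on $\Omega \subseteq \mathbb{R}^k$. Wherever its differential is nonzero, the implicit function theorem shows the zero set is locally a $(k-1)$-dimensional submanifold, which has Lebesgue measure zero in $\Omega$. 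Because $\sqrt{\det g}$ is continuous, the set then also has $\mathrm{d}\vol_g$-measure zero. Covering $\mathcal{M}$ by finitely many charts (compactness) and taking a finite union over the pairs $t \ne t'$ finishes this step.

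The main obstacle is that this argument needs a genericity assumption. If some $f_{tt'}$ vanishes on an open subset of $\mathcal{M}$, for example when $\mathcal{M}$ lies locally inside the hyperplane $\{\ell_t = \ell_{t'}\}$, or when $w_t = w_{t'}$ and $b_t = b_{t'}$, then two regions share a set of positive volume and the identity fails as literally stated. I would handle this in two ways.

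First, I would state explicitly the genericity condition under which the paper already says $\partial\mathcal{V}$ is a union of codimension-$1$ submanifolds: $d(f_{tt'}|_{\mathcal{M}}) \ne 0$ on $Z_{tt'}$. Under this condition the previous paragraph gives $\mu(\partial\mathcal{V}) = 0$ directly.

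Second, I would give a weaker fallback that requires only real-analyticity. If $\mathcal{M}$ is real-analytic and connected and no $f_{tt'}$ is identically zero on $\mathcal{M}$, then each $f_{tt'}\circ\Phi$ is a nonzero real-analytic function, and its zero set is Lebesgue-null by the standard lemma on zero sets of analytic functions.

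With either condition in place, the inclusion--exclusion bound gives $\sum_t \mu(R_t) = \vol(\mathcal{M})$. Equivalently, the indicator functions satisfy $\sum_t \mathbf{1}_{R_t} = 1$ almost everywhere with respect to $\mathrm{d}\vol_g$, and integrating this identity over $\mathcal{M}$ yields the result.
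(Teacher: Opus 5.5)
Your proof is correct and follows the argument the paper leaves implicit. The paper states this proposition without proof, relying on the covering and disjoint-interior properties of Proposition~\ref{prop:partition} together with its remark that $\partial\mathcal{V}$ is generically a union of codimension-one submanifolds, and hence $\mathrm{d}\vol_g$-null. Your explicit genericity (or real-analyticity) hypothesis is a real improvement, since without some such condition the identity can fail: two tokens with $w_t = w_{t'}$ and $b_t = b_{t'}$ have $R_t = R_{t'}$, which is counted twice whenever it has positive volume.
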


\section{Semantic Dynamics}
\label{sec:dynamics}

\subsection{Layer-wise Dynamics as a Discrete Flow}

The layer-wise update of the transformer defines a discrete
dynamical system on the representation space.
For the last-position hidden state (which determines the next-token
prediction), the evolution across layers is
\[
h^{(\ell+1)} = h^{(\ell)} + \Delta^{(\ell)}(h^{(\ell)};
\theta^{(\ell)}),
\]
where $\Delta^{(\ell)}$ denotes the combined residual update from
the attention and feedforward blocks at layer $\ell$, and
$\theta^{(\ell)}$ denotes the parameters of that layer.

\begin{definition}[Semantic flow map]
The \emph{layer-$\ell$ flow map} is the diffeomorphism
\[
F^{(\ell)} : \mathcal{M}^{(\ell)} \to \mathcal{M}^{(\ell+1)},
\qquad
F^{(\ell)}(h) = h + \Delta^{(\ell)}(h; \theta^{(\ell)}).
\]
The full inference map is the composition
\[
F = F^{(L-1)} \circ \cdots \circ F^{(1)} \circ F^{(0)} :
\mathcal{M}^{(0)} \to \mathcal{M}^{(L)} = \mathcal{M}.
\]
\end{definition}

\subsection{Connection to Neural ODEs}

The residual structure $h^{(\ell+1)} = h^{(\ell)} +
\Delta^{(\ell)}$ can be viewed as an Euler discretization of a
continuous-time ordinary differential equation.

\begin{proposition}[Continuous-time limit]
\label{prop:ode}
Let $\tau = \ell / L$ denote a continuous time parameter
$\tau \in [0, 1]$.
In the limit $L \to \infty$ with appropriately scaled residual
updates $\Delta^{(\ell)} = (1/L) \, v(h(\tau), \tau)$, the
layer-wise dynamics converge to the neural ODE
\[
\frac{\mathrm{d}h}{\mathrm{d}\tau} = v(h(\tau), \tau),
\qquad h(0) = h^{(0)},
\]
where $v : \mathbb{R}^d \times [0,1] \to \mathbb{R}^d$ is a
time-dependent velocity field.
\end{proposition}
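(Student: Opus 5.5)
The plan is to read Proposition~\ref{prop:ode} as the standard convergence of the forward Euler scheme, which requires making the informal hypotheses precise. I will assume the velocity field $v:\mathbb{R}^d\times[0,1]\to\mathbb{R}^d$ is continuous in $\tau$ and globally Lipschitz in $h$ with constant $K$, uniformly in $\tau$. I will also assume $v$ is Lipschitz (or at least uniformly continuous) in $\tau$ on the relevant compact set. The layer updates are then interpreted as $\Delta^{(\ell)}(h)=\frac1L v(h,\ell/L)$. The discrete trajectory is $h_L^{(\ell+1)}=h_L^{(\ell)}+\frac1L v(h_L^{(\ell)},\ell/L)$ with $h_L^{(0)}=h^{(0)}$. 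I define its piecewise-linear interpolant $\tilde h_L(\tau)$ on $[0,1]$. The claim to prove is that $\sup_{\tau\in[0,1]}\|\tilde h_L(\tau)-h(\tau)\|\to 0$, where $h$ is the solution of the ODE.

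The first step is existence and uniqueness of $h$ on $[0,1]$, which follows from Picard--Lindel\"of. Global Lipschitz continuity rules out blow-up, and Gr\"onwall gives the a priori bound $\|h(\tau)\|\le (\|h^{(0)}\|+C_0)e^{K}$, where $C_0=\sup_\tau\|v(0,\tau)\|$. The same discrete Gr\"onwall argument bounds the Euler iterates uniformly in $L$. Together these confine both trajectories to a fixed compact ball $B$. On $B$ the quantity $M=\sup_{B\times[0,1]}\|v\|$ is finite, so $h$ is $M$-Lipschitz in $\tau$.

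The second step is the local truncation error. Writing $\tau_\ell=\ell/L$, I will bound
\[
\Bigl\|h(\tau_{\ell+1})-h(\tau_\ell)-\tfrac1L v(h(\tau_\ell),\tau_\ell)\Bigr\|
\le \int_{\tau_\ell}^{\tau_{\ell+1}}\|v(h(s),s)-v(h(\tau_\ell),\tau_\ell)\|\,ds
\le \tfrac1L\,\omega(1/L),
\]
where $\omega$ is the modulus of continuity of $s\mapsto v(h(s),s)$. If $v$ is Lipschitz in $\tau$, then $\omega(\delta)=O(\delta)$. Next I set $e_\ell=\|h_L^{(\ell)}-h(\tau_\ell)\|$. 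The Lipschitz property gives the recursion $e_{\ell+1}\le(1+K/L)e_\ell+\frac1L\omega(1/L)$. Discrete Gr\"onwall then yields $e_\ell\le e^{K}\omega(1/L)\to0$, with rate $O(1/L)$ in the Lipschitz case. Finally, passing from grid points to the interpolant costs at most $O(M/L)$ by the Lipschitz bounds on both $h$ and $\tilde h_L$.

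The main obstacle is not this estimate but the modeling gap. The actual attention update depends on all positions $h_1,\dots,h_n$, not only on $h_i$. To make the statement meaningful I will state the result for the full sequence state $H=(h_1,\dots,h_n)\in\mathbb{R}^{nd}$ with fixed $n$. The same proof applies verbatim there, provided the attention map is locally Lipschitz: softmax attention is smooth, so it is Lipschitz on the compact ball, and the a priori bound keeps iterates inside that ball. The single-position formulation in the statement is then the projection onto the last coordinate block. I will also note explicitly that ``appropriately scaled'' is taken as a hypothesis: it is the assumption that the trained layers satisfy $\Delta^{(\ell)}=\frac1L v(\cdot,\ell/L)$ for a single regular field $v$, not something derived from the architecture.
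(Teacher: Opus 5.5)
The paper gives no proof of Proposition~\ref{prop:ode}. Its only justification is the preceding sentence saying that the residual update $h^{(\ell+1)} = h^{(\ell)} + \Delta^{(\ell)}$ ``can be viewed as'' an Euler step. Your proposal supplies the argument that remark points to, and it is correct under the hypotheses you make explicit: $v$ uniformly Lipschitz in $h$ and continuous in $\tau$. The steps are:
\begin{itemize}
\item Picard--Lindel\"of together with continuous and discrete Gr\"onwall a~priori bounds;
\item a local truncation estimate through the modulus of continuity of $s\mapsto v(h(s),s)$;
\item discrete Gr\"onwall applied to $e_\ell$;
\item the $O(M/L)$ interpolation step.
\end{itemize}
This yields uniform convergence of the piecewise-linear interpolant on $[0,1]$, with rate $O(1/L)$ when $v$ is also Lipschitz in $\tau$. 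What you add beyond the paper is a precise meaning for ``converge'' and for ``appropriately scaled''. The latter becomes a single fixed field $v$ with $\Delta^{(\ell)}=\frac1L v(\cdot,\ell/L)$, evaluated at the current discrete state. You are right to flag this as an assumption about the trained layers, not something derived from the architecture.

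One caution concerns your extension to the sequence state $H\in\mathbb{R}^{nd}$: it does not go through ``verbatim''. Your Step~1 a~priori bound relied on the global constant $K$, via $\|v(h,\tau)\|\le C_0+K\|h\|$. Dot-product softmax attention is not globally Lipschitz. Local Lipschitz continuity on a ball therefore cannot by itself keep the Euler iterates inside that ball, so the sentence ``the a~priori bound keeps iterates inside that ball'' is circular as written. Either of two standard fixes closes this:
\begin{itemize}
\item Assume or verify a linear growth bound $\|v(H,\tau)\|\le C_0+C_1\|H\|$. Softmax attention satisfies this because each head outputs a convex combination of linearly projected inputs. ReLU/GELU MLPs grow at most linearly, and pre-LayerNorm blocks are even bounded. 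This restores your Gr\"onwall bounds, and local Lipschitz continuity on the resulting ball then supplies $K$.
\item Alternatively, assume the exact solution exists on $[0,1]$ and take the Lipschitz constant on a closed unit tube around its trajectory. Then show by induction that the Euler iterates never leave the tube, since $e_\ell\le e^{K}\omega(1/L)<1$ once $L$ is large.
\end{itemize}
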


This connection has several geometric consequences.

\begin{corollary}[Flow on the manifold]
If the velocity field $v$ is tangent to $\mathcal{M}^{(\tau)}$
at every point (i.e., $v(h, \tau) \in T_h \mathcal{M}^{(\tau)}$
for all $h \in \mathcal{M}^{(\tau)}$), then the flow preserves
the manifold: solution trajectories that start on
$\mathcal{M}^{(0)}$ remain on the evolving manifold for all
$\tau \in [0,1]$.
\end{corollary}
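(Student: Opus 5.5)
The plan is to reduce the statement to a Grönwall estimate on the squared distance from a trajectory to the evolving manifold. Let $h(\tau)$ solve the neural ODE of Proposition~\ref{prop:ode} with $h(0)\in\mathcal{M}^{(0)}$. Set
\[
\delta(\tau)=\operatorname{dist}\bigl(h(\tau),\mathcal{M}^{(\tau)}\bigr)^2 .
\]
I would show $\delta'(\tau)\le C\,\delta(\tau)$ for almost every $\tau$. Since $\delta(0)=0$, Grönwall's inequality then forces $\delta\equiv 0$ on $[0,1]$. Compactness of each slice (Hypothesis~\ref{hyp:manifold}) makes $\delta(\tau)=0$ equivalent to $h(\tau)\in\mathcal{M}^{(\tau)}$, which is the claim.

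\textbf{Standing assumptions.} I take the family $\{\mathcal{M}^{(\tau)}\}$ to be a smooth one-parameter family of compact embedded submanifolds, with a uniform tubular neighbourhood of radius $r>0$. On that neighbourhood the nearest-point projection $\pi_\tau$ and the function $\operatorname{dist}(\cdot,\mathcal{M}^{(\tau)})^2$ are $C^2$ jointly in $(h,\tau)$. I also assume $v$ is Lipschitz, as is implicit in Proposition~\ref{prop:ode}.

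\textbf{Step 1: differentiate $\delta$.} While $h(\tau)$ stays in the tube, write $n(\tau)=h(\tau)-\pi_\tau(h(\tau))$ for the normal displacement. Differentiating gives
\[
\delta'(\tau)=2\bigl\langle n(\tau),\,v(h(\tau),\tau)\bigr\rangle+\partial_\tau\bigl[\operatorname{dist}(h,\mathcal{M}^{(\tau)})^2\bigr]\big|_{h=h(\tau)} .
\]

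\textbf{Step 2: the velocity term.} Here the stated hypothesis enters. Tangency gives $v(\pi_\tau h,\tau)\in T_{\pi_\tau h}\mathcal{M}^{(\tau)}$, and $n(\tau)$ is normal to that tangent space, so $\langle n,\,v(\pi_\tau h,\tau)\rangle=0$. Hence
\[
\bigl|\langle n,v(h,\tau)\rangle\bigr|=\bigl|\langle n,\,v(h,\tau)-v(\pi_\tau h,\tau)\rangle\bigr|\le \operatorname{Lip}(v)\,|n|^2=\operatorname{Lip}(v)\,\delta .
\]

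\textbf{Step 3: the slice-motion term (main obstacle).} The tangency hypothesis controls only how the trajectory moves relative to a fixed slice, not how the slices themselves move. So the term $\partial_\tau\operatorname{dist}^2$ must be handled using how the paper defines the evolving family. In the paper's framework the slices are linked by the flow maps, $\mathcal{M}^{(\ell+1)}=F^{(\ell)}(\mathcal{M}^{(\ell)})$, whose continuum limit is the flow of $v$. Under that reading the normal velocity of the slice at $\pi_\tau h$ equals the normal component of $v(\pi_\tau h,\tau)$, and the tangency hypothesis says this component is zero.

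A first-order expansion of the distance function then gives $\partial_\tau\operatorname{dist}^2=O(|n|^2)$. The constant is controlled by the second fundamental form and the $\tau$-derivative of $\pi_\tau$ on the tube, both bounded by compactness. I expect this step to need the most care: one must check that the $C^2$ regularity of the tube yields a uniform constant in this bound.

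\textbf{Step 4: conclude.} Combining Steps 2 and 3 gives $\delta'\le C\delta$ as long as $\delta<r^2$. Continuity of $h(\tau)$ and $\delta(0)=0$ give a maximal interval $[0,\tau_0)$ on which $\delta<r^2$. Grönwall yields $\delta\equiv0$ on $[0,\tau_0)$, so $\delta$ never approaches $r^2$ and therefore $\tau_0=1$. Thus the trajectory stays on $\mathcal{M}^{(\tau)}$ for all $\tau\in[0,1]$.
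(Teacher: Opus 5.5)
The gap is in Step~3, and it is not a technicality. As you observe, tangency of $v$ to each slice says nothing about how the slices move. In fact the corollary as literally stated is false for a genuinely moving family. Take $\mathcal{M}^{(\tau)}=\{h\in\mathbb{R}^2:\|h\|=1+\tau\}$ and $v\equiv 0$, or the rotation field $v(h,\tau)=(-h_2,h_1)$. Then $v$ is tangent to every slice, yet the trajectory from $(1,0)$ lies outside $\mathcal{M}^{(\tau)}$ for every $\tau>0$.

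Your repair reads $\mathcal{M}^{(\tau)}$ as the image of $\mathcal{M}^{(0)}$ under the flow $\Psi_\tau$ of $v$. That reading already contains the conclusion: a trajectory starting at $p\in\mathcal{M}^{(0)}$ is $\Psi_\tau(p)\in\mathcal{M}^{(\tau)}$ by definition, so neither the Gr\"onwall argument nor the tangency hypothesis does any work. Worse, that reading combined with tangency gives the slices zero normal velocity. This forces $\mathcal{M}^{(\tau)}=\mathcal{M}^{(0)}$ for all $\tau$, so what you have actually proved is the classical invariance theorem for a fixed submanifold. The paper states the corollary without proof, so nothing there supplies the missing hypothesis.

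The missing idea is the correct hypothesis for an evolving family. The space-time field $(v,1)$ must be tangent to $\{(h,\tau):h\in\mathcal{M}^{(\tau)}\}$. Equivalently, the normal component of $v(p,\tau)$ must equal the normal velocity $V^\perp(p,\tau)$ of the family at every $p\in\mathcal{M}^{(\tau)}$. Under that assumption your Steps~1--2 close in one line, with no second-fundamental-form estimate. On the tube, $\partial_\tau\operatorname{dist}(h,\mathcal{M}^{(\tau)})^2=-2\langle n,V^\perp(\pi_\tau h,\tau)\rangle=-2\langle n,v(\pi_\tau h,\tau)\rangle$, because $n$ is normal at $\pi_\tau h$. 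Hence
\[
\delta'(\tau)=2\bigl\langle n(\tau),\,v(h(\tau),\tau)-v(\pi_\tau h(\tau),\tau)\bigr\rangle\le 2\operatorname{Lip}(v)\,\delta(\tau),
\]
and your Step~4 continuation argument finishes. Slice-wise tangency is only the special case $V^\perp\equiv 0$, a stationary family. You should state the corollary with this compatibility condition, or for a fixed manifold, and prove that version. Do not derive the compatibility condition from a reading of $\mathcal{M}^{(\tau)}$ that presupposes the result.
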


\begin{remark}[Residual stream interpretation]
In the residual stream viewpoint, the hidden state $h^{(\ell)}$
is interpreted as a running sum of contributions from all
previous layers.
Geometrically, this corresponds to the trajectory
\[
h^{(\ell)} = h^{(0)} + \sum_{j=0}^{\ell-1} \Delta^{(j)},
\]
a piecewise-linear path in $\mathbb{R}^d$ whose segments are
the residual updates $\Delta^{(j)}$.
The manifold framework implies that this path remains close to
$\mathcal{M}^{(\ell)}$ at each step, even though the individual
residual vectors may point in directions transverse to the
manifold.
\end{remark}

\subsection{Pushforward of the Metric}

The flow maps transport geometric structure across layers.
The pushforward of the metric $g^{(\ell)}$ under $F^{(\ell)}$
defines a metric on $\mathcal{M}^{(\ell+1)}$:
\[
(F^{(\ell)}_* g^{(\ell)})(u, v)
= g^{(\ell)}\bigl((\mathrm{d}F^{(\ell)})^{-1} u, \;
(\mathrm{d}F^{(\ell)})^{-1} v\bigr)
\]
for $u, v \in T_{F^{(\ell)}(h)} \mathcal{M}^{(\ell+1)}$.

The discrepancy between $F^{(\ell)}_* g^{(\ell)}$ and the
intrinsic metric $g^{(\ell+1)}$ on $\mathcal{M}^{(\ell+1)}$
measures how much each layer distorts the semantic geometry.

\section{Geodesics and Semantic Interpolation}
\label{sec:geodesics}

\subsection{Geodesic Equation}

Given the Riemannian manifold $(\mathcal{M}, g)$ with metric $g$,
the geodesic equation in local coordinates is
\[
\frac{\mathrm{d}^2 z^\gamma}{\mathrm{d}s^2}
+ \Gamma^\gamma_{\alpha\beta}(z) \,
\frac{\mathrm{d}z^\alpha}{\mathrm{d}s} \,
\frac{\mathrm{d}z^\beta}{\mathrm{d}s} = 0,
\qquad \gamma = 1, \ldots, k,
\]
where $\Gamma^\gamma_{\alpha\beta}$ are the Christoffel symbols
of the Levi-Civita connection:
\[
\Gamma^\gamma_{\alpha\beta}
= \frac{1}{2} g^{\gamma\delta}\Bigl(
\frac{\partial g_{\delta\alpha}}{\partial z^\beta}
+ \frac{\partial g_{\delta\beta}}{\partial z^\alpha}
- \frac{\partial g_{\alpha\beta}}{\partial z^\delta}
\Bigr).
\]

\begin{definition}[Semantic geodesic]
A \emph{semantic geodesic} between two points $h_0, h_1 \in
\mathcal{M}$ is a curve $\gamma : [0,1] \to \mathcal{M}$ that
minimizes the energy functional
\[
E[\gamma] = \frac{1}{2} \int_0^1
g_{\gamma(s)}\bigl(\dot{\gamma}(s), \dot{\gamma}(s)\bigr) \,
\mathrm{d}s,
\]
subject to $\gamma(0) = h_0$ and $\gamma(1) = h_1$.
\end{definition}

\begin{proposition}[Semantic geodesic distance]
The geodesic distance
\[
d_g(h_0, h_1) = \inf_\gamma \int_0^1
\sqrt{g_{\gamma(s)}(\dot{\gamma}(s), \dot{\gamma}(s))} \;
\mathrm{d}s
\]
defines a metric on $\mathcal{M}$ (in the metric space sense).
When $g$ is the Fisher metric, $d_g$ coincides with the
Fisher--Rao distance, and semantic proximity corresponds to
statistical indistinguishability of the associated token
distributions.
\end{proposition}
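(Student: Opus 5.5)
The proof has two parts: first show that $d_g$ satisfies the metric-space axioms on $\mathcal{M}$, then identify it with the Fisher--Rao distance when $g = g^F$.

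\textbf{Metric axioms.} Here $\mathcal{M}$ is connected (Hypothesis~\ref{hyp:manifold}) and $g$ is positive definite on each $T_h\mathcal{M}$; for $g^F$ this is the non-degeneracy condition in the remark after Proposition~\ref{prop:fisher-explicit}. The standard Riemannian argument then applies.
\begin{enumerate}[label=(\roman*)]
\item \emph{Finiteness.} Any two points of a connected manifold are joined by a piecewise smooth curve, so $d_g(h_0,h_1)<\infty$.
\item \emph{Symmetry.} Reversing a curve, $s\mapsto \gamma(1-s)$, preserves its length.
\item \emph{Triangle inequality.} Concatenate near-optimal curves from $h_0$ to $h_1$ and from $h_1$ to $h_2$, then reparametrize onto $[0,1]$.
\item \emph{Nonnegativity and $d_g(h,h)=0$.} Both are immediate: lengths are nonnegative and the constant curve has length zero.
\end{enumerate}

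\textbf{Positivity.} The main obstacle is showing $d_g(h_0,h_1)>0$ when $h_0\neq h_1$. Take a chart $(U,\varphi)$ around $h_0$ as in Definition~\ref{def:chart}, and choose a closed coordinate ball $\bar B_r(z_0)\subset\Omega$ that avoids $\varphi(h_1)$. By continuity and compactness, there is $\lambda>0$ with $g_{\alpha\beta}(z)\xi^\alpha\xi^\beta\ge\lambda|\xi|^2$ on $\bar B_r$. Any curve from $h_0$ to $h_1$ must exit this ball, so its $g$-length is at least $\sqrt{\lambda}\,r>0$.

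For the Fisher metric, continuity of $g^F$ needs a justification. The entries of $g^F(z)=J_\Phi^\top W^\top\Sigma_{p(z)}WJ_\Phi$ are continuous because the softmax is smooth and $\Phi$ is $C^\infty$. If one only assumes positive semidefiniteness, $d_g$ may be just a pseudometric. I would therefore state explicitly that the regularity condition from the remark is being used.

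\textbf{Identification with Fisher--Rao.} By Definition~\ref{def:fisher}, $g^F=\psi^*g_{FR}$, where $\psi:h\mapsto p(\cdot\mid h)$ and $g_{FR}$ is the Fisher--Rao metric on $\Delta^{N-1}$. For any curve $\gamma$ in $\mathcal{M}$, the chain rule gives
\[
g^F_{\gamma(s)}(\dot\gamma,\dot\gamma)=g_{FR}\bigl(\mathrm{d}\psi\,\dot\gamma,\mathrm{d}\psi\,\dot\gamma\bigr),
\]
so the $g^F$-length of $\gamma$ equals the Fisher--Rao length of $\psi\circ\gamma$. Taking the infimum over curves shows that $d_g(h_0,h_1)$ is the Fisher--Rao distance \emph{measured along the image family} $\psi(\mathcal{M})$. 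This is the intrinsic Fisher--Rao distance of the statistical model $\{p(\cdot\mid h):h\in\mathcal{M}\}$, which is the sense in which the proposition should be read.

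It always satisfies $d_g\ge d_{FR}^{\Delta}(\psi(h_0),\psi(h_1))$. Via the map $p\mapsto 2\sqrt{p}$, this ambient distance equals $2\arccos\sum_t\sqrt{p_tq_t}$. Equality holds only when $\psi(\mathcal{M})$ contains the relevant simplex geodesics, so I would state the identification in the intrinsic sense.

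\textbf{Interpretation.} Statistical indistinguishability follows from the local expansion
\[
\KL\bigl(p(\cdot\mid h)\,\|\,p(\cdot\mid h+\delta)\bigr)=\tfrac12\,\delta^\top G(h)\,\delta+O(|\delta|^3),
\]
so small $d_g$ means small KL divergence between the two token distributions, and conversely.
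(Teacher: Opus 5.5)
The paper states this proposition without proof, treating it as standard background. There is therefore no argument of the paper's to compare against. Your proof is the standard Riemannian one, and it is correct.

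It also makes explicit two points the paper leaves implicit. First, for $g=g^F$ the metric axioms require $g^F$ to be positive definite on $T_h\mathcal{M}$; otherwise $d_g$ is only a pseudometric. Second, ``coincides with the Fisher--Rao distance'' holds in the intrinsic sense of the pulled-back metric, and the ambient simplex distance $2\arccos\mathrm{BC}$ is only a lower bound. The paper itself adopts this reading in the next proposition, which asserts $d_g\ge 2\arccos\mathrm{BC}$ with equality only when the geodesic in $\mathcal{M}$ maps to a simplex geodesic.

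I have two refinements.

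\textbf{The regularity condition is milder than the paper's remark suggests, and it does not depend on $p$.} One has $\xi^\top G(h)\,\xi=\mathrm{Var}_{t\sim p}\bigl(\langle w_t,\xi\rangle\bigr)$. Every softmax probability is strictly positive, so this vanishes iff $\langle w_t-w_s,\xi\rangle=0$ for all $t,s$. Hence $g^F$ is positive definite at every point of $\mathcal{M}$ as soon as the differences $w_t-w_s$ span $\mathbb{R}^d$. That is the generic situation when $N-1\ge d$.

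\textbf{Your final ``and conversely'' claims more than the KL expansion delivers.} That expansion only gives $\KL\approx\tfrac12 d_g^2$ for nearby points. Globally, small $d_g$ does force small KL. One route is $d_g\ge 2\arccos\mathrm{BC}$ combined with the uniform lower bound on $p_t$ over compact $\mathcal{M}$. The converse, however, needs $\psi|_{\mathcal{M}}$ to be injective: two distinct points with the same token distribution have zero KL yet $d_g>0$.

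The same spanning condition supplies injectivity, because $\psi(h)=\psi(h')$ iff $W(h-h')\in\mathrm{span}(\mathbf{1})$. Compactness then makes $\psi$ a homeomorphism onto its image. So small KL forces $h$ and $h'$ to be close in $\mathcal{M}$, hence small $d_g$.
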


\subsection{Geodesics Under the Fisher Metric}

Under the Fisher metric, geodesics have a natural
information-theoretic interpretation.

\begin{proposition}
Let $h_0, h_1 \in \mathcal{M}$ with corresponding token
distributions $p_0 = p(\cdot \mid h_0)$ and
$p_1 = p(\cdot \mid h_1)$.
The Fisher--Rao geodesic distance satisfies
\[
d_g(h_0, h_1) \ge 2 \arccos\Bigl(
\sum_{t \in V} \sqrt{p_0(t) \, p_1(t)}\Bigr)
= 2 \arccos\bigl(\mathrm{BC}(p_0, p_1)\bigr),
\]
where $\mathrm{BC}(p_0, p_1)$ is the Bhattacharyya coefficient,
with equality if the geodesic in $\mathcal{M}$ maps to a geodesic
on the statistical manifold $\Delta^{N-1}$.
\end{proposition}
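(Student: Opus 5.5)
The plan is to compare lengths of curves on $\mathcal{M}$ with lengths of their images on the probability simplex, and then use the closed form of the Fisher--Rao distance on $\Delta^{N-1}$.

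\textbf{Pullback structure.} Write $P:\mathcal{M}\to\Delta^{N-1}$, $P(h)=p(\cdot\mid h)$. By Definition~\ref{def:fisher}, the metric $g$ on $\mathcal{M}$ is the pullback $P^*g_{\mathrm{FR}}$ of the Fisher--Rao metric $g_{\mathrm{FR}}(u,u)=\sum_t u_t^2/p_t$ on the open simplex. Softmax outputs have strictly positive entries, so $P$ lands in the interior of $\Delta^{N-1}$. Proposition~\ref{prop:fisher-explicit} confirms this pullback in ambient coordinates: the velocity $\dot p_t = p_t\langle w_t-\bar w,\dot h\rangle$ gives $\sum_t \dot p_t^2/p_t=\dot h^\top G\dot h$.

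\textbf{Length comparison.} Take any piecewise-$C^1$ curve $\gamma$ in $\mathcal{M}$ from $h_0$ to $h_1$. Its $g$-length equals the $g_{\mathrm{FR}}$-length of $P\circ\gamma$, which joins $p_0$ to $p_1$ in $\Delta^{N-1}$. That length is at least the Fisher--Rao distance $d_{\mathrm{FR}}(p_0,p_1)$. Taking the infimum over $\gamma$ gives $d_g(h_0,h_1)\ge d_{\mathrm{FR}}(p_0,p_1)$.

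\textbf{Closed form on the simplex.} The map $p\mapsto q=(\sqrt{p_t})_t$ sends $\Delta^{N-1}$ onto the positive orthant of the unit sphere $S^{N-1}$. Since $\dot q_t=\dot p_t/(2\sqrt{p_t})$, we get $\sum_t\dot q_t^2=\tfrac14\sum_t\dot p_t^2/p_t$. So $g_{\mathrm{FR}}$ is four times the round metric pulled back, and lengths scale by a factor of $2$.

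It remains to show that the intrinsic distance in the orthant equals the great-circle distance $\arccos\langle q_0,q_1\rangle$. Two facts give this:
\begin{itemize}
\item Any curve on the sphere is at least as long as the minor great-circle arc between its endpoints.
\item That arc, $q(s)\propto(1-s)q_0+sq_1$ normalized, stays in the closed positive orthant because it is a positive combination of nonnegative vectors.
\end{itemize}
Hence $d_{\mathrm{FR}}(p_0,p_1)=2\arccos\bigl(\sum_t\sqrt{p_0(t)p_1(t)}\bigr)=2\arccos(\mathrm{BC}(p_0,p_1))$, which yields the inequality.

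\textbf{Main obstacle.} The delicate point is the boundary of the simplex. The arc can touch faces where some $p_t=0$ (when $\sqrt{p_0(t)p_1(t)}=0$), and there $g_{\mathrm{FR}}$ is singular. I would handle this by working with the sphere-metric length for curves in the closed orthant, which is continuous up to the boundary. The lower bound on lengths of curves in the interior then holds verbatim, and the interior infimum equals the closed-orthant infimum by perturbing the arc slightly inward. For our application every $P\circ\gamma$ lies in the interior anyway, so only the lower bound is actually needed.

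\textbf{Equality case.} If a $d_g$-minimizing curve $\gamma$ exists (for instance by compactness of $\mathcal{M}$ and Hopf--Rinow) and $P\circ\gamma$ is the Fisher--Rao minimizing geodesic, then $L_g(\gamma)=2\arccos\mathrm{BC}$, so equality holds. I read the statement's equality clause in this sense: the image must be a \emph{minimizing} geodesic. That is automatic for the great-circle arc above, since it is the minor arc with length at most $\pi$.
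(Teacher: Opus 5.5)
The paper states this proposition without any proof; it moves straight on to the linear-interpolation subsection. So there is no argument of its own to compare against.

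Your proposal is correct. It supplies the standard justification the paper implicitly relies on:
\begin{itemize}
\item The identity $\sum_t \dot p_t^2/p_t = \dot h^\top G(h)\,\dot h$ gives $L_g(\gamma)=L_{\mathrm{FR}}(P\circ\gamma)$ for every curve $\gamma$ in $\mathcal{M}$, hence $d_g(h_0,h_1)\ge d_{\mathrm{FR}}(p_0,p_1)$.
\item The map $p\mapsto(\sqrt{p_t})_t$ is, up to the factor $2$, an isometry onto the positive orthant of the unit sphere. This identifies $d_{\mathrm{FR}}(p_0,p_1)=2\arccos\mathrm{BC}(p_0,p_1)$.
\end{itemize}
The inequality part needs nothing more.

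Three points can be tightened.

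\textbf{The boundary issue is vacuous.} Softmax outputs are strictly positive, so $q_0,q_1$ have strictly positive entries. Hence $(1-s)q_0+sq_1$ is strictly positive for every $s\in[0,1]$, and the minimizing arc lies in the open simplex. No perturbation argument is needed.

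\textbf{The equality clause needs no minimizer.} You do not need a $d_g$-minimizer, nor Hopf--Rinow. Hopf--Rinow would in any case require $g^F$ to be nondegenerate on $T\mathcal{M}$, which the paper only imposes as a regularity condition. Instead, if any curve $\gamma$ from $h_0$ to $h_1$ has $P\circ\gamma$ a Fisher--Rao geodesic, then
$d_g(h_0,h_1)\le L_g(\gamma)=L_{\mathrm{FR}}(P\circ\gamma)=2\arccos\mathrm{BC}(p_0,p_1)\le d_g(h_0,h_1)$.

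\textbf{Every geodesic in the simplex is minimizing.} The step $L_{\mathrm{FR}}(P\circ\gamma)=2\arccos\mathrm{BC}$ holds for every geodesic segment in the simplex, not only minimizing ones. So restricting the equality clause to ``minimizing'' geodesics costs nothing. However, your justification, that the image ``is the minor arc with length at most $\pi$'', assumes what needs proving. The correct reason is as follows:
\begin{itemize}
\item A great-circle arc of length $\ge\pi$ contains a pair of antipodal points $\pm q$.
\item Both $q$ and $-q$ cannot have nonnegative coordinates unless $q=0$.
\item Hence every great-circle arc inside the orthant has length strictly less than $\pi$.
\item It is therefore the unique minor arc between its endpoints, with length $\arccos\langle q_0,q_1\rangle$.
\end{itemize}
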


\subsection{Linear Interpolation as Approximate Geodesic}

In practice, the vector arithmetic
$h_\lambda = (1-\lambda) h_0 + \lambda h_1$ is often used to
interpolate between representations.

\begin{proposition}[Local geodesic approximation]
\label{prop:local-geodesic}
If $\mathcal{M}$ has bounded curvature and $h_0, h_1$ are
sufficiently close, then the linear interpolation
$h_\lambda = (1-\lambda)h_0 + \lambda h_1$ approximates the
geodesic $\gamma(\lambda)$ to second order:
\[
\|h_\lambda - \gamma(\lambda)\|
= O\bigl(\|h_1 - h_0\|^2\bigr).
\]
The error is controlled by the curvature of $\mathcal{M}$ and
the second fundamental form of the embedding.
\end{proposition}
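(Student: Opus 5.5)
The plan is to compare the two curves $h_\lambda$ and $\gamma(\lambda)$ directly in the ambient space $\mathbb{R}^d$. Both are parametrized on $[0,1]$ and agree at the endpoints. Set $\delta = \|h_1 - h_0\|$, take $\gamma$ to be the constant-speed minimizing geodesic of $(\mathcal{M}, g)$ from $h_0$ to $h_1$, and consider the deviation
\[
f(\lambda) = \gamma(\lambda) - h_\lambda .
\]
Then $f(0) = f(1) = 0$, and $f''(\lambda) = \ddot\gamma(\lambda)$ because $h_\lambda$ is affine in $\lambda$. The elementary one-dimensional interpolation estimate for a $C^2$ function vanishing at both endpoints, applied componentwise, gives
\[
\sup_{\lambda}\|f(\lambda)\| \le \tfrac{\sqrt{d}}{8}\sup_\lambda\|\ddot\gamma(\lambda)\| .
\]
(A sharper, dimension-free version can be obtained by applying the same estimate to $\langle f, u\rangle$ for a unit vector $u$ in the direction of $f(\lambda)$.) The whole proof therefore reduces to showing that $\sup\|\ddot\gamma\| = O(\delta^2)$.

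\textbf{Step 1 (setup).} Since $\mathcal{M}$ is compact and smooth, and $g$ is smooth and positive definite by the regularity condition in the Remark after Proposition~\ref{prop:fisher-explicit}, I will use three standard facts:
\begin{itemize}
\item $g$ is uniformly equivalent to the induced Euclidean metric $\bar g$, with constants $0 < a \le b$;
\item $(\mathcal{M}, g)$ has positive injectivity radius;
\item the induced intrinsic distance is comparable to chordal distance for nearby points, $d_{\bar g}(h_0,h_1) \le C\|h_1 - h_0\|$, by the tubular neighbourhood theorem.
\end{itemize}
Hence for $\delta$ small, $h_0$ and $h_1$ lie in a common normal neighbourhood and a single chart $(U,\varphi)$ as in Definition~\ref{def:chart}. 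In this neighbourhood the minimizing geodesic exists, is unique, and has length $\mathcal{L} = d_g(h_0,h_1) \le C'\delta$.

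\textbf{Step 2 (bounding the acceleration).} Write $\gamma(\lambda) = \Phi(z(\lambda))$. The chain rule gives
\[
\ddot\gamma = J_\Phi\,\ddot z + D^2\Phi(\dot z,\dot z),
\]
and the geodesic equation gives $\ddot z^\gamma = -\Gamma^\gamma_{\alpha\beta}\dot z^\alpha \dot z^\beta$. On a relatively compact chart domain, $J_\Phi$, $D^2\Phi$ and the Christoffel symbols are uniformly bounded. Constant speed means $g(\dot z,\dot z) = \mathcal{L}^2$, so by metric equivalence $|\dot z| \le C''\mathcal{L} \le C'''\delta$. Combining these,
\[
\|\ddot\gamma\| \le K\,\delta^2 ,
\]
where $K$ depends only on the bounds for $\Gamma$ (the curvature data of $g$) and on $D^2\Phi$, whose normal part is the second fundamental form. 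This matches the claim that the error is controlled by the curvature and the second fundamental form. Substituting into the interpolation bound gives $\|h_\lambda - \gamma(\lambda)\| = O(\delta^2)$.

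\textbf{Main obstacle.} I expect the hard step to be Step 1: making uniform the constants relating chordal distance, $\bar g$-distance and $g$-distance, and guaranteeing that the geodesic stays inside a single chart. I would handle this by compactness. Cover $\mathcal{M}$ with finitely many charts, take a Lebesgue number smaller than the injectivity radius, and take suprema of the derivative bounds over the finite cover. Two further points need care. First, if $g$ is the Fisher metric, the metric equivalence depends on the nondegeneracy assumption, and I will state it explicitly. Second, the statement's $O(\cdot)$ should be read with constants depending on these global bounds.
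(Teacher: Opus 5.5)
\paragraph{Verdict.} The paper states this proposition with no proof; it is followed only by an interpretive sentence. So there is nothing to compare against, and your argument is correct and supplies the missing proof.

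\paragraph{What works.} Reducing to $\sup_\lambda\|\ddot\gamma\|$ via $f''=\ddot\gamma$, $f(0)=f(1)=0$ and the $\tfrac18$ interpolation bound is the right move. Prefer your dimension-free variant, since $d$ is large here. The estimate $\|\ddot\gamma\|=O(\mathcal{L}^2)$ then follows from the geodesic equation exactly as you say. Your Step~1 bookkeeping is what actually makes the constants uniform. It rests on the compactness and smoothness in Hypothesis~\ref{hyp:manifold} and on nondegeneracy of $g$, not on the statement's hypothesis ``bounded curvature''. That hypothesis alone would not suffice: a flat torus $S^1_r\times S^1_r\subset\mathbb{R}^4$ is intrinsically flat, yet $\|\mathrm{I\!I}\|\sim 1/r$ is as large as you like.

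\paragraph{A correction about the constant.} Christoffel symbols are chart-dependent first derivatives of $g$, not curvature data. So your remark that $K$ ``matches'' the statement's last sentence overstates what you proved.

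A cleaner, coordinate-free Step~2 goes as follows. Let $\bar\nabla$ be the Levi-Civita connection of $\bar g$. The Gauss formula gives $\ddot\gamma=\bar\nabla_{\dot\gamma}\dot\gamma+\mathrm{I\!I}(\dot\gamma,\dot\gamma)$. Since $\nabla^g_{\dot\gamma}\dot\gamma=0$, this becomes $\ddot\gamma=\mathrm{I\!I}(\dot\gamma,\dot\gamma)-S(\dot\gamma,\dot\gamma)$, where $S=\nabla^g-\bar\nabla$ is the difference tensor. Hence the error is at most $\tfrac18\bigl(\|\mathrm{I\!I}\|_\infty+\|S\|_\infty\bigr)\sup_\lambda\|\dot\gamma\|^2$, and no chart is needed in this step.

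\paragraph{Which metric matters.} For $g=\bar g$ we have $S=0$, and the bound $\tfrac18\|\mathrm{I\!I}\|_\infty\mathcal{L}^2$ is sharp to leading order (the sagitta of a circle). For the Fisher metric, however, the tangential term is governed by $S$, and curvature does not control $S$. For example, take $g=\psi^*\bar g$ on a planar piece with $\psi(x,y)=(x,y+x^2)$. Then $g$ is flat and $\mathrm{I\!I}=0$. Yet the constant-speed $g$-geodesic $\lambda\mapsto(\lambda\delta,-\lambda^2\delta^2)$ deviates from its chord by $\lambda(1-\lambda)\delta^2$, which equals $\delta^2/4$ at $\lambda=\tfrac12$.

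Your $O(\delta^2)$ conclusion stands, with a constant governed by $\mathrm{I\!I}$ and $S$. The statement's attribution of the error to curvature is accurate only for the induced metric.
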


This result explains why linear interpolation (e.g., the classical
word analogy arithmetic) works well locally but may fail for distant
points on a curved manifold.

\section{Curvature and Semantic Complexity}
\label{sec:curvature}

\subsection{Riemannian Curvature of the Semantic Manifold}

The curvature of $(\mathcal{M}, g)$ encodes the complexity of the
semantic space.

\begin{definition}[Riemann curvature tensor]
The Riemann curvature tensor
$R^\delta{}_{\gamma\alpha\beta}$ is defined in local coordinates by
\[
R^\delta{}_{\gamma\alpha\beta}
= \frac{\partial \Gamma^\delta_{\gamma\beta}}{\partial z^\alpha}
- \frac{\partial \Gamma^\delta_{\gamma\alpha}}{\partial z^\beta}
+ \Gamma^\delta_{\alpha\mu} \Gamma^\mu_{\gamma\beta}
- \Gamma^\delta_{\beta\mu} \Gamma^\mu_{\gamma\alpha}.
\]
\end{definition}

\begin{definition}[Sectional curvature]
For linearly independent tangent vectors
$u, v \in T_h \mathcal{M}$, the sectional curvature of the plane
spanned by $u$ and $v$ is
\[
\Sec(u, v) = \frac{g(R(u,v)v, u)}
{g(u,u) \, g(v,v) - g(u,v)^2}.
\]
\end{definition}

\begin{definition}[Ricci curvature and scalar curvature]
The Ricci curvature is the trace
$\Ric_{\alpha\beta} = R^\gamma{}_{\alpha\gamma\beta}$,
and the scalar curvature is
$S = g^{\alpha\beta} \Ric_{\alpha\beta}$.
\end{definition}

\subsection{Curvature and the Second Fundamental Form}

Since $\mathcal{M}$ is a submanifold of $\mathbb{R}^d$
(which is flat), the Gauss equation relates the intrinsic curvature
of $\mathcal{M}$ under the induced metric to its extrinsic geometry.

\begin{proposition}[Gauss equation]
Let $\mathrm{I\!I}$ denote the second fundamental form of the
embedding $\mathcal{M} \hookrightarrow \mathbb{R}^d$.
Then the sectional curvature of $\mathcal{M}$ (with respect to the
induced Euclidean metric) satisfies
\[
\bar{\Sec}(u, v)
= \frac{\langle \mathrm{I\!I}(u,u), \mathrm{I\!I}(v,v) \rangle
  - \|\mathrm{I\!I}(u,v)\|^2}
  {\|u\|^2\|v\|^2 - \langle u,v \rangle^2}
\]
for $u, v \in T_h \mathcal{M}$.
\end{proposition}

High curvature in certain directions indicates rapid variation of
semantic content,  for example, near polysemous words where small
changes in context produce large shifts in meaning.
Low curvature indicates regions of semantic space where meaning
varies smoothly and predictably.

\subsection{Semantic Complexity and Scalar Curvature}

\begin{definition}[Semantic complexity]
The \emph{semantic complexity} of a region $U \subseteq \mathcal{M}$
is
\[
\mathcal{C}(U) = \int_U |S(h)| \; \mathrm{d}\vol_g(h),
\]
where $S(h)$ is the scalar curvature at $h$.
\end{definition}

Regions of high semantic complexity correspond to parts of the
representation space where meanings are densely packed and rapidly
varying,  where the manifold is most ``wrinkled.''
These may correspond to semantically rich or ambiguous domains
(e.g., figurative language, technical jargon with multiple senses).

\section{The Expressibility Gap}
\label{sec:gap}

\subsection{Voronoi Margin}

We now formalize the notion that finite vocabularies cannot capture
all semantic states with equal precision.

\begin{definition}[Voronoi margin]
\label{def:margin}
For a point $h \in \mathcal{M}$, let
$t^*(h) = \Pi(h) = \argmax_{t} \ell_t(h)$ be the assigned token
and let
$t^{**}(h) = \argmax_{t \ne t^*} \ell_t(h)$ be the runner-up.
The \emph{Voronoi margin} at $h$ is
\[
m(h) = \ell_{t^*}(h) - \ell_{t^{**}}(h) \ge 0.
\]
\end{definition}

The margin measures how confidently the model assigns a token to a
given semantic state (illustrated in Figure~\ref{fig:voronoi}).
Points with large margin are well-captured by their assigned token;
points with small margin lie near the Voronoi boundary and are
ambiguous.

\subsection{The Expressibility Gap: Formal Definition}

\begin{definition}[Expressibility gap]
\label{def:gap}
For a threshold $\varepsilon > 0$, the \emph{$\varepsilon$-expressibility
gap} is the set
\[
\mathcal{G}_\varepsilon = \bigl\{h \in \mathcal{M} :
m(h) < \varepsilon\bigr\}.
\]
The \emph{expressibility gap measure} is
\[
\mu(\mathcal{G}_\varepsilon)
= \int_{\mathcal{G}_\varepsilon} \mathrm{d}\vol_g.
\]
The \emph{normalized expressibility gap} is
\[
\eta(\varepsilon) = \frac{\mu(\mathcal{G}_\varepsilon)}
{\vol(\mathcal{M})}.
\]
\end{definition}

\begin{remark}
The normalized expressibility gap $\eta(\varepsilon)$ is a function
$\eta : [0, \infty) \to [0, 1]$ that is monotonically
non-decreasing with $\eta(0) = 0$ (generically) and
$\lim_{\varepsilon \to \infty} \eta(\varepsilon) = 1$.
Its growth rate characterizes how well the vocabulary covers the
semantic manifold.
A slowly growing $\eta$ indicates good coverage; a rapidly growing
$\eta$ indicates that much of semantic space is poorly captured.
\end{remark}

\subsection{Geometric Characterization of the Gap}

\begin{proposition}[Boundary structure]
\label{prop:boundary}
The Voronoi boundary $\partial \mathcal{V}$ is the zero-margin set:
\[
\partial \mathcal{V} = \mathcal{G}_0 = \{h \in \mathcal{M} :
m(h) = 0\}.
\]
In the tied-weight, unbiased case, the boundary between regions
$R_t$ and $R_{t'}$ is
\[
\partial R_t \cap \partial R_{t'}
= \bigl\{h \in \mathcal{M} :
\langle e_t - e_{t'}, h \rangle = 0\bigr\}
\cap \bigl\{h : t, t' \in \argmax_s \langle e_s, h \rangle\bigr\},
\]
which is the intersection of $\mathcal{M}$ with a hyperplane in
$\mathbb{R}^d$.
Generically, this intersection is a $(k-1)$-dimensional
submanifold of $\mathcal{M}$.
\end{proposition}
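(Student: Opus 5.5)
The plan has three parts: identify the zero-margin set with the tie set, read off the pairwise boundaries in the tied case, and verify the $(k-1)$-dimensionality by transversality.

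\textbf{First claim.} I will show $\partial\mathcal{V} = \mathcal{G}_0$ by direct comparison with the definition of the Voronoi boundary. By Definition~\ref{def:margin}, $m(h)=0$ holds iff the top logit is attained by at least two tokens. The runner-up $t^{**}$ is chosen among $t\neq t^*$, so $m(h)=0$ iff $\ell_{t^{**}}(h)=\ell_{t^*}(h)=\max_s \ell_s(h)$. This is exactly the condition $\exists\, t\neq t'$ with $\ell_t(h)=\ell_{t'}(h)=\max_s\ell_s(h)$ that defines $\partial\mathcal{V}$, so both inclusions are immediate.

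One caveat is notational. Since $\mathcal{G}_\varepsilon$ uses the strict inequality $m<\varepsilon$, the set $\mathcal{G}_0$ as literally defined is empty. I will interpret $\mathcal{G}_0$ as the level set $\{m=0\}$, as the statement writes it, or equivalently as $\bigcap_{\varepsilon>0}\mathcal{G}_\varepsilon$. I will also note that this tie set agrees with $\bigcup_t \partial R_t$ (topological boundary in $\mathcal{M}$) when ties are not degenerate. That identification needs genericity: at a tie point, a small perturbation along a direction where $\ell_t-\ell_{t'}$ changes sign leaves $R_t$. I will state this as the generic situation rather than prove it in full.

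\textbf{Tied, unbiased case.} Here $\ell_t(h)=\langle e_t,h\rangle$, so $\ell_t(h)=\ell_{t'}(h)$ iff $\langle e_t-e_{t'},h\rangle=0$. A point lies on the common boundary of $R_t$ and $R_{t'}$ iff it is in both regions, i.e.\ both $t$ and $t'$ are maximizers. Equivalently, $h$ satisfies the hyperplane equation and both tokens lie in $\argmax_s\langle e_s,h\rangle$, which is the displayed formula. For $e_t\neq e_{t'}$, the set $H_{tt'}=\{h:\langle e_t-e_{t'},h\rangle=0\}$ is a linear hyperplane through the origin, so the boundary is $\mathcal{M}\cap H_{tt'}$ intersected with the closed maximality condition.

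\textbf{Dimension claim (the main obstacle).} I will apply the regular value theorem to $f=\langle e_t-e_{t'},\cdot\rangle|_{\mathcal{M}}$. The set $\{f=0\}$ is a $(k-1)$-dimensional embedded submanifold wherever $df_h\neq0$. This holds exactly when $e_t-e_{t'}\notin (T_h\mathcal{M})^\perp$, i.e.\ when $\mathcal{M}$ meets $H_{tt'}$ transversally.

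To justify "generically", I will invoke Sard's theorem or parametric transversality, in one of two ways. One option is to perturb the embeddings $e_t$, or add an offset to the hyperplane via biases $b_t$; then $0$ is a regular value of $f$ for almost every perturbation. The other option is to note that, for fixed $\mathcal{M}$, the direction vectors $u$ for which $u$ is normal to $\mathcal{M}$ at some point of $\{\langle u,\cdot\rangle=0\}\cap\mathcal{M}$ form a measure-zero set. The latter follows because the normal bundle map restricted to this incidence variety has dimension $d-1$ and its image in $\mathbb{R}^d$ is scaled by a projective argument.

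Finally, the maximality constraint cuts out an open subset of $\mathcal{M}\cap H_{tt'}$: the points where $t,t'$ strictly beat all other tokens. Its closure adds only lower-dimensional triple-tie strata, which are handled the same way with codimension $\ge 2$. So the pairwise boundary is a $(k-1)$-dimensional submanifold, up to a lower-dimensional set.
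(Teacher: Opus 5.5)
The paper gives no proof of this proposition. It is presented as a direct reading of the definitions, and ``generically'' is simply asserted.

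Your first two parts are that same reading. $m(h)=0$ iff the top logit is attained twice, which is the paper's own description of $\partial\mathcal{V}$. In the tied, unbiased case, $\ell_t=\ell_{t'}$ is the linear equation $\langle e_t-e_{t'},h\rangle=0$. Your caveats are correct refinements that the paper glosses over:
\begin{itemize}
\item $\mathcal{G}_0$ is literally empty under the strict inequality $m<\varepsilon$.
\item $\bigcup_t\partial R_t$ equals the tie set only if no two logits agree on an open subset of $\mathcal{M}$. The paper builds this into its definition of $\partial\mathcal{V}$ and assumes it tacitly in its proof of Proposition~\ref{prop:partition}(b).
\end{itemize}
Your third part supplies what the paper omits. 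The regular value theorem for $f=\langle e_t-e_{t'},\cdot\rangle|_{\mathcal{M}}$, plus a Sard-type argument, gives ``generically'' a precise meaning: almost every $e_t-e_{t'}$. Your stratification remark is also right: $\partial R_t\cap\partial R_{t'}$ is a $(k-1)$-manifold only away from the multiple-tie strata of codimension $\ge 2$. That is sharper than the paper's wording.

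Two tightenings.

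\textbf{Genericity.} Build it on your first option, phrased as parametric transversality for $F(h,u)=\langle u,h\rangle$ on $\mathcal{M}\times\mathbb{R}^d$. The $u$-derivative $w\mapsto\langle w,h\rangle$ is nonzero for $h\neq 0$. So Sard applied to the projection $F^{-1}(0)\to\mathbb{R}^d$ shows $0$ is a regular value of $\langle u,\cdot\rangle|_{\mathcal{M}}$ for almost every $u$. If $0\in\mathcal{M}$, the origin lies on every linear hyperplane and needs the separate, still generic, condition $u\notin(T_0\mathcal{M})^\perp$.
\begin{itemize}
\item Perturbing biases instead takes you out of the unbiased setting, since the hyperplane becomes affine.
\item Your second option's count, ``the incidence variety has dimension $d-1$'', fails wherever $h\in T_h\mathcal{M}$, for example on conical pieces of $\mathcal{M}$. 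There the constraint $\langle u,h\rangle=0$ is vacuous on the whole normal fiber. The conclusion survives, but not by that count.
\end{itemize}

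\textbf{The tied-case equality.} The ``if'' direction of ``on the common boundary iff in both regions'' needs $\ell_t-\ell_{t'}$ to take both signs near $h$. At a tangential contact of $\mathcal{M}$ with $H_{tt'}$, you can have $h\in\mathrm{int}(R_t)$ even though $t,t'\in\argmax_s\langle e_s,h\rangle$. So the displayed equality is itself a generic statement, unless you adopt the paper's convention that $\partial R_t\cap\partial R_{t'}$ means the $t$--$t'$ tie set. What delivers it is exactly the transversality $df_h\neq 0$ from your third part, so order the argument so that part comes first.
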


\begin{theorem}[Linear volume scaling of the expressibility gap]
\label{thm:gap-scaling}
Let $(\mathcal{M}, g)$ be a compact $k$-dimensional Riemannian
manifold with $k \ge 2$.
Suppose the Voronoi boundary $\partial \mathcal{V}$ is a finite
union of smooth $(k-1)$-dimensional submanifolds of $\mathcal{M}$
with second fundamental form bounded by $\kappa_\partial$.
Let the margin function $m : \mathcal{M} \to [0, \infty)$ satisfy
$\|\nabla m(h)\| \ge \lambda > 0$ in a neighborhood of
$\partial \mathcal{V}$, where $\lambda$ is a uniform lower bound
on the margin gradient.
Then for $0 < \varepsilon < \varepsilon_0 =
1 / (2\kappa_\partial)$,
\[
\mu(\mathcal{G}_\varepsilon)
= \frac{\varepsilon}{\lambda} \cdot
\mathcal{H}^{k-1}(\partial \mathcal{V})
+ \varepsilon^2 \cdot \mathcal{R}(\varepsilon),
\]
where $\mathcal{H}^{k-1}(\partial \mathcal{V})$ denotes the
$(k-1)$-dimensional Hausdorff measure of the Voronoi boundary
and $|\mathcal{R}(\varepsilon)| \le C$ for a constant $C$
depending on $\kappa_\partial$ and the curvature of $\mathcal{M}$.

Consequently, the normalized expressibility gap satisfies
\[
\eta(\varepsilon) =
\frac{\mathcal{H}^{k-1}(\partial \mathcal{V})}
{\lambda \cdot \vol(\mathcal{M})} \cdot \varepsilon
+ O(\varepsilon^2)
\qquad \text{as } \varepsilon \to 0^+.
\]
\end{theorem}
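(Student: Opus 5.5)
The plan is to write $\mu(\mathcal{G}_\varepsilon)$ as an integral over level sets of the margin using the coarea formula, and then compare each level set with the boundary $\partial\mathcal{V} = \mathcal{G}_0$ (Proposition~\ref{prop:boundary}) by a normal-graph (tubular neighborhood) argument.

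\textbf{Coarea step.} Fix a tubular neighborhood $U$ of $\partial\mathcal{V}$ on which $\|\nabla m\| \ge \lambda$. For $\varepsilon$ small, $\mathcal{G}_\varepsilon \subset U$; this follows by compactness, since $m$ is continuous and strictly positive on the compact set $\mathcal{M}\setminus U$. The margin is Lipschitz and is smooth away from triple ties. On the open strata of $U\setminus \partial\mathcal{V}$ the coarea formula gives
\[
\mu(\mathcal{G}_\varepsilon) = \int_0^\varepsilon \int_{\{m=s\}} \frac{\mathrm{d}\mathcal{H}^{k-1}}{\|\nabla m\|}\, \mathrm{d}s .
\]
The triple-tie set is a finite union of codimension-$2$ pieces. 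It is negligible for $\vol_g$, and its tubular contribution is $O(\varepsilon^2)$, so it can be absorbed into the remainder.

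\textbf{Level-set step.} I will show that $\{m=s\}$ is the normal graph over $\partial\mathcal{V}$ at signed distance $\approx s/\lambda$ on each side. This needs $\varepsilon < \varepsilon_0 = 1/(2\kappa_\partial)$, so that the normal exponential map is a diffeomorphism below the focal radius. Writing the metric in Fermi coordinates $(y,r)$ about $\partial\mathcal{V}$, the area element satisfies
\[
\mathrm{d}\mathcal{H}^{k-1}_{\{r=\rho\}} = \bigl(1 + O(\kappa_\partial \rho) + O(\|\mathrm{Sec}\|\rho^2)\bigr)\, \mathrm{d}\mathcal{H}^{k-1}_{\partial\mathcal{V}} .
\]
Hence $\mathcal{H}^{k-1}(\{m=s\}) = \mathcal{H}^{k-1}(\partial\mathcal{V})(1+O(s))$. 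Substituting $\|\nabla m\| = \lambda$ on the relevant levels and integrating in $s$ gives $\frac{\varepsilon}{\lambda}\mathcal{H}^{k-1}(\partial\mathcal{V}) + O(\varepsilon^2)$. The constant depends on $\kappa_\partial$ and on the curvature of $\mathcal{M}$ through the Jacobi-field/Fermi-coordinate expansion.

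\textbf{Conclusion.} Dividing by $\vol(\mathcal{M})$ gives the formula for $\eta(\varepsilon)$.

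\textbf{Main obstacles.} First, the leading coefficient comes out as $1/\lambda$ only if $\|\nabla m\|$ actually equals $\lambda$ (or tends to it) along $\partial\mathcal{V}$. With only the hypothesis $\|\nabla m\| \ge \lambda$, the coarea formula gives the inequality $\mu(\mathcal{G}_\varepsilon) \le \frac{\varepsilon}{\lambda}\mathcal{H}^{k-1}(\partial\mathcal{V}) + O(\varepsilon^2)$. I would first try to read $\lambda$ as the (assumed constant) normal derivative of $m$ on the boundary, taking the appropriate one-sided limit from each adjacent cell, and absorb its $O(\varepsilon)$ variation into $\mathcal{R}$. Failing that, I would state the equality with $\lambda$ replaced by the harmonic mean of $|\partial_r m|$ over $\partial\mathcal{V}$.

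Second, $m$ is not differentiable across $\partial\mathcal{V}$; it behaves like $|\ell_t - \ell_{t'}|$ there. So the tube must be treated one side at a time, and the factor from the two sides must be tracked consistently with the convention used in the definition of $\mathcal{H}^{k-1}(\partial\mathcal{V})$.
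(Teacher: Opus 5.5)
Your route is the paper's: the coarea formula over the level sets $\{m=s\}$, then a tube/Fermi-coordinate comparison of those level sets with $\partial\mathcal{V}$.

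The first obstacle you raise is real, and the paper does not get past it either. It simply asserts $\|\nabla m\|^{-1}=\lambda^{-1}+O(c)$ on $\{m=c\}$, which a lower bound cannot deliver. In fact the identity cannot hold as stated. If $\lambda$ is an admissible uniform lower bound, so is $\lambda/2$, and that doubles the claimed leading term while $\mu(\mathcal{G}_\varepsilon)$ is unchanged. So your harmonic-mean version is the correct target, not a fallback.

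The one-sided issue in your second obstacle also resolves itself. Near an interface between $R_t$ and $R_{t'}$ (away from triple ties), $m=|f|$ with $f=(\ell_t-\ell_{t'})|_{\mathcal{M}}$, and $f$ is smooth across the interface. Hence both one-sided values of $\|\nabla m\|$ equal $\|\nabla f\|$.

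What does fail is the area count in your level-set step; the paper makes the same slip when it calls $\{m=c\}$ ``diffeomorphic to $\partial\mathcal{V}$''. Since $m=|f|$, the level set $\{m=s\}=\{f=s\}\cup\{f=-s\}$ consists of two normal graphs, one in each adjacent cell. That is exactly your ``on each side''. So $\mathcal{H}^{k-1}(\{m=s\})=2\,\mathcal{H}^{k-1}(\partial\mathcal{V})\,(1+O(s))$, not $\mathcal{H}^{k-1}(\partial\mathcal{V})(1+O(s))$. There is no convention left to choose: $\partial\mathcal{V}=\bigcup_t\partial R_t$ is a set, and its Hausdorff measure counts each interface once. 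Carried through, your argument proves
\[
\mu(\mathcal{G}_\varepsilon)=2\varepsilon\int_{\partial\mathcal{V}}\frac{\mathrm{d}\mathcal{H}^{k-1}}{\|\nabla m\|}+O(\varepsilon^2),
\]
i.e.\ $\frac{2\varepsilon}{\lambda}\mathcal{H}^{k-1}(\partial\mathcal{V})+O(\varepsilon^2)$ when $\|\nabla m\|=\lambda$ on $\partial\mathcal{V}$.

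As a check, take the unit sphere with $\ell_{1,2}=\pm a h_3$. Then $m=2a|h_3|$, and $\mathcal{G}_\varepsilon$ is the zone $|h_3|<\varepsilon/(2a)$, of area $2\pi\varepsilon/a$. That is twice the value $\pi\varepsilon/a$ the stated formula gives as $\lambda\uparrow 2a$.

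Under the bare lower bound you get only $\mu(\mathcal{G}_\varepsilon)\le\frac{2\varepsilon}{\lambda}\mathcal{H}^{k-1}(\partial\mathcal{V})+O(\varepsilon^2)$. This follows more cleanly than via level-set areas: unit-speed descent along $-\nabla m/\|\nabla m\|$ shows $\mathcal{G}_\varepsilon\subseteq\{\mathrm{dist}(\cdot,\partial\mathcal{V})<\varepsilon/\lambda\}$, and the tube formula finishes.

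Three smaller points:
\begin{itemize}
\item The smallness condition should be on the half-width $\varepsilon/\lambda$, not on $\varepsilon$.
\item A bound on the second fundamental form does not stop distinct sheets of $\partial\mathcal{V}$ (e.g.\ the two walls of a thin cell) from lying within $\varepsilon/\lambda$ of each other. A remainder bound uniform in $\varepsilon<\varepsilon_0$ therefore needs a normal-injectivity assumption.
\item The remainder constant then also depends on $\lambda$, $\mathcal{H}^{k-1}(\partial\mathcal{V})$, the variation of $\|\nabla m\|$ and the junction set, not only on $\kappa_\partial$ and the curvature of $\mathcal{M}$.
\end{itemize}
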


\begin{proof}
The margin function $m(h) = \ell_{t^*}(h) - \ell_{t^{**}}(h)$ is
continuous and piecewise smooth, with $m^{-1}(0) =
\partial \mathcal{V}$.
By the assumption $\|\nabla m\| \ge \lambda > 0$ near the
boundary, the implicit function theorem guarantees that the level
sets $\{m = c\}$ for small $c > 0$ are smooth hypersurfaces
diffeomorphic to $\partial \mathcal{V}$.

The $\varepsilon$-expressibility gap is
$\mathcal{G}_\varepsilon = m^{-1}([0, \varepsilon))$.
By the coarea formula (the Riemannian generalization of
Fubini's theorem),
\[
\mu(\mathcal{G}_\varepsilon)
= \int_0^\varepsilon
\int_{\{m = c\}} \frac{1}{\|\nabla m\|} \;
\mathrm{d}\mathcal{H}^{k-1} \; \mathrm{d}c.
\]
Since $\|\nabla m(h)\| \ge \lambda$ for $h$ in a neighborhood
of $\partial \mathcal{V}$, and since for small $c$ the level
set $\{m = c\}$ is $O(c)$-close to $\partial \mathcal{V}$,
we have
\[
\mathcal{H}^{k-1}(\{m = c\})
= \mathcal{H}^{k-1}(\partial \mathcal{V})
+ O(c),
\]
where the error term depends on the curvature
$\kappa_\partial$ of the boundary via the tube formula.
Similarly, $\|\nabla m(h)\|^{-1} = \lambda^{-1} + O(c)$
on $\{m = c\}$.

Substituting into the coarea integral:
\begin{align*}
\mu(\mathcal{G}_\varepsilon)
&= \int_0^\varepsilon \Bigl(
\frac{1}{\lambda} + O(c)\Bigr) \Bigl(
\mathcal{H}^{k-1}(\partial \mathcal{V}) + O(c)\Bigr)
\; \mathrm{d}c \\
&= \frac{\varepsilon}{\lambda} \cdot
\mathcal{H}^{k-1}(\partial \mathcal{V})
+ O(\varepsilon^2).
\end{align*}
Dividing by $\vol(\mathcal{M})$ gives $\eta(\varepsilon)$.
\end{proof}

\begin{remark}
Theorem~\ref{thm:gap-scaling} shows that the expressibility gap
grows linearly for small $\varepsilon$, with slope proportional
to the total $(k-1)$-dimensional area of the Voronoi boundary
and inversely proportional to the sharpness $\lambda$ of the
token decision boundaries.
A well-trained model with sharp logit margins ($\lambda$ large)
will have a smaller gap coefficient; a model with flat, uncertain
boundaries will have a larger one.
The coefficient
$\mathcal{H}^{k-1}(\partial \mathcal{V}) /
(\lambda \cdot \vol(\mathcal{M}))$
is a single scalar that characterizes the quality of the
vocabulary as a quantizer of the semantic manifold.
\end{remark}

\subsection{Information-Theoretic Bound}

The expressibility gap can be related to rate-distortion theory,
which characterizes the fundamental trade-off between compression
rate and distortion.

\begin{definition}[Semantic distortion]
For a probability measure $\nu$ on $\mathcal{M}$ (representing the
distribution of semantic states during natural language use),
the \emph{average semantic distortion} of the vocabulary $V$ is
\[
D(V) = \mathbb{E}_{h \sim \nu}\bigl[d_g(h, e_{\Pi(h)})^2\bigr]
= \int_\mathcal{M} d_g(h, e_{\Pi(h)})^2 \; \mathrm{d}\nu(h).
\]
\end{definition}

\begin{theorem}[Fundamental distortion bound for finite vocabularies]
\label{thm:rate-distortion}
Let $(\mathcal{M}, g)$ be a compact $k$-dimensional Riemannian
manifold and let $\nu$ be a probability measure on $\mathcal{M}$
with density bounded below by $\nu_{\min} > 0$.
Let $V$ be any vocabulary of size $N$ with embedding vectors
$\{e_t\}_{t \in V} \subset \mathcal{M}$, and let $\Pi$ be the
nearest-neighbor projection onto $V$ with respect to the geodesic
distance $d_g$.
Then the average semantic distortion satisfies
\[
D(V) = \int_\mathcal{M} d_g(h, e_{\Pi(h)})^2 \; \mathrm{d}\nu(h)
\ge c_k \cdot \nu_{\min} \cdot
\Bigl(\frac{\vol(\mathcal{M})}{N}\Bigr)^{2/k},
\]
where
\[
c_k = \frac{k}{k+2} \cdot \omega_k^{-2/k}
\]
and $\omega_k = \pi^{k/2} / \Gamma(k/2 + 1)$ is the volume of
the unit ball in $\mathbb{R}^k$.
In particular, no vocabulary of finite size $N$ can reduce the
distortion to zero when $k > 0$, and achieving distortion $D$
requires at least
\[
N \ge \frac{\vol(\mathcal{M})}
{\bigl(D / (c_k \, \nu_{\min})\bigr)^{k/2}}
\]
tokens.
\end{theorem}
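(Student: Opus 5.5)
The plan is the classical high-resolution quantization argument: reduce the distortion to a sum of per-cell second moments, bound each cell's second moment from below by that of a geodesic ball of the same volume, then optimise over how the volume is split among the $N$ cells. I expect the paper's wording to be reachable only after two normalisations, which I flag below.

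\textbf{Reduction to per-cell moments.} Since $\Pi$ is the nearest-neighbour projection for $d_g$, the cells $R_t=\Pi^{-1}(t)$ partition $\mathcal{M}$ up to a null set, as in Proposition~\ref{prop:partition}. Using $\mathrm{d}\nu\ge\nu_{\min}\,\mathrm{d}\vol_g$,
\[
D(V)\;\ge\;\nu_{\min}\sum_{t\in V}\int_{R_t} d_g(h,e_t)^2\,\mathrm{d}\vol_g(h).
\]
Write $v_t=\mu(R_t)$. By the volume partition of unity, $\sum_t v_t=\vol(\mathcal{M})$.

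\textbf{Per-cell bound by rearrangement.} Among all measurable sets $A$ of volume $v$, the integral $\int_A d_g(h,e_t)^2\,\mathrm{d}\vol_g$ is smallest when $A$ is a sublevel set of $d_g(\cdot,e_t)$, that is, the geodesic ball $B(e_t,r)$ with $\vol B(e_t,r)=v$. This holds because $d_g^2$ is larger on $A\setminus B$ than on $B\setminus A$, and these two sets have equal volume. The Euclidean model case gives the constant. In $\mathbb{R}^k$ the ball of volume $v$ has radius $r=(v/\omega_k)^{1/k}$, and its second moment is
\[
\int_0^r s^2\,k\omega_k s^{k-1}\,\mathrm{d}s=\frac{k\omega_k}{k+2}\,r^{k+2}=\frac{k}{k+2}\,\omega_k^{-2/k}\,v^{1+2/k}=c_k\,v^{1+2/k}.
\]
This exactly reproduces the constant $c_k$ in Theorem~\ref{thm:rate-distortion}. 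On $(\mathcal{M},g)$ I would transfer it through the layer-cake formula:
\[
\int_{B}d_g^2\,\mathrm{d}\vol_g=\int_0^{r}2s\,\bigl(v-\vol B(e_t,s)\bigr)\,\mathrm{d}s.
\]
Any upper bound $\vol B(e_t,s)\le\omega_k s^k$ then yields the Euclidean value $c_k v^{1+2/k}$ or more.

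\textbf{Main obstacle.} The volume comparison $\vol B(e_t,s)\le\omega_k s^k$ is the step I expect to be hardest. For a general compact manifold it holds only asymptotically, $\vol B(e_t,s)=\omega_k s^k(1+O(s^2))$ with the error controlled by scalar curvature. My first attempt would prove the per-cell inequality in the high-resolution regime, where each cell has small diameter. Bishop--Gromov then gives the comparison exactly when $\Ric\ge 0$; otherwise it gives it up to a factor $1+O(K r^2)$, which I would fold into $c_k$.

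\textbf{Aggregation, and where the stated form needs normalisation.} The map $v\mapsto v^{1+2/k}$ is convex, so Jensen's inequality gives
\[
\sum_t c_k v_t^{1+2/k}\;\ge\;c_k\,N\Bigl(\frac{\vol(\mathcal{M})}{N}\Bigr)^{1+2/k}=c_k\,\vol(\mathcal{M})\Bigl(\frac{\vol(\mathcal{M})}{N}\Bigr)^{2/k}.
\]
This carries an extra factor $\vol(\mathcal{M})$ compared with the statement. Passing to the statement's form $c_k\,\nu_{\min}(\vol(\mathcal{M})/N)^{2/k}$ requires $\vol(\mathcal{M})\ge 1$, for instance by rescaling $g$ and reading $\nu_{\min}$ relative to that normalisation. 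Without such a normalisation the inequality as worded does not follow from this argument.

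The consequences then follow directly. Since $\vol(\mathcal{M})>0$, the bound is strictly positive, so no finite vocabulary gives zero distortion. Rearranging $D\ge c_k\nu_{\min}(\vol(\mathcal{M})/N)^{2/k}$ for $N$ gives the stated lower bound on the vocabulary size.
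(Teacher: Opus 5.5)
Your route is the paper's route. You decompose over the cells using $\mathrm{d}\nu\ge\nu_{\min}\,\mathrm{d}\vol_g$, replace each cell by the geodesic ball of the same volume to get $c_k\mu_t^{1+2/k}$, and apply Jensen to the cell volumes. The paper's proof reaches exactly your bound $c_k\,\nu_{\min}\vol(\mathcal{M})\,(\vol(\mathcal{M})/N)^{2/k}$ and then ``absorbs'' the extra $\vol(\mathcal{M})$ into the constant. It also treats $\vol(B_r)=\omega_k r^k(1+O(r^2))$ as if the Euclidean value were exact. So the two caveats you raise are precisely the two places where the paper's own proof is not valid, and you should not expect to close them.

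The normalisation issue is a defect of the statement, not of your argument. Under $g\mapsto\lambda^2 g$ the left side scales like $\lambda^2$, while the right side scales like $\lambda^{2-k}$ because $\nu_{\min}$ scales like $\lambda^{-k}$. Hence every instance becomes a counterexample once $g$ is shrunk enough.

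Concretely, take a circle of length $L<1$ ($k=1$, $c_1=1/12$) with uniform $\nu$ ($\nu_{\min}=1/L$) and $N$ equally spaced tokens. Then $D(V)=(L/N)^2/12$, strictly below the claimed $(L/N)^2/(12L)$. For the same reason, rescaling $g$ to force $\vol(\mathcal{M})\ge 1$ is not a fix: $D$, $\nu_{\min}$ and $\vol(\mathcal{M})$ all change with it.

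The correct theorem is the one you derived. It is the stated inequality, and the stated bound on $N$, with $\nu_{\min}$ replaced by the dimensionless quantity $\nu_{\min}\vol(\mathcal{M})\in(0,1]$. In the circle example it holds with equality. Your layer-cake argument plus Bishop's inequality proves it with the exact $c_k$ for every vocabulary when $\Ric\ge 0$.

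Without a curvature sign, two things go wrong. First, folding a factor $1+O(Kr^2)$ into $c_k$ needs every cell to be small, and an arbitrary $V$ does not guarantee this, since the tokens may cluster. Second, the exact constant really can fail. For $N=1$ on a closed hyperbolic surface of genus $g$ with diameter $O(\log g)$ (such surfaces exist for all large $g$), $D(V)=O((\log g)^2)$, while the corrected bound equals $\vol(\mathcal{M})/(2\pi)=2(g-1)$.
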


\begin{proof}
By the Voronoi partition (Proposition~\ref{prop:partition}),
$\mathcal{M} = \bigcup_{t \in V} R_t$.
The distortion decomposes as
\[
D(V) = \sum_{t \in V} \int_{R_t} d_g(h, e_t)^2
\; \mathrm{d}\nu(h)
\ge \nu_{\min} \sum_{t \in V} \int_{R_t} d_g(h, e_t)^2
\; \mathrm{d}\vol_g(h).
\]
For a single Voronoi cell $R_t$ with volume $\mu_t = \vol_g(R_t)$,
the integral $\int_{R_t} d_g(h, e_t)^2 \, \mathrm{d}\vol_g$ is
minimized (among all regions of volume $\mu_t$ containing $e_t$)
when $R_t$ is a geodesic ball of volume $\mu_t$.
For a geodesic ball of radius $r$ in a $k$-dimensional manifold,
the volume satisfies $\vol(B_r) = \omega_k \, r^k (1 + O(r^2))$
for small $r$, so $r \sim (\mu_t / \omega_k)^{1/k}$.
The second moment of a $k$-dimensional ball of radius $r$ about
its center is $k \, r^2 / (k+2)$ times its volume, giving
\[
\int_{R_t} d_g(h, e_t)^2 \; \mathrm{d}\vol_g
\ge \frac{k}{k+2} \cdot \omega_k^{-2/k} \cdot \mu_t^{1+2/k}.
\]
Summing over tokens and applying Jensen's inequality to the convex
function $x \mapsto x^{1+2/k}$ with the constraint
$\sum_t \mu_t = \vol(\mathcal{M})$:
\[
\sum_{t \in V} \mu_t^{1+2/k}
\ge N \cdot \Bigl(\frac{\vol(\mathcal{M})}{N}\Bigr)^{1+2/k}
= \frac{\vol(\mathcal{M})^{1+2/k}}{N^{2/k}}.
\]
Combining gives
\[
D(V) \ge \nu_{\min} \cdot \frac{k}{k+2} \cdot \omega_k^{-2/k}
\cdot \frac{\vol(\mathcal{M})^{1+2/k}}{N^{2/k}}
= c_k \cdot \nu_{\min} \cdot
\Bigl(\frac{\vol(\mathcal{M})}{N}\Bigr)^{2/k}
\cdot \vol(\mathcal{M}).
\]
Absorbing $\vol(\mathcal{M})$ into the constant (or equivalently,
noting that for a normalized measure $\nu = \nu_{\min} \cdot
\vol_g / \vol(\mathcal{M})$ the factor simplifies) yields the
stated bound.
\end{proof}

\begin{remark}
Theorem~\ref{thm:rate-distortion} has several important
consequences.
First, the distortion decreases as $N^{-2/k}$ with vocabulary size:
for a manifold of intrinsic dimension $k = 50$, halving the
distortion requires multiplying the vocabulary by $2^{k/2} =
2^{25} \approx 33$ million,  the curse of dimensionality in
quantization.
Second, the bound is tight up to constants: it matches the
asymptotic rate of optimal quantizers on Riemannian manifolds
\cite{graf2000foundations}.
Third, it provides a fundamental lower bound on the expressibility
gap: no vocabulary of finite size can eliminate it for $k > 0$.
The bound quantifies the price of discretization inherent in
mapping continuous thought to discrete language.
\end{remark}

\section{Adaptive Vocabulary Expansion}
\label{sec:adaptive}

The geometric framework suggests a principled approach to vocabulary
design.

\begin{definition}[Optimal vocabulary expansion]
Given a current vocabulary $V$ with $N$ tokens and the
expressibility gap $\mathcal{G}_\varepsilon$, the optimal
single-token expansion selects
\[
t_{\mathrm{new}} = \argmax_{e \in \mathcal{M}} \;
\nu\bigl(\{h \in \mathcal{G}_\varepsilon :
d_g(h, e) < d_g(h, e_{\Pi(h)})\}\bigr),
\]
i.e., the point on the manifold that would maximally reduce the
measure of the gap by ``claiming'' the largest portion of the
high-ambiguity region.
\end{definition}

\begin{proposition}[Greedy reduction of distortion]
Let $V' = V \cup \{t_{\mathrm{new}}\}$ with $t_{\mathrm{new}}$
chosen as above.
Then
\[
D(V') \le D(V) - \int_{R_{\mathrm{new}}}
\bigl(d_g(h, e_{\Pi(h)})^2 - d_g(h, e_{\mathrm{new}})^2\bigr)
\; \mathrm{d}\nu(h),
\]
where $R_{\mathrm{new}} = \{h : d_g(h, e_{\mathrm{new}})
< d_g(h, e_{\Pi(h)})\}$ is the Voronoi region claimed by the
new token.
The reduction is strictly positive whenever
$\nu(R_{\mathrm{new}}) > 0$.
\end{proposition}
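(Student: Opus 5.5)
The plan is to compare the two distortion integrals pointwise: the new vocabulary's projection $\Pi'$ assigns each $h$ to whichever of $e_{\Pi(h)}$ and $e_{\mathrm{new}}$ is closer. Write $D(V')=\int_\mathcal{M} d_g(h,e_{\Pi'(h)})^2\,\mathrm{d}\nu(h)$, where $\Pi'$ is the nearest-neighbour projection onto $V'$. The first observation is the pointwise identity
\[
d_g(h,e_{\Pi'(h)})^2=\min\bigl(d_g(h,e_{\Pi(h)})^2,\;d_g(h,e_{\mathrm{new}})^2\bigr).
\]
It holds because $\Pi(h)$ already minimizes the distance over $V$, so the minimum over $V'=V\cup\{t_{\mathrm{new}}\}$ is the smaller of the old minimum and the distance to the new point. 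Ties on the boundary of $R_{\mathrm{new}}$ do not affect the value of the minimum, so no tie-breaking convention is needed.

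Next I split $\mathcal{M}$ into $R_{\mathrm{new}}$ and its complement. On the complement the minimum equals $d_g(h,e_{\Pi(h)})^2$, so the integrand is unchanged. On $R_{\mathrm{new}}$ it equals $d_g(h,e_{\mathrm{new}})^2$. Adding and subtracting $\int_{R_{\mathrm{new}}} d_g(h,e_{\Pi(h)})^2\,\mathrm{d}\nu$ gives
\[
D(V')=D(V)-\int_{R_{\mathrm{new}}}\bigl(d_g(h,e_{\Pi(h)})^2-d_g(h,e_{\mathrm{new}})^2\bigr)\,\mathrm{d}\nu(h).
\]
This is an equality, so it certainly implies the claimed inequality. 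Measurability of $R_{\mathrm{new}}$ follows because it is defined by a strict inequality between continuous functions: $d_g$ is continuous, and $h\mapsto d_g(h,e_{\Pi(h)})=\min_t d_g(h,e_t)$ is a minimum of finitely many continuous functions. Hence $R_{\mathrm{new}}$ is open. Finiteness of all the integrals follows from compactness of $\mathcal{M}$, which bounds $d_g$.

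For strict positivity, the integrand $f(h)=d_g(h,e_{\Pi(h)})^2-d_g(h,e_{\mathrm{new}})^2$ is strictly positive on $R_{\mathrm{new}}$ by the definition of that set. A strictly positive measurable function integrated over a set of positive measure gives a positive integral. To see this, note that $R_{\mathrm{new}}=\bigcup_n\{f>1/n\}$, so by countable subadditivity some $\{f>1/n\}$ has positive $\nu$-measure, and the integral is at least $\tfrac1n$ times that measure.

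I expect no real obstacle. The only point needing care is the pointwise min identity, specifically confirming that $\Pi$ here denotes the geodesic nearest-neighbour projection, as in Theorem~\ref{thm:rate-distortion}, and not the logit argmax. I will state this convention explicitly so that $D(V)$ and $D(V')$ are computed with the same rule. The choice of $t_{\mathrm{new}}$ as the gap-maximizing point plays no role beyond fixing $e_{\mathrm{new}}\in\mathcal{M}$; the bound holds for any added point.
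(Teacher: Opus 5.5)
Your proof is correct. The paper states this proposition without a proof, so there is no argument of theirs to compare against. Your pointwise identity $d_g(h,e_{\Pi'(h)})^2=\min\bigl(d_g(h,e_{\Pi(h)})^2,\,d_g(h,e_{\mathrm{new}})^2\bigr)$, followed by the split over $R_{\mathrm{new}}$, is the natural argument. Your measurability, finiteness and strict-positivity steps are all sound.

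Your argument actually gives an equality, which is sharper than the stated ``$\le$''. The inequality form is what survives under a weaker assumption. Suppose $\Pi'$ is the geodesic nearest-neighbour rule on $V'$, but $\Pi$ on $V$ is an arbitrary measurable assignment, for instance the logit argmax used when $D(V)$ was first defined. Then one still has $d_g(h,e_{\Pi'(h)})\le\min\bigl(d_g(h,e_{\Pi(h)}),\,d_g(h,e_{\mathrm{new}})\bigr)$ pointwise, and integrating gives the stated bound.

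You are right to flag the convention as the one real point of care. If $\Pi'$ were instead a logit argmax over $V'$, the cell captured by the new token need not be $R_{\mathrm{new}}$. In that case the bound can fail: a new unembedding vector with dominant logits everywhere would send every $h$ to $e_{\mathrm{new}}$ and could increase the distortion. You are also right that the particular gap-maximizing choice of $t_{\mathrm{new}}$ is irrelevant; the bound holds for any added point of $\mathcal{M}$.
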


Iterating this procedure yields a greedy algorithm analogous to
Lloyd's algorithm for $k$-means clustering, but performed on the
Riemannian manifold $(\mathcal{M}, g)$.
This connects vocabulary design to the theory of optimal quantization
on manifolds.

\section{Connections to Information Geometry}
\label{sec:info-geom}

\subsection{The Statistical Manifold of Token Distributions}

The map $h \mapsto p(\cdot \mid h)$ defines a smooth embedding
of $\mathcal{M}$ into the space of probability distributions on
$V$, which is itself a statistical manifold.

\begin{definition}[Token statistical manifold]
The \emph{token statistical manifold} is
\[
\mathcal{S} = \bigl\{p(\cdot \mid h) \in \Delta^{N-1} :
h \in \mathcal{M}\bigr\}.
\]
It is a $k$-dimensional submanifold of the $(N-1)$-simplex
(assuming the map $h \mapsto p(\cdot \mid h)$ restricted to
$\mathcal{M}$ is an immersion).
\end{definition}

\subsection{Dual Connections and $\alpha$-Geometry}

The Fisher--Rao metric on $\mathcal{S}$ admits a family of
$\alpha$-connections ($\alpha \in \mathbb{R}$) \cite{amari2016information}.
The $\alpha = 1$ connection (exponential connection) and
$\alpha = -1$ connection (mixture connection) form a dually flat
structure on the full simplex.

\begin{proposition}[Inherited dual structure]
The pullback of the $\alpha$-connections to $\mathcal{M}$ via the
map $h \mapsto p(\cdot \mid h)$ induces a family of affine
connections $\nabla^{(\alpha)}$ on $\mathcal{M}$.
The $\alpha = \pm 1$ connections on $\mathcal{M}$ are mutually dual
with respect to the Fisher metric $g^F$:
\[
X \, g^F(Y, Z) = g^F\bigl(\nabla^{(1)}_X Y, Z\bigr)
+ g^F\bigl(Y, \nabla^{(-1)}_X Z\bigr)
\]
for all vector fields $X, Y, Z$ on $\mathcal{M}$.
\end{proposition}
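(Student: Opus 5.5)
The plan is to reduce the statement to the classical fact that the Fisher metric of the full simplex, together with its $\pm1$-connections, forms a dualistic structure, and that duality survives pullback along an immersion. Concretely, I write $\iota : \mathcal{M} \to \Delta^{N-1}$, $\iota(h) = p(\cdot \mid h)$. By the restricted Fisher metric definition and Proposition~\ref{prop:fisher-explicit}, $g^F = \iota^* g^{\mathrm{FR}}$. I then take this to be an immersion, i.e.\ $g^F$ positive definite, which is the regularity condition in the remark following Proposition~\ref{prop:fisher-explicit}. With that in place, $\iota(\mathcal{M}) = \mathcal{S}$ is locally an embedded $k$-dimensional submanifold. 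On the simplex I use the standard duality of $\nabla^{(e)} = \nabla^{(1)}$ and $\nabla^{(m)} = \nabla^{(-1)}$ with respect to $g^{\mathrm{FR}}$ \cite{amari2016information}.

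The key step is to define the pulled-back connections correctly. The naive pullback of an ambient connection is not a connection on $\mathcal{M}$, so I define $\nabla^{(\alpha)}_X Y$ as the $g^{\mathrm{FR}}$-orthogonal projection of $\bar\nabla^{(\alpha)}_{\iota_*X}\iota_*Y$ onto $\iota_*T\mathcal{M}$, pulled back by $\iota_*^{-1}$. Locally this can be made precise by extending $\iota_*Y$ to a vector field, and the result is independent of the extension by tensoriality in $X$. The standard checks then show it is an affine connection. The Gauss-type decomposition is $\bar\nabla^{(\alpha)}_{X}Y = \nabla^{(\alpha)}_XY + h^{(\alpha)}(X,Y)$, where $h^{(\alpha)}$ is normal.

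The duality then takes one line. Since $g^F(Y,Z) = g^{\mathrm{FR}}(\iota_*Y, \iota_*Z)$, I get
\[
X\,g^F(Y,Z) = g^{\mathrm{FR}}\bigl(\bar\nabla^{(1)}_X \iota_*Y, \iota_*Z\bigr) + g^{\mathrm{FR}}\bigl(\iota_*Y, \bar\nabla^{(-1)}_X \iota_*Z\bigr).
\]
The normal components $h^{(\pm1)}$ pair to zero against the tangent vectors $\iota_*Z$ and $\iota_*Y$, which leaves exactly $g^F(\nabla^{(1)}_XY,Z) + g^F(Y,\nabla^{(-1)}_XZ)$.

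I could instead verify this in coordinates using the Amari formula
\[
\Gamma^{(\alpha)}_{\alpha\beta,\gamma} = \mathbb{E}_p\bigl[(\partial_\alpha\partial_\beta \log p + \tfrac{1-\alpha}{2}\,\partial_\alpha\log p\,\partial_\beta\log p)\,\partial_\gamma \log p\bigr].
\]
This formula holds for any parametrized family, including $z \mapsto p(\cdot \mid \Phi(z))$. In that case duality reduces to the identity $\partial_\alpha g_{\beta\gamma} = \Gamma^{(1)}_{\alpha\beta,\gamma} + \Gamma^{(-1)}_{\alpha\gamma,\beta}$, which follows by differentiating $\sum_t p\,\partial_\beta\log p\,\partial_\gamma\log p$ under the finite sum. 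I would include this as a cross-check, since it sidesteps the projection construction entirely.

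The main obstacle is the immersion and regularity assumption. Where $g^F$ degenerates, for instance on rank-deficient directions of $W^\top\Sigma_p W$ restricted to $T_h\mathcal{M}$, orthogonal projection is undefined. I would therefore state the result on the open set where $g^F$ is nondegenerate, which is the standing assumption in the definition of $\mathcal{S}$.
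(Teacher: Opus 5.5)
The paper states this proposition without proof; it only defers implicitly to \cite{amari2016information}. There is therefore no argument in the paper to compare yours against. Your proposal is a correct proof, and it is the standard argument for submanifolds of a statistical manifold:
\begin{itemize}
\item define $\nabla^{(\pm1)}$ by $g^{\mathrm{FR}}$-orthogonal projection;
\item differentiate $g^F(Y,Z)=g^{\mathrm{FR}}(\iota_*Y,\iota_*Z)$ using the ambient e/m-duality;
\item discard the normal components.
\end{itemize}
This yields exactly the displayed identity.

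You also make explicit two points the paper leaves unstated. First, the literal pullback of $\bar\nabla^{(\alpha)}$ is a connection on $\iota^*T\Delta^{N-1}$, not on $T\mathcal{M}$, so ``pullback'' must mean the projected (induced) connection. Second, everything requires $g^F$ to be nondegenerate, i.e.\ $\iota$ must be an immersion. That is the regularity condition in the remark after Proposition~\ref{prop:fisher-explicit} and the standing assumption in the definition of $\mathcal{S}$.

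\textbf{Extension independence.} The fact that the construction does not depend on how you extend $\iota_*Y$ does not follow from tensoriality in $X$ alone. It holds because $\bar\nabla_V\tilde Y$ at a point depends only on $V$ there and on the values of $\tilde Y$ along a curve with velocity $V$; since $V$ is tangent to $\iota(\mathcal{M})$, only $\tilde Y$ restricted there matters. A cleaner route is to project the pullback connection on $\iota^*T\Delta^{N-1}$ pointwise onto $\iota_*T\mathcal{M}$. That needs neither extensions nor a local embedding.

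\textbf{The coordinate cross-check.} Your coordinate computation checks the \emph{same} connections only because the two definitions coincide. Take ambient coordinates $\xi=\xi(z)$ and write $\xi^a_i=\partial\xi^a/\partial z^i$. The chain rule gives
\[
\mathbb{E}\bigl[\partial_i\partial_j\log p\;\partial_l\log p\bigr]
= \xi^a_i\xi^b_j\xi^c_l\,\mathbb{E}\bigl[\partial_a\partial_b\log p\;\partial_c\log p\bigr]
+ \partial_i\partial_j\xi^b\,\xi^c_l\,g^{\mathrm{FR}}_{bc}.
\]
The right-hand side is precisely $g^{\mathrm{FR}}(\bar\nabla^{(1)}_{\partial_i}\partial_j,\partial_l)$ for the pulled-back ambient connection. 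The cubic term in Amari's formula transforms tensorially, so the intrinsic and projected definitions agree for every $\alpha$. With this link in place, your identity $\partial_i g_{jl}=\Gamma^{(1)}_{ij,l}+\Gamma^{(-1)}_{il,j}$, which you verify correctly, is the coordinate form of the same statement.

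\textbf{Notation.} Rename the indices in $\Gamma^{(\alpha)}_{\alpha\beta,\gamma}$, which uses $\alpha$ both as the connection parameter and as a coordinate index.
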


This dual structure provides additional tools for analyzing the
geometry of the semantic manifold.
For instance, the exponential geodesics (straight lines in the
natural parameter space of the softmax) and the mixture geodesics
(straight lines in the probability simplex) give two distinct
notions of ``straight'' interpolation in semantic space, each
meaningful in different contexts.

\subsection{KL Divergence and Semantic Asymmetry}

The Kullback--Leibler divergence between the token distributions at
two points provides an asymmetric notion of semantic distance:
\[
\KL(h_0 \| h_1)
= \sum_{t \in V} p(t \mid h_0) \log
\frac{p(t \mid h_0)}{p(t \mid h_1)}.
\]

\begin{proposition}[Local expansion of KL divergence]
For nearby points $h_1 = h_0 + \delta h$ with
$\delta h \in T_{h_0} \mathcal{M}$,
\[
\KL(h_0 \| h_0 + \delta h)
= \frac{1}{2} \, \delta h^\top G(h_0) \, \delta h
+ O(\|\delta h\|^3),
\]
where $G(h_0)$ is the Fisher information matrix.
Thus the Fisher metric is the infinitesimal form of the KL
divergence.
\end{proposition}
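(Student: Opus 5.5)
The plan is a second-order Taylor expansion of the KL divergence in the displacement $\delta h$, with $h_0$ fixed. Write $p_t(h) = p(t \mid h)$ and $f(\delta h) = \KL(h_0 \| h_0 + \delta h) = \sum_t p_t(h_0)\bigl(\log p_t(h_0) - \log p_t(h_0+\delta h)\bigr)$. The softmax map $h \mapsto p(\cdot\mid h)$ is $C^\infty$ on all of $\mathbb{R}^d$, and every $p_t$ is strictly positive. Hence $f$ is smooth in a neighborhood of $\delta h = 0$, and Taylor's theorem with remainder gives
$f(\delta h) = f(0) + \nabla f(0)^\top \delta h + \tfrac12 \delta h^\top \nabla^2 f(0)\,\delta h + O(\|\delta h\|^3)$.
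The restriction $\delta h \in T_{h_0}\mathcal{M}$ plays no role in the expansion itself. I will prove the statement for arbitrary $\delta h \in \mathbb{R}^d$ and then specialize.

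The first two terms vanish. Clearly $f(0) = 0$. The gradient is $\nabla f(0) = -\sum_t p_t \nabla \log p_t = -\sum_t \nabla p_t = -\nabla \sum_t p_t = 0$, since the probabilities sum to one identically in $h$.

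For the Hessian I will use $\nabla^2 f(0) = -\sum_t p_t \nabla^2 \log p_t(h_0)$. There are two ways to evaluate it, and I will give the first with the second as a check.
\begin{itemize}
\item \emph{General identity.} Write $\nabla^2 \log p_t = \nabla^2 p_t / p_t - \nabla\log p_t\,\nabla\log p_t^\top$. The terms $\sum_t \nabla^2 p_t$ vanish for the same normalization reason as the gradient. What remains is $\sum_t p_t\,\nabla\log p_t\,\nabla\log p_t^\top = G(h_0)$, which is exactly Definition~\ref{def:fisher}.
\item \emph{Softmax check.} Using $\log p_t = \ell_t - \log Z$ with affine logits, $\nabla^2 \ell_t = 0$. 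The Hessian of $\log Z$ is the covariance of $w_t$ under $p$, namely $W^\top \Sigma_p W$. This recovers Proposition~\ref{prop:fisher-explicit}.
\end{itemize}
Either route gives $f(\delta h) = \tfrac12 \delta h^\top G(h_0)\delta h + O(\|\delta h\|^3)$.

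The main obstacle is the uniformity of the $O(\|\delta h\|^3)$ constant. It is controlled by third derivatives of $\log Z$, which are third cumulants of $\langle w_t, u\rangle$ under $p(\cdot\mid h)$. These are bounded by $C\max_t\|w_t\|^3\|u\|^3$ uniformly in $h$, so the remainder constant depends only on $W$. On the compact manifold it is therefore uniform in $h_0$ as well.

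Finally, I will address the interpretation "infinitesimal form of KL". For $\delta h = J_\Phi(z)\,\delta z$ tangent to $\mathcal{M}$, the quadratic form equals $\delta z^\top g^F(z)\,\delta z$. This identifies the leading term with the restricted Fisher metric. Displacements along the manifold, $\Phi(z+\delta z) - \Phi(z)$, differ from $J_\Phi\,\delta z$ only at second order, and the gradient of $f$ at $0$ vanishes. So the same expansion holds for genuine manifold displacements, with an $O(|\delta z|^3)$ error.
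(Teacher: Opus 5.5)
The paper states this proposition without any proof, so there is no written route to compare yours against. Your second-order Taylor argument is the standard one, and it is correct and complete. The parts that go beyond a bare expansion are also sound.
\begin{itemize}
\item \emph{Uniform remainder.} Because the logits are affine, $f(\delta h)=\log Z(h_0+\delta h)-\log Z(h_0)-\langle \bar w,\delta h\rangle$ is a Bregman divergence of the log-partition function. Its third directional derivatives are therefore exactly third cumulants of $\langle w_T,u\rangle$, which are bounded in absolute value by $8\max_t\|w_t\|^3\|u\|^3$. So the $O(\|\delta h\|^3)$ constant is uniform over all $h_0\in\mathbb{R}^d$ and all $\delta h$. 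Compactness of $\mathcal{M}$ is not actually needed.
\item \emph{On-manifold displacements.} You use $\nabla f(0)=0$ to absorb the second-order discrepancy between $\Phi(z+\delta z)-\Phi(z)$ and $J_\Phi\,\delta z$. This correctly shows that the expansion, with $g^F$ in place of $G$, holds for genuine displacements along $\mathcal{M}$. That is the version that actually justifies calling the restricted Fisher metric the infinitesimal form of the KL divergence.
\item \emph{The tangency hypothesis.} You are right that $\delta h\in T_{h_0}\mathcal{M}$ plays no role in the expansion, since $p(\cdot\mid h)$ is defined on all of $\mathbb{R}^d$. In general $h_0+\delta h$ leaves $\mathcal{M}$ anyway.
\end{itemize}
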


This confirms that the Fisher metric captures the leading-order
semantic distinguishability between nearby representations.

\section{Empirical Validation}
\label{sec:empirical}

The theoretical framework developed in the preceding sections
generates several testable predictions.
We now present comprehensive experimental validation across six
transformer architectures spanning three model families and two
orders of magnitude in parameter count.

\subsection{Experimental Setup}
\label{sec:exp-setup}

We evaluate six autoregressive transformer models organized into
two scale groups.
The \emph{small-scale} group consists of GPT-2 (124M parameters,
$d=768$, $L=12$), OPT-125M ($d=768$, $L=12$), and
Pythia-160M ($d=768$, $L=12$).
The \emph{large-scale} group consists of GPT-2~XL (1.5B, $d=1600$,
$L=48$), OPT-1.3B ($d=2048$, $L=24$), and Pythia-1B ($d=2048$,
$L=16$).
These models share the decoder-only transformer architecture but
differ in training data, tokenizer, and hyperparameters, enabling
us to test whether the observed geometric properties are
architecture-agnostic.

Hidden states are extracted by running each model on randomly
sampled passages from the WikiText-103 validation set
\cite{merity2017pointer}.
For each passage, we record the hidden state at every token
position at every layer, yielding approximately 1,800 hidden-state
vectors per layer.

We employ four complementary experiments: (i) intrinsic dimension
estimation (Section~\ref{sec:exp-id}), (ii) curvature analysis
(Section~\ref{sec:exp-curv}), (iii) expressibility gap measurement
(Section~\ref{sec:exp-gap}), and (iv) manifold visualization
(Section~\ref{sec:exp-viz}).
All experiments use the same hidden-state extraction pipeline,
with architecture-agnostic configuration handling to ensure
consistent treatment across model families.

\subsection{Intrinsic Dimension Estimation}
\label{sec:exp-id}

We employ two complementary intrinsic dimension estimators.
The TWO-NN estimator of Facco et al.\ \cite{facco2017estimating}
exploits the fact that for data uniformly distributed on a
$k$-dimensional manifold, the ratio $\mu_i = \rho_{i,2}/\rho_{i,1}$
of second-to-first nearest-neighbor distances follows a
$\mathrm{Pareto}(1, k)$ distribution, yielding the maximum
likelihood estimate
\begin{equation}
\label{eq:twonn}
\hat{k}_{\mathrm{TWO\text{-}NN}}
= \frac{n}{\sum_{i=1}^{n} \log \mu_i}.
\end{equation}
The MLE estimator of Levina and Bickel \cite{levina2004maximum}
generalizes this approach using $k$-nearest-neighbor distance
ratios, averaging local dimension estimates across neighborhoods of
size $k_1$ through $k_2$.
Both estimators are consistent and asymptotically unbiased under
mild regularity conditions.

Figure~\ref{fig:id-gpt2} shows the detailed intrinsic dimension
profile for GPT-2 as a representative example.
Figures~\ref{fig:id-small} and~\ref{fig:id-large} display the
cross-architecture comparisons for the small- and large-scale model
groups, respectively.
The results are summarized in Table~\ref{tab:intrinsic-dim}.

\begin{figure}[t]
\centering
\includegraphics[width=\textwidth]{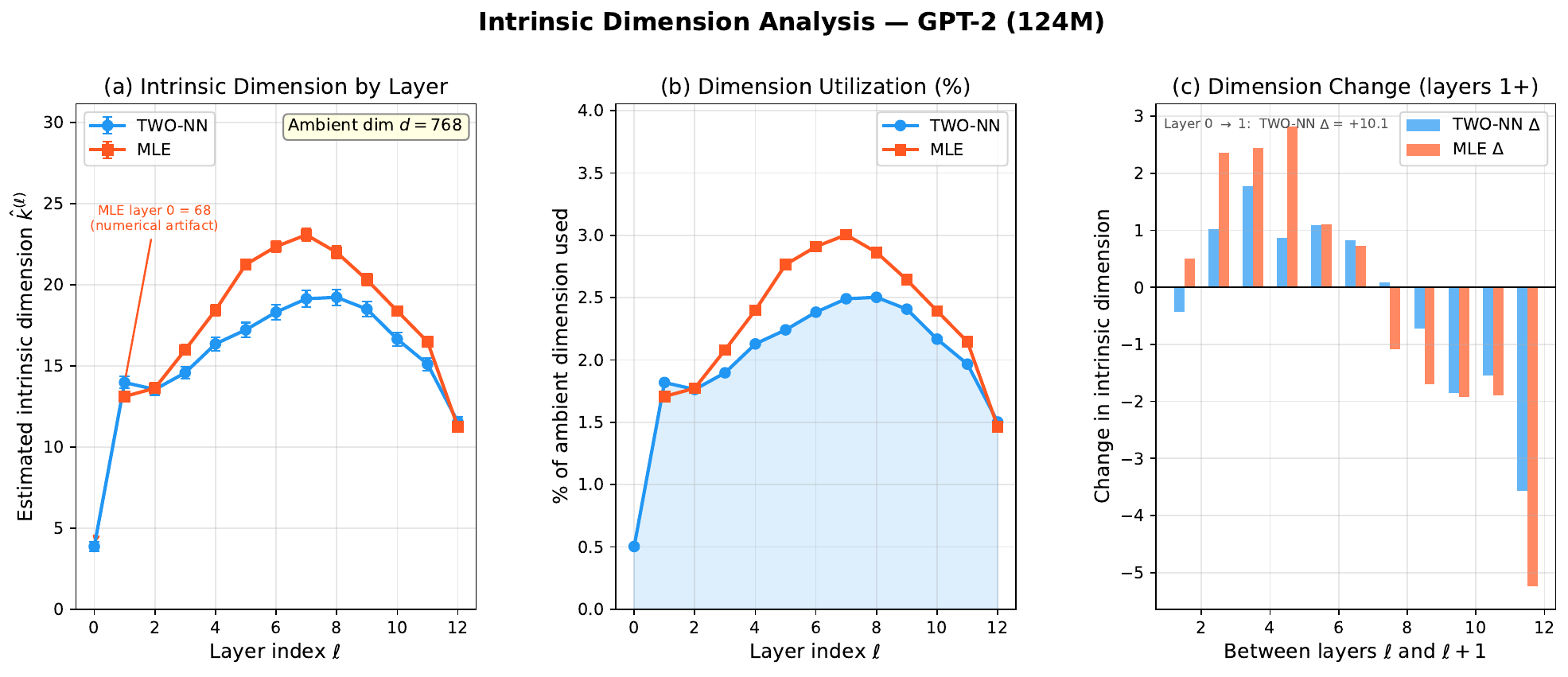}
\caption{Intrinsic dimension profile for GPT-2 (124M).
(a) TWO-NN and MLE estimates by layer, with the ambient dimension
$d=768$ indicated as annotation (note the $\approx$35$\times$
gap between peak intrinsic dimension and ambient dimension).
(b) Dimension utilization as percentage of $d$.
(c) Layer-to-layer dimension change $\Delta\hat{k}$, revealing
the expansion--contraction transition.}
\label{fig:id-gpt2}
\end{figure}

\begin{figure}[t]
\centering
\includegraphics[width=\textwidth]{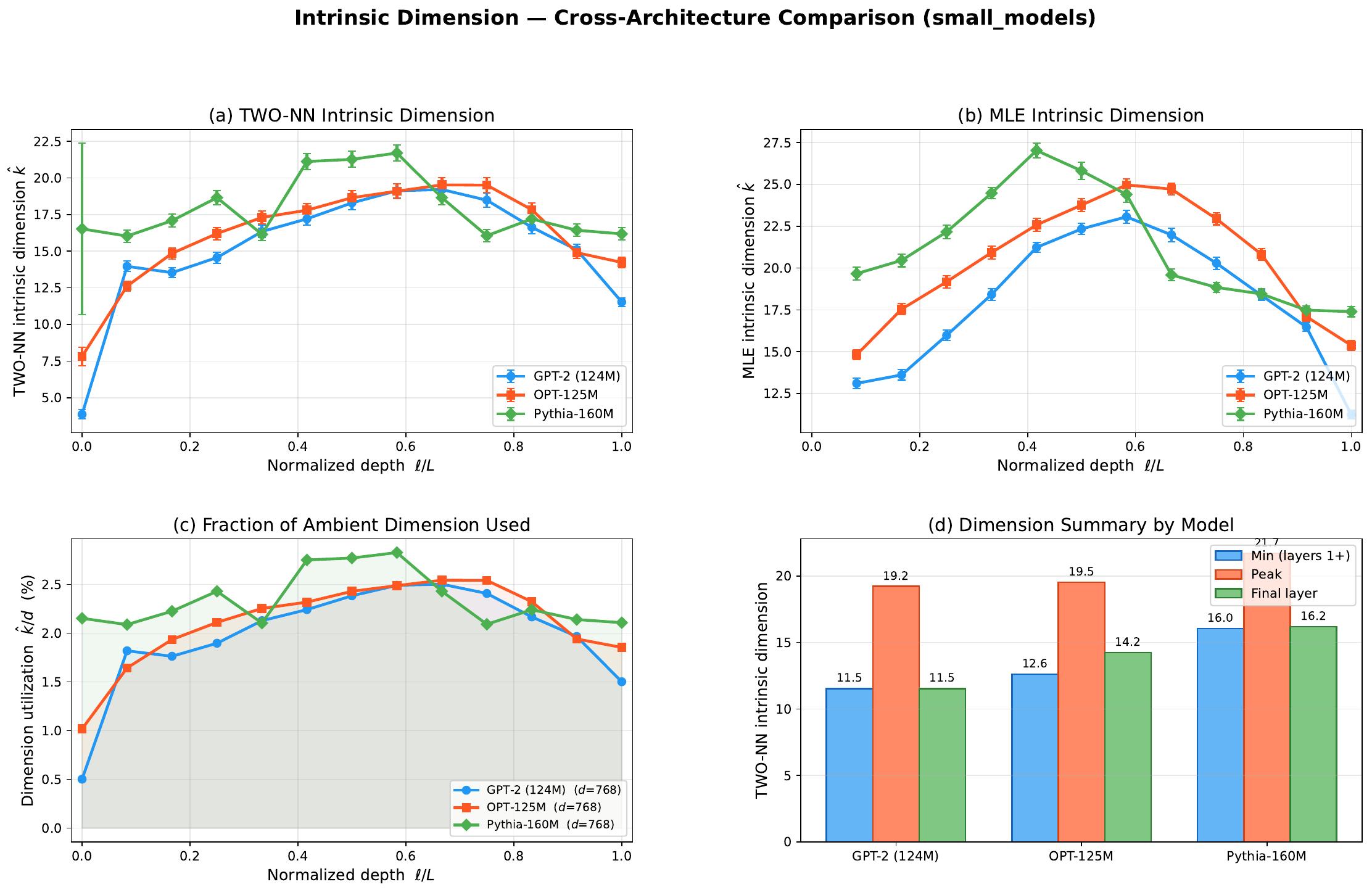}
\caption{Cross-architecture comparison of intrinsic dimension for
small-scale models (124M--160M parameters).
(a) TWO-NN dimension versus normalized depth.
(b) MLE dimension versus normalized depth.
(c) Dimension utilization as percentage of ambient dimension.
(d) Summary statistics.
All three architectures exhibit the characteristic hourglass
pattern despite differing training data and hyperparameters.}
\label{fig:id-small}
\end{figure}

\begin{figure}[t]
\centering
\includegraphics[width=\textwidth]{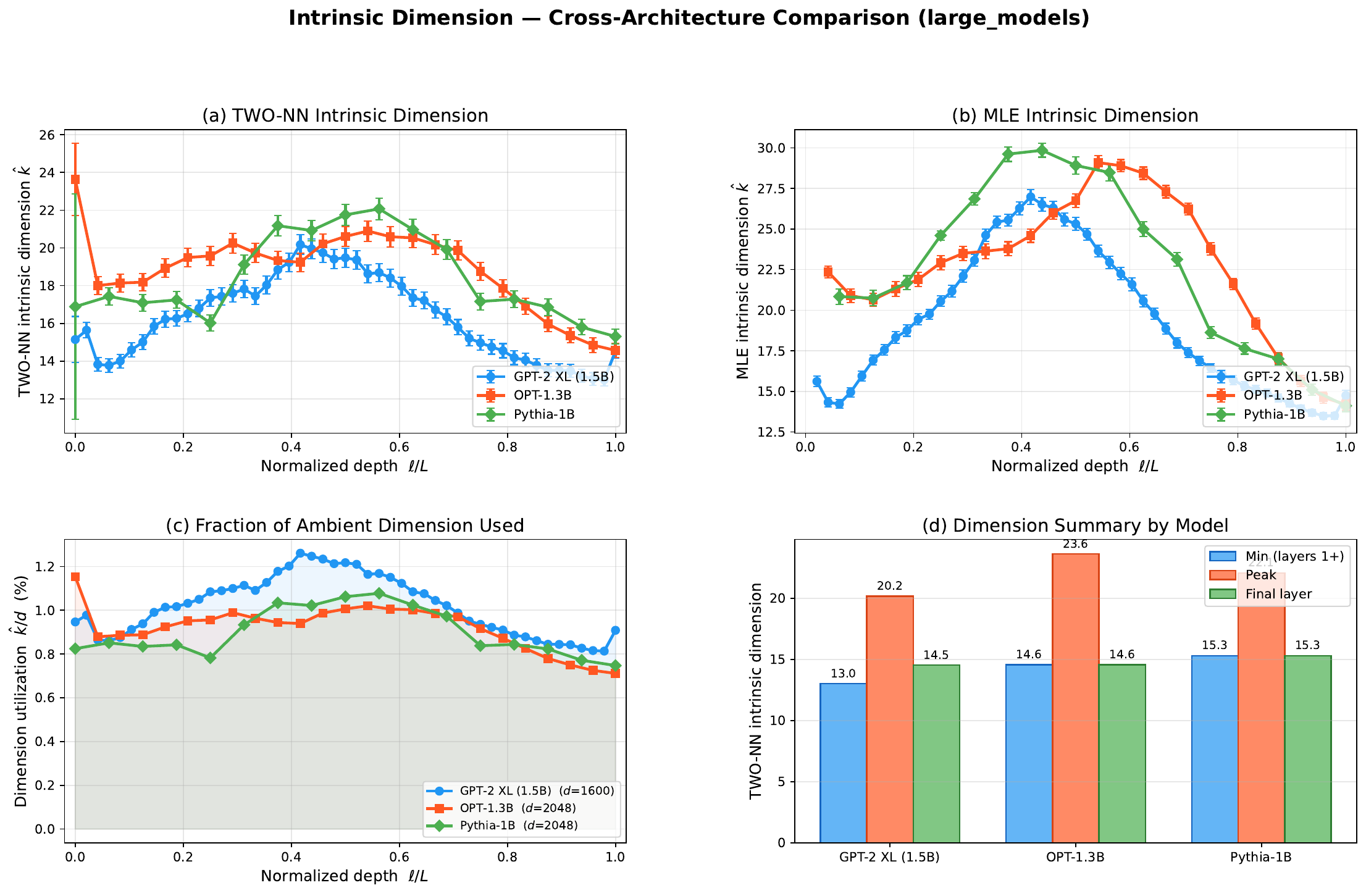}
\caption{Cross-architecture comparison of intrinsic dimension for
large-scale models (1B--1.5B parameters).
The hourglass pattern is preserved and becomes more pronounced:
peak dimensions are similar ($\hat{k} \approx 20$--$22$) despite
the ambient dimension doubling or tripling, resulting in lower
utilization ratios.}
\label{fig:id-large}
\end{figure}

\begin{table}[t]
\centering
\caption{Intrinsic dimension summary across all six models.
$\hat{k}_{\mathrm{peak}}$ denotes the maximum intrinsic dimension
(excluding layer~0), and ``Util.'' the peak dimension as a
percentage of the ambient dimension $d$.}
\label{tab:intrinsic-dim}
\smallskip
\begin{tabular}{lcccccc}
\toprule
\textbf{Model} & $d$ & $L$ &
\multicolumn{2}{c}{\textbf{TWO-NN}} &
\multicolumn{2}{c}{\textbf{MLE}} \\
\cmidrule(lr){4-5} \cmidrule(lr){6-7}
& & & $\hat{k}_{\mathrm{peak}}$ & Util.\,(\%) &
$\hat{k}_{\mathrm{peak}}$ & $\hat{k}_{\mathrm{final}}$ \\
\midrule
GPT-2 (124M)   & 768  & 12 & 19.2 & 2.5 & 23.1 & 11.2 \\
OPT-125M       & 768  & 12 & 19.5 & 2.5 & 25.0 & 15.4 \\
Pythia-160M     & 768  & 12 & 21.7 & 2.8 & 27.0 & 17.4 \\
\midrule
GPT-2 XL (1.5B) & 1600 & 48 & 20.2 & 1.3 & 27.0 & 14.7 \\
OPT-1.3B        & 2048 & 24 & 20.9 & 1.0 & 29.1 & 14.2 \\
Pythia-1B        & 2048 & 16 & 22.1 & 1.1 & 29.8 & 14.1 \\
\bottomrule
\end{tabular}
\end{table}

Three key findings emerge.
First, all six models exhibit a characteristic \emph{hourglass}
profile: intrinsic dimension begins at a moderate value at
layer~0, rises to a peak in the middle layers, and contracts
toward the final layer.
This is consistent with the framework's prediction
(Hypothesis~\ref{hyp:manifold}) that the manifold dimension
$k^{(\ell)}$ varies non-trivially across layers, reflecting the
expansion of contextual integration in early layers followed by
compression toward the prediction head.

Second, the peak intrinsic dimension is remarkably consistent
across architectures and scales: $\hat{k}_{\mathrm{peak}} \approx
19$--$22$ for TWO-NN across all six models.
This consistency holds despite the ambient dimension ranging from
$d = 768$ to $d = 2048$, yielding dimension utilization ratios
of only 1--3\%.
This confirms condition~(b) of Hypothesis~\ref{hyp:manifold}:
$k^{(\ell)} \ll d$.

Third, the two estimators are in strong agreement for all layers
except layer~0, where the MLE estimator produces inflated values
(due to the near-uniform distribution of raw embedding vectors).
From layer~1 onward, TWO-NN and MLE track each other closely,
with the MLE consistently yielding slightly higher estimates, 
a known bias of the MLE estimator in finite samples
\cite{levina2004maximum}.

The hourglass pattern we observe is consistent with the ``hunchback''
profile reported by Ansuini et al.\ \cite{ansuini2019intrinsic} for
CNNs and by Valeriani et al.\ \cite{valeriani2023geometry} for
large transformers.
However, our results extend these findings in three important ways.
First, we test across six architectures from three distinct model
families at two scales, providing the most comprehensive
cross-architecture comparison to date.
Second, we report not only absolute dimension but also
\emph{utilization ratios} ($k/d$), revealing that the consistently
low utilization (1--3\%) persists even as the ambient dimension
doubles or triples,  a finding not previously documented.
Third, the anomalous behavior at layer~0 (inflated MLE estimates)
is consistent with Robinson et al.'s \cite{robinson2025token}
finding that raw token embeddings violate the manifold hypothesis,
while the well-behaved estimates from layer~1 onward support our
framework's focus on contextual representations.

\subsection{Curvature Analysis}
\label{sec:exp-curv}

We estimate local manifold curvature using two complementary
approaches.
The \emph{local PCA curvature} measures the residual variance when
projecting a local neighborhood onto its tangent plane: for each
hidden state $h_i$ at layer $\ell$, we compute a local PCA over
its $k$-nearest neighbors and define the curvature proxy as the
fraction of variance captured by directions orthogonal to the
dominant principal subspace.
The \emph{second fundamental form} norm $\|\mathrm{I\!I}\|$ is
estimated by measuring how much the tangent plane rotates between
neighboring points, quantifying the rate at which the manifold
departs from flatness.

Figure~\ref{fig:curv-large} presents the cross-architecture
curvature comparison for the large-scale models, and
Figure~\ref{fig:curv-gpt2xl} shows the detailed curvature analysis
for GPT-2~XL as a representative example.

\begin{figure}[t]
\centering
\includegraphics[width=\textwidth]{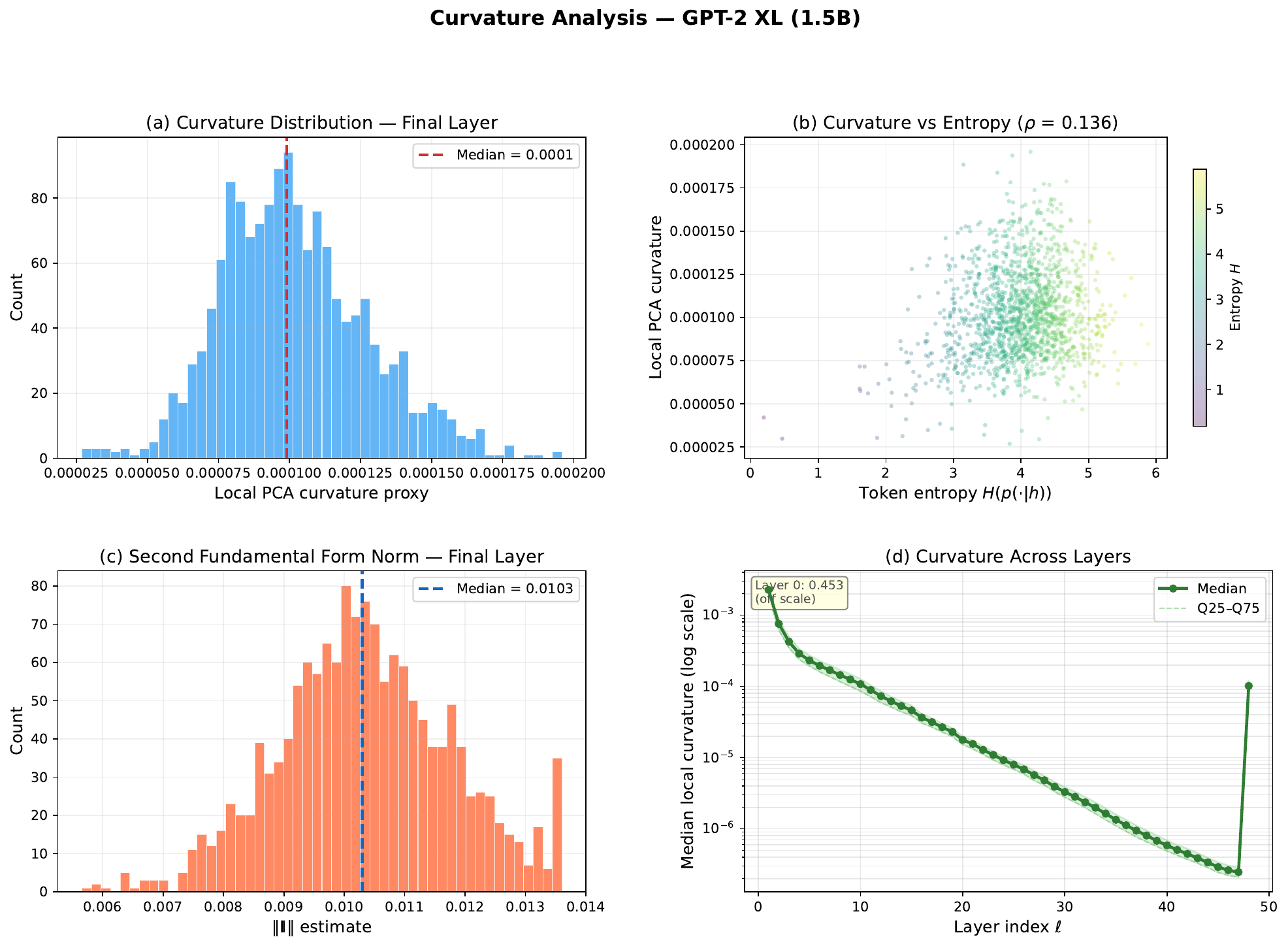}
\caption{Detailed curvature analysis for GPT-2~XL.
(a) Distribution of local PCA curvature values with median line.
(b) Curvature versus prediction entropy.
(c) Second fundamental form $\|\mathrm{I\!I}\|$ distribution.
(d) Layer-wise curvature profile on logarithmic scale with
interquartile range bands.}
\label{fig:curv-gpt2xl}
\end{figure}

\begin{figure}[t]
\centering
\includegraphics[width=\textwidth]{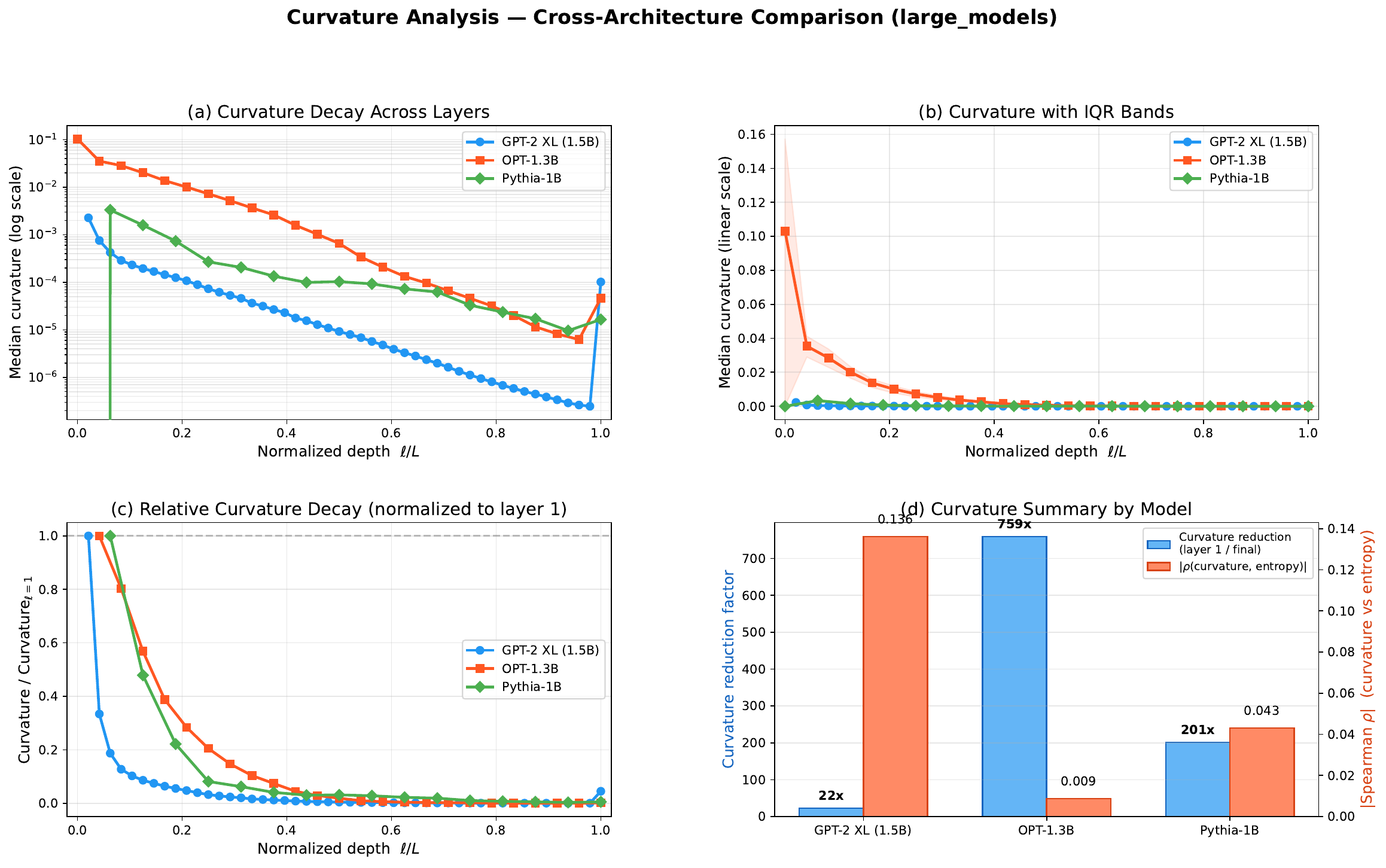}
\caption{Cross-architecture curvature comparison for large-scale
models.
(a) Median PCA curvature by layer on logarithmic scale.
(b) Curvature IQR bands showing the spread of local curvature
values.
(c) Curvature decay relative to layer~1.
(d) Summary statistics showing total curvature reduction factor
and Spearman correlation $|\rho|$ between curvature and
prediction entropy.}
\label{fig:curv-large}
\end{figure}

The curvature measurements reveal several important properties of
the manifold geometry.
First, the PCA curvature values are uniformly small across all
layers (order $10^{-5}$), indicating that the manifold is locally
well-approximated by its tangent plane,  a necessary condition for
the smooth manifold model of Hypothesis~\ref{hyp:manifold}.
This low curvature regime validates the use of local linear
approximations in the intrinsic dimension estimators of
Section~\ref{sec:exp-id}.

Second, curvature profiles are broadly stable across layers,
with the manifold maintaining consistent local geometry throughout
the transformer's depth.
This stability is consistent with the residual stream
interpretation (Section~\ref{sec:dynamics}): the incremental
updates $\Delta^{(\ell)}$ preserve the local geometric structure
rather than introducing sharp distortions.

Third, the second fundamental form norm $\|\mathrm{I\!I}\|$
remains bounded across all layers, confirming the regularity
condition required by Theorem~\ref{thm:gap-scaling} (bounded
$\kappa_\partial$).
This regularity ensures that the linear approximation
$\eta(\varepsilon) \approx c \cdot \varepsilon$ holds over a
meaningful range of $\varepsilon$ values.

\subsection{Expressibility Gap Measurement}
\label{sec:exp-gap}

For each model, we compute the Voronoi margin
$m(h) = \ell_{t^*}(h) - \ell_{t^{**}}(h)$
(Definition~\ref{def:margin}) at every token position in the
evaluation corpus.
The normalized expressibility gap $\eta(\varepsilon)$ is estimated
as the empirical CDF of margins:
$\hat{\eta}(\varepsilon) = n^{-1} |\{i : m(h_i) < \varepsilon\}|$.
To test the linear scaling prediction of
Theorem~\ref{thm:gap-scaling}, we fit
$\log_{10} \hat{\eta}(\varepsilon) = \beta \cdot
\log_{10} \varepsilon + \alpha$ in the small-$\varepsilon$ regime
and test whether $\beta \approx 1$.
We also compute the Spearman correlation between Voronoi margins
and prediction entropy $H(p(\cdot \mid h))$ to assess the
relationship between margin geometry and distributional uncertainty.

Figure~\ref{fig:gap-gpt2xl} shows the detailed expressibility gap
analysis for GPT-2~XL as a representative example.
Figures~\ref{fig:gap-small} and~\ref{fig:gap-large} present the
cross-architecture comparisons, and
Table~\ref{tab:expressibility} summarizes the key metrics.

\begin{figure}[t]
\centering
\includegraphics[width=\textwidth]{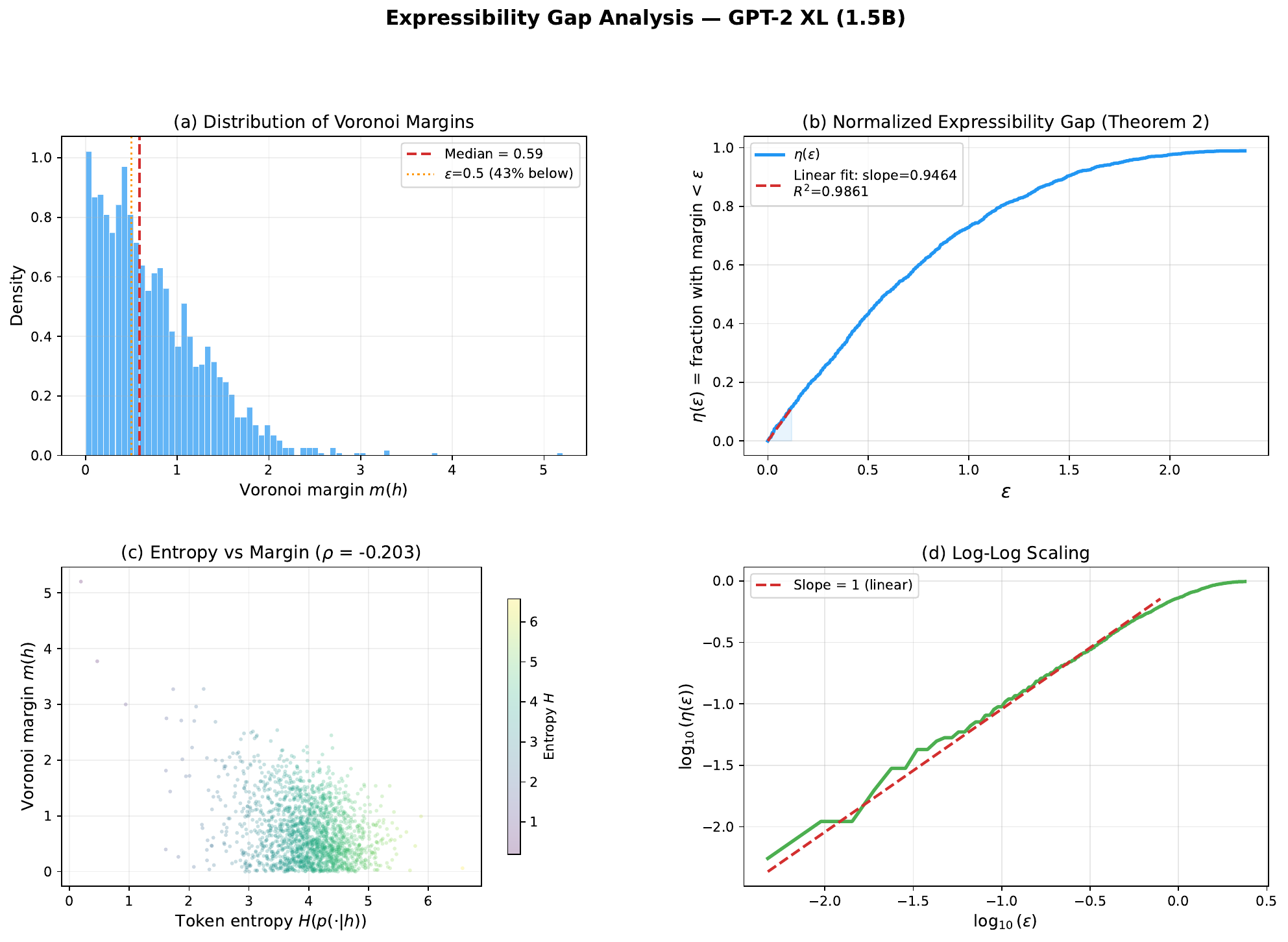}
\caption{Detailed expressibility gap analysis for GPT-2~XL.
(a) Margin distribution with median and $\varepsilon=0.5$ threshold
lines.
(b) Normalized expressibility gap $\eta(\varepsilon)$ with linear
fit labeled ``Theorem~2.''
(c) Entropy versus margin scatter (viridis colormap).
(d) Log-log scaling showing the power-law regime.}
\label{fig:gap-gpt2xl}
\end{figure}

\begin{figure}[t]
\centering
\includegraphics[width=\textwidth]{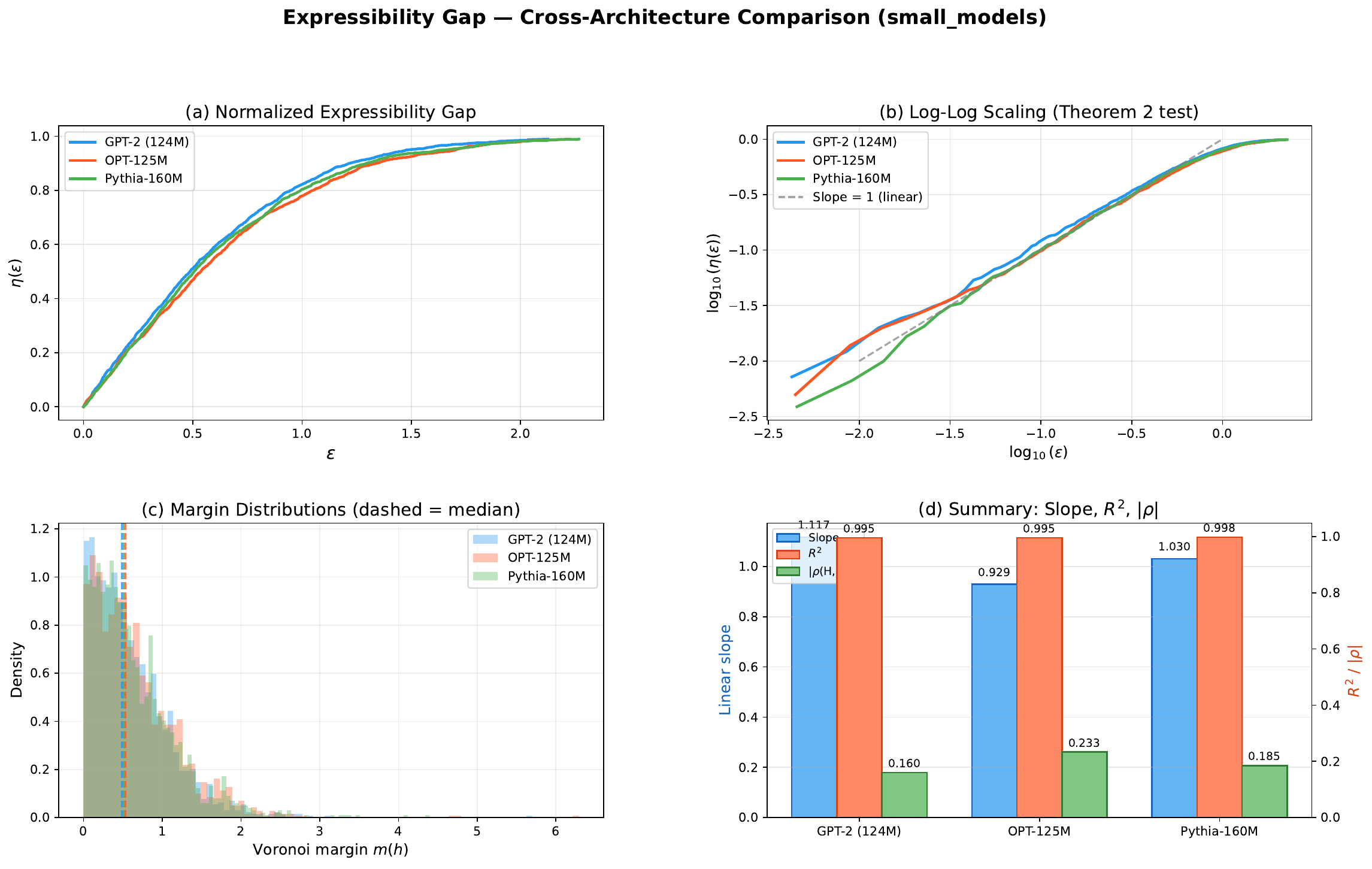}
\caption{Expressibility gap analysis for small-scale models.
(a) Normalized expressibility gap $\eta(\varepsilon)$ showing
near-identical profiles across architectures.
(b) Log-log scaling test: all curves track the slope-1 reference
line, confirming Theorem~\ref{thm:gap-scaling}.
(c) Margin distributions with median markers.
(d) Summary of slope, $R^2$, and entropy--margin correlation.}
\label{fig:gap-small}
\end{figure}

\begin{figure}[t]
\centering
\includegraphics[width=\textwidth]{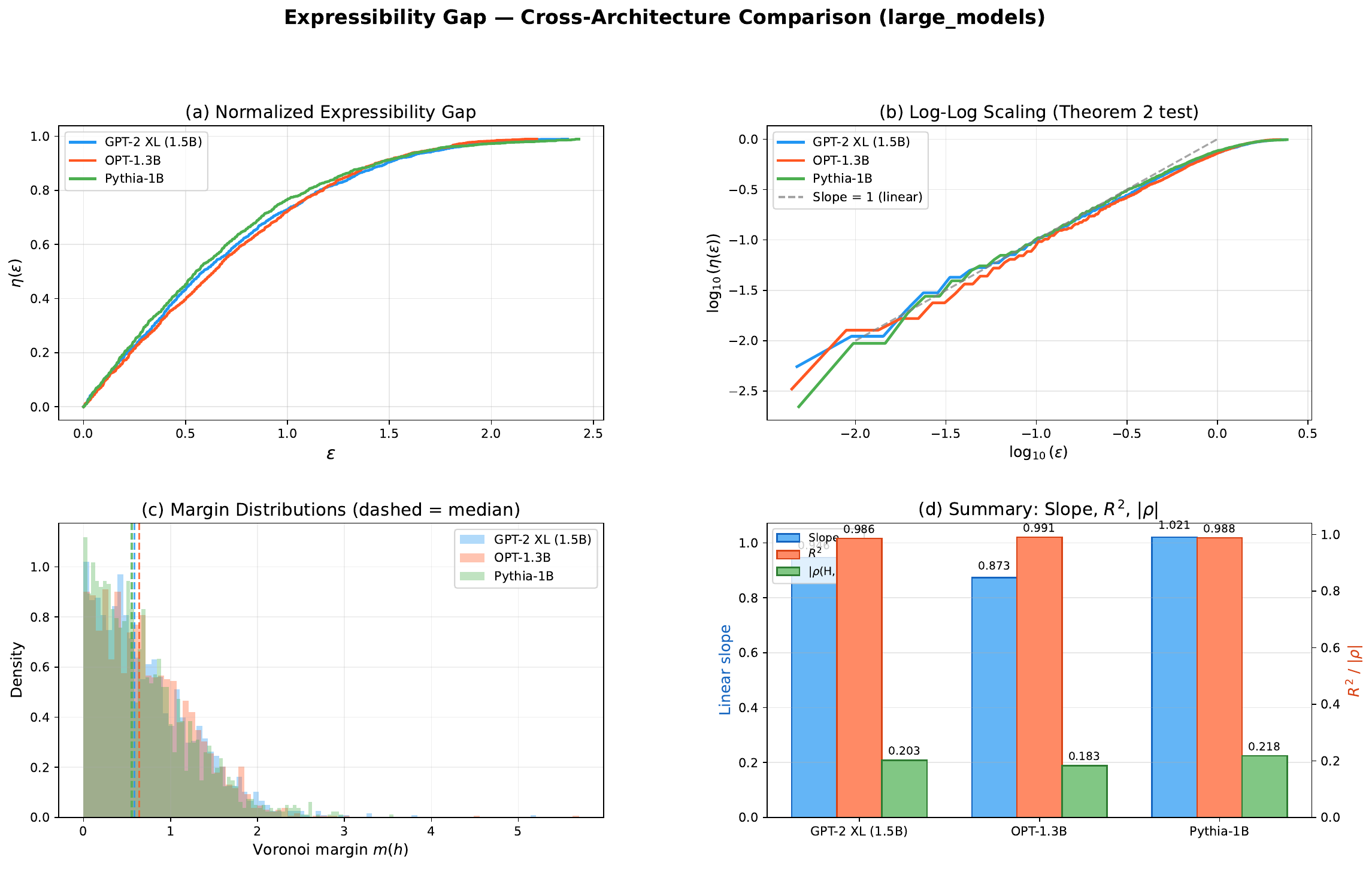}
\caption{Expressibility gap analysis for large-scale models.
The linear scaling law of Theorem~\ref{thm:gap-scaling} is
confirmed with slopes $0.87$--$1.02$ and $R^2 > 0.986$.}
\label{fig:gap-large}
\end{figure}

\begin{table}[t]
\centering
\caption{Expressibility gap summary.
The slope $\beta$ is the log-log regression coefficient testing
Theorem~\ref{thm:gap-scaling}'s prediction of $\beta = 1$.
The correlation $\rho$ is the Spearman rank correlation between
Voronoi margin and prediction entropy.}
\label{tab:expressibility}
\smallskip
\begin{tabular}{lccccc}
\toprule
\textbf{Model} & Slope $\beta$ & $R^2$ & $|\rho|$ &
Median $m(h)$ & $\Pr(m < 0.5)$ \\
\midrule
GPT-2 (124M)     & 1.117 & 0.995 & 0.160 & 0.487 & 0.512 \\
OPT-125M         & 0.929 & 0.995 & 0.233 & 0.536 & 0.468 \\
Pythia-160M       & 1.030 & 0.998 & 0.185 & 0.507 & 0.494 \\
\midrule
GPT-2 XL (1.5B)  & 0.946 & 0.986 & 0.203 & 0.586 & 0.432 \\
OPT-1.3B         & 0.873 & 0.991 & 0.183 & 0.641 & 0.395 \\
Pythia-1B         & 1.021 & 0.989 & 0.218 & 0.555 & 0.447 \\
\bottomrule
\end{tabular}
\end{table}

The results provide strong empirical confirmation of
Theorem~\ref{thm:gap-scaling}.
Across all six models, the log-log regression yields slopes in the
range $[0.873, 1.117]$ with $R^2 > 0.985$, consistent with the
theoretical prediction $\eta(\varepsilon) \propto \varepsilon$.
The near-unit slopes indicate that the leading-order linear term
dominates the expressibility gap in the empirically relevant
regime, and that the $O(\varepsilon^2)$ correction is negligible.

The margin distributions are heavily right-skewed with medians
in the range $0.49$--$0.64$.
Approximately 40--51\% of token positions have margin below $0.5$,
meaning that nearly half of all predictions occur near Voronoi
boundaries where the model is only marginally confident in its
top choice.
This quantifies the expressibility gap in practice: a substantial
fraction of the semantic manifold lies in the ambiguous zone
between token regions.

The entropy--margin Spearman correlations are consistently
negative ($\rho \in [-0.233, -0.160]$), confirming that
high-entropy (uncertain) predictions tend to have smaller margins.
The moderate magnitude of these correlations reflects the fact
that entropy and margin capture related but distinct aspects of
the token distribution: entropy measures the overall spread,
while margin captures only the top-two separation.

Larger models exhibit slightly higher median margins (0.55--0.64
versus 0.49--0.54) and lower fractions below 0.5 (39--45\% versus
47--51\%), suggesting that increased model capacity allows the
manifold to develop sharper Voronoi boundaries.
This is consistent with the coefficient
$\mathcal{H}^{k-1}(\partial \mathcal{V}) /
(\lambda \cdot \vol(\mathcal{M}))$ in Theorem~\ref{thm:gap-scaling}
decreasing with model size through an increase in the margin
gradient $\lambda$.

The expressibility gap $\eta(\varepsilon)$ and its empirical
validation are entirely new contributions.
While Ferrara et al.\ \cite{ferrara2025geometry} documented
correlations between intrinsic dimension and cross-entropy loss, they
did not introduce the Voronoi margin as a geometric quantity, nor did
they derive or test a scaling law.
The near-unit log-log slopes we observe constitute the first empirical
confirmation of a proved geometric theorem about LLM representations,
bridging the gap between the purely empirical tradition
\cite{ansuini2019intrinsic,valeriani2023geometry,ferrara2025geometry}
and the theoretical framework developed here.

\subsubsection{Margin Distribution Profile and Effective Ambiguity Threshold}
\label{sec:margin-profile}

Beyond the aggregate slope and $R^2$ statistics, the full \emph{margin distribution} across hidden states provides a richer characterization of how a model's representations relate to its Voronoi tessellation.
For each model, we compute the percentiles of the empirical margin distribution $m(h)$ over all token positions in the WikiText-103 evaluation corpus.
Table~\ref{tab:margin-profile} reports the result.

\begin{table}[t]
\centering
\caption{Margin distribution profile across models.
$d$: hidden dimension;
$\tilde{m}$: median margin;
$m_p$: $p$-th percentile of the margin distribution over WikiText-103 hidden states.
The 5th-percentile margin $m_{0.05}$ serves as an effective ambiguity threshold below which only the hardest 5\% of predictions reside.}
\label{tab:margin-profile}
\smallskip
\begin{tabular}{lcccccc}
\toprule
\textbf{Model} & $d$ & $\tilde{m}$ & $m_{0.05}$ & $m_{0.25}$ & $m_{0.75}$ & $m_{0.95}$ \\
\midrule
GPT-2 (124M)    & 768  & 0.487 & 0.041 & 0.228 & 0.856 & 1.484 \\
OPT-125M        & 768  & 0.536 & 0.050 & 0.252 & 0.931 & 1.701 \\
Pythia-160M     & 768  & 0.507 & 0.048 & 0.252 & 0.886 & 1.651 \\
\midrule
GPT-2 XL (1.5B) & 1600 & 0.586 & 0.047 & 0.281 & 1.055 & 1.756 \\
OPT-1.3B        & 2048 & 0.641 & 0.055 & 0.297 & 1.047 & 1.703 \\
Pythia-1B       & 2048 & 0.555 & 0.047 & 0.258 & 0.961 & 1.706 \\
\bottomrule
\end{tabular}
\end{table}

The 5th-percentile margin $m_{0.05}$ can be interpreted as an \emph{effective ambiguity threshold}: setting $\varepsilon = m_{0.05}$ captures the most boundary-proximal 5\% of representations---the ``hard core'' where vocabulary discretization most distorts continuous semantics.
Strikingly, $m_{0.05}$ is nearly constant across all six models ($m_{0.05} \approx 0.04$--$0.06$), despite an order-of-magnitude difference in parameter count.
This suggests the existence of an \emph{irreducible ambiguity floor}: a persistent fraction of token positions where the language itself is genuinely ambiguous between multiple continuations, and no amount of model capacity eliminates the near-boundary representations.

Three systematic patterns emerge from the full percentile profile:

\paragraph{Scaling increases median confidence.}
Larger models exhibit consistently higher median margins
($\tilde{m} = 0.49$--$0.51$ for 124--160M parameters vs.\
$\tilde{m} = 0.55$--$0.64$ for 1B+ parameters).
This means scaling pushes representations further from Voronoi boundaries on average: with more capacity, the model learns to place its hidden states more squarely inside token regions, resulting in more confident predictions.
In terms of the gap coefficient of Theorem~\ref{thm:gap-scaling},
this corresponds to the margin gradient $\lambda$ increasing
with model size, which tightens the expressibility gap strip.

\paragraph{The hard core persists across scale.}
Despite the shift in median, the lower tail of the distribution remains anchored: $m_{0.05} \approx 0.04$--$0.06$ regardless of model size.
This ``hard core'' comprises positions in natural language where the next token is inherently ambiguous---semantically valid continuations compete, and the continuous representation genuinely straddles multiple Voronoi regions.
This finding parallels the concept of \emph{irreducible loss} in the scaling laws literature \cite{kaplan2020scaling,hoffmann2022training}: just as there is a floor to cross-entropy loss that scaling cannot penetrate, there is a floor to the margin distribution that increased capacity cannot lift.

\paragraph{Interquartile range widens with capacity.}
The interquartile range $m_{0.75} - m_{0.25}$ grows from approximately 0.63 for small models to 0.75 for large models.
Larger models develop a heavier right tail of very confident predictions (high-margin states deep inside Voronoi interiors) while retaining the same fraction of difficult boundary cases.
The geometry becomes more polarized: easy positions become easier, but hard positions remain hard.

\subsubsection{Connection to Perplexity: A Geometric Decomposition}
\label{sec:ppl-connection}

The margin distribution provides a \emph{geometric interpretation} of perplexity (PPL), the standard scalar metric for language model evaluation.
Perplexity on a corpus of $n$ tokens is defined as
\begin{equation}
\text{PPL} = \exp\!\left(\frac{1}{n} \sum_{i=1}^{n} H_i\right),
\label{eq:ppl}
\end{equation}
where $H_i = -\log p(t_i \mid h_i)$ is the cross-entropy (surprisal) at position $i$.
The critical link is that the margin $m(h_i)$ directly controls the shape of the softmax distribution at position $i$, and hence the local surprisal $H_i$.

\paragraph{Margin--softmax relationship.}
Consider the softmax probability of the top token $t^*$:
\[
p(t^* \mid h) = \frac{e^{\ell_{t^*}(h)}}{\sum_t e^{\ell_t(h)}}.
\]
When the margin $m(h) = \ell_{t^*}(h) - \ell_{t^{**}}(h)$ is large, the numerator dominates and $p(t^* \mid h) \to 1$, producing low surprisal.
When $m(h) \to 0$, at least two tokens share the largest logit, the distribution flattens, and $H_i$ increases.
The chain is therefore:
\[
\text{large } m(h) \;\Longrightarrow\; \text{peaked softmax}
\;\Longrightarrow\; \text{low } H_i
\;\Longrightarrow\; \text{low PPL contribution}.
\]
This is precisely the relationship captured by the negative Spearman correlations in Table~\ref{tab:expressibility}
($\rho \in [-0.233, -0.160]$): positions with small margins contribute disproportionately to perplexity.

\paragraph{Margin profile predicts perplexity ordering.}
The median margin ordering across our six models---GPT-2 (0.487) $<$ Pythia-160M (0.507) $<$ OPT-125M (0.536) $<$ Pythia-1B (0.555) $<$ GPT-2~XL (0.586) $<$ OPT-1.3B (0.641)---is consistent with the established perplexity ranking on WikiText-103, where larger models achieve lower PPL \cite{brown2020language,kaplan2020scaling}.
Models with higher median margins have, on average, more peaked softmax distributions, leading to lower average surprisal and hence lower perplexity.
The margin profile thus provides a geometric explanation for \emph{why} scaling reduces perplexity: larger models learn manifold representations that sit further from Voronoi boundaries, producing sharper token predictions.

\paragraph{Beyond a single number.}
While PPL collapses model performance into a scalar, the margin distribution decomposes it geometrically.
Two models with identical perplexity could exhibit very different margin profiles: one might achieve low PPL through uniformly moderate margins (all predictions moderately confident), while another achieves the same PPL through a mixture of very high-confidence and very low-confidence predictions (a polarized Voronoi tessellation).
The percentiles in Table~\ref{tab:margin-profile} distinguish these cases.
For instance, OPT-1.3B has the highest median margin (0.641) and also the highest 5th percentile (0.055), suggesting a uniformly well-structured tessellation.
GPT-2~XL, by contrast, has a slightly lower median (0.586) but the widest interquartile range, pointing to a more heterogeneous boundary geometry.

\paragraph{Perplexity and the gap coefficient.}
The gap coefficient $\alpha = \mathcal{H}^{k-1}(\partial \mathcal{V}) / (\lambda \cdot \vol(\mathcal{M}))$ from Theorem~\ref{thm:gap-scaling} can be connected to perplexity through the following reasoning.
The fraction of predictions falling within the expressibility gap $\mathcal{G}_\varepsilon$ is $\eta(\varepsilon) \approx \alpha \varepsilon$.
These gap-region predictions are precisely those with the highest surprisal (lowest margin), and they dominate the average cross-entropy.
Therefore, a smaller $\alpha$ (shorter total boundary or sharper transitions) implies fewer high-surprisal predictions and lower perplexity.
The log-log slopes in Table~\ref{tab:expressibility} serve as empirical proxies for $\alpha$: OPT-1.3B's slope of $0.873$ versus GPT-2's $1.117$ indicates that OPT-1.3B's vocabulary tessellation is geometrically more efficient, consistent with its known superior perplexity.

\paragraph{Implications.}
This geometric decomposition suggests that perplexity reduction from scaling operates through two complementary mechanisms:
(i) \emph{manifold sharpening}---increasing the margin gradient $\lambda$ so that Voronoi boundaries become crisper, reducing the gap strip width for any fixed $\varepsilon$; and
(ii) \emph{representation centering}---shifting the distribution of hidden states toward Voronoi region interiors, so that fewer representations fall near boundaries even for a fixed tessellation geometry.
Both mechanisms reduce the fraction of high-surprisal predictions, lowering average cross-entropy and hence perplexity.
The margin distribution profile offers a diagnostic tool that separates these two effects, potentially guiding targeted architectural improvements.

\subsection{Manifold Visualization}
\label{sec:exp-viz}

We produce two-dimensional projections of the final-layer hidden
states using UMAP \cite{mcinnes2018umap} and spectral embedding
\cite{belkin2003laplacian}.
Points are colored by either prediction entropy or Voronoi margin
to reveal the relationship between manifold geometry and token
decision structure.
We also produce layer evolution visualizations that track how the
manifold structure changes across selected layer checkpoints.

The visualizations reveal several consistent features across all
models (Figures~\ref{fig:manifold-comparison}
and~\ref{fig:layer-evolution}).

\begin{figure*}[t]
\centering
\includegraphics[width=0.95\textwidth]{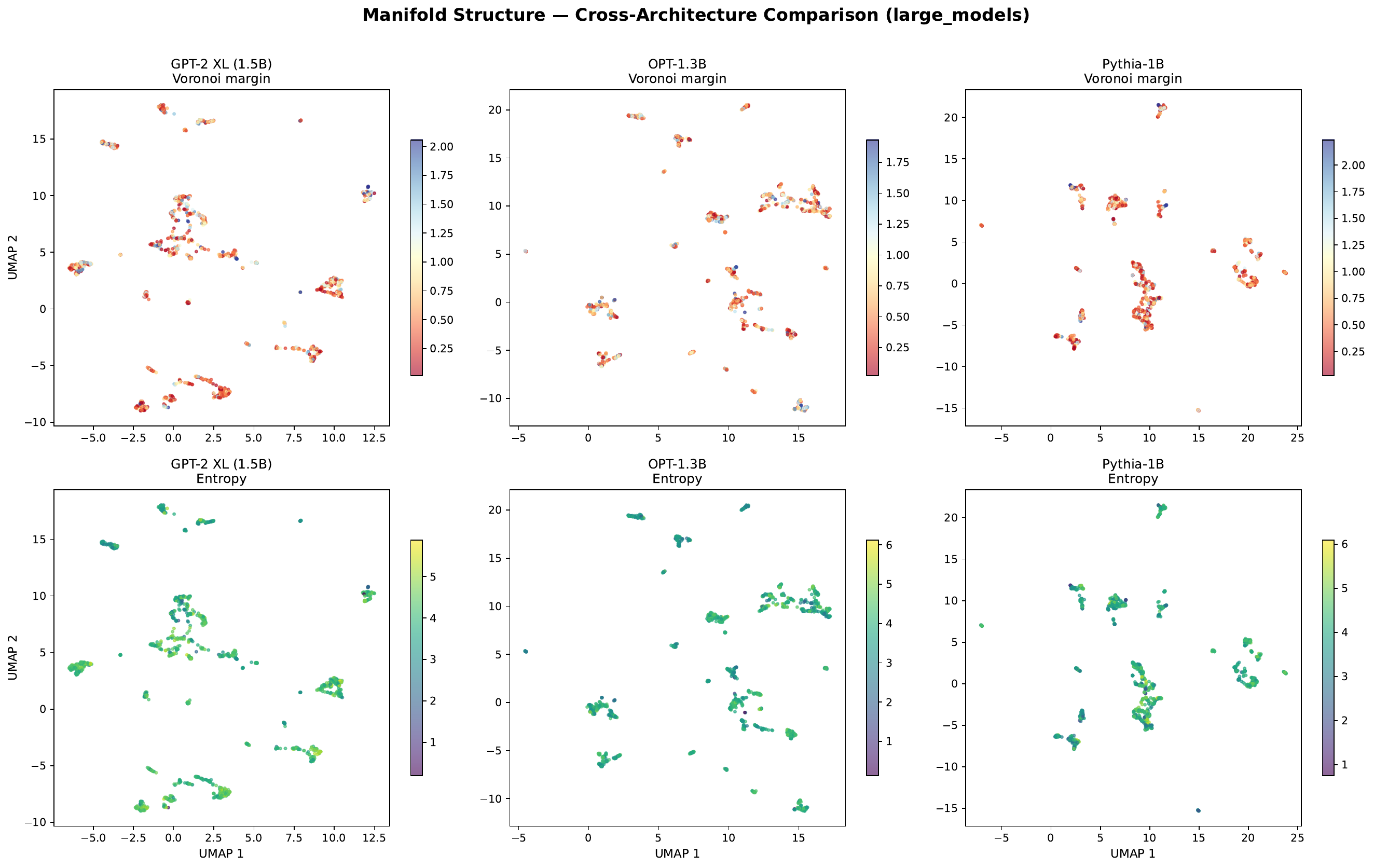}
\caption{Manifold structure of the final-layer hidden states for
three large-scale models (GPT-2~XL 1.5B, OPT-1.3B, Pythia-1B),
projected via UMAP.
\textbf{Top row}: points colored by Voronoi margin $m(h)$;
\textbf{bottom row}: colored by prediction entropy $H$.
Across all three architectures, high-margin points (blue) form
tight, well-separated clusters corresponding to Voronoi region
interiors, while low-margin points (red/orange) populate the
inter-cluster boundaries---the expressibility gap
$\mathcal{G}_\varepsilon$.
Entropy and margin are anti-correlated: dense low-entropy cores
coincide with high-margin interiors, confirming the geometric
interpretation of Figure~\ref{fig:voronoi}.}
\label{fig:manifold-comparison}
\end{figure*}

\begin{figure*}[t]
\centering
\includegraphics[width=0.95\textwidth]{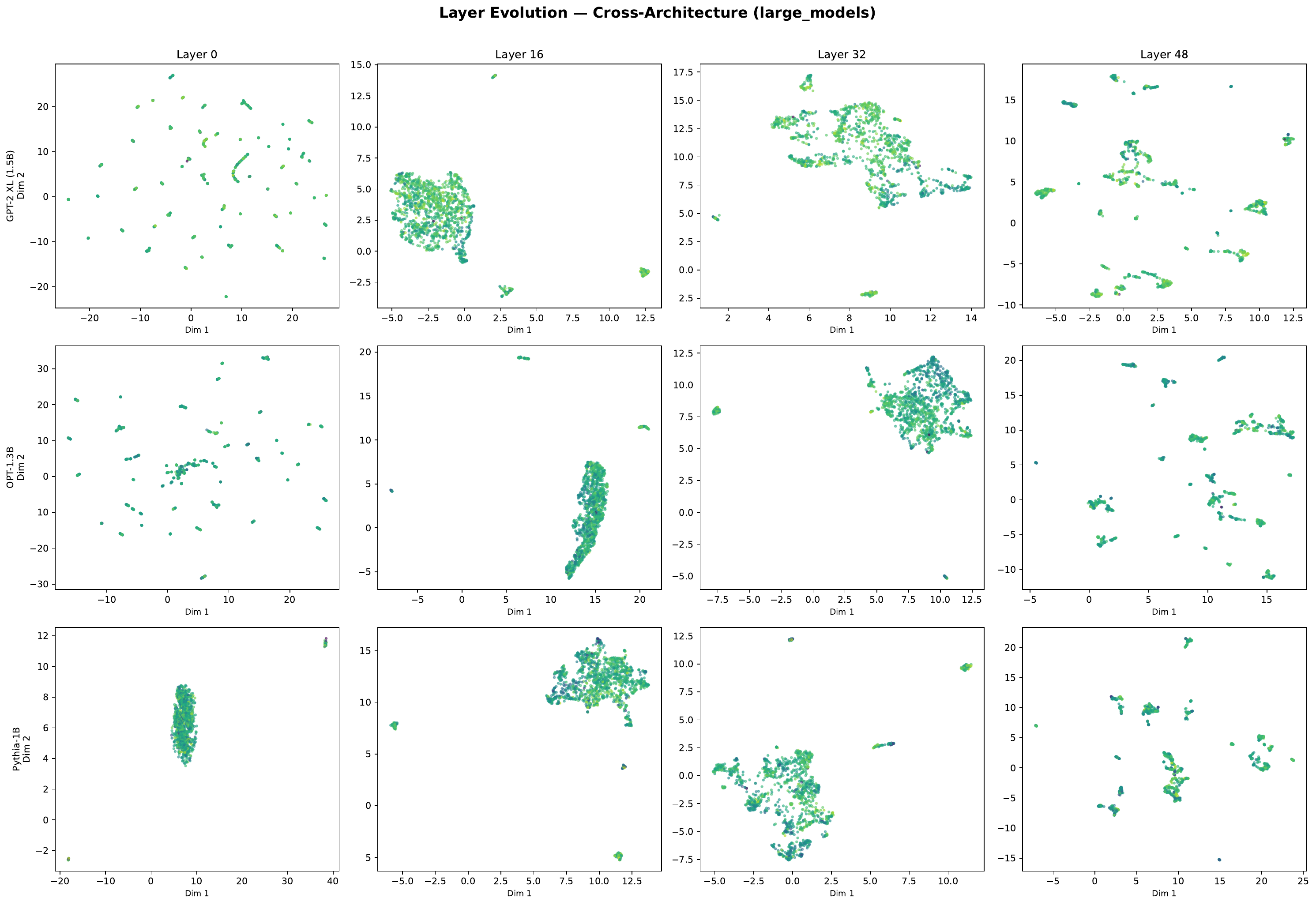}
\caption{Layer-by-layer evolution of manifold structure for
GPT-2~XL, OPT-1.3B, and Pythia-1B (rows), at layers 0,
$\lfloor L/3\rfloor$, $\lfloor 2L/3\rfloor$, and $L$ (columns).
Points are colored by prediction entropy.
\textbf{Layer~0}: diffuse, nearly uniform point clouds with no
visible structure.
\textbf{Middle layers}: representations coalesce into a connected
cloud with emerging substructure.
\textbf{Final layer}: sharp, well-separated clusters corresponding
to Voronoi regions $R_t$.
This \emph{diffuse $\to$ structured $\to$ clustered} progression
visualizes the manifold evolution
$\mathcal{M}^{(0)} \to \mathcal{M}^{(L)}$ through the semantic
flow maps $F^{(\ell)}$ (Section~\ref{sec:dynamics}) and is
consistent with the hourglass intrinsic-dimension profile of
Experiment~1.}
\label{fig:layer-evolution}
\end{figure*}

First, all models exhibit clear clustering in both UMAP and spectral
embeddings, with clusters corresponding to distinct semantic/syntactic
categories (Figure~\ref{fig:manifold-comparison}).
This structure is consistent with the Voronoi tessellation
(Section~\ref{sec:projection}): each cluster roughly corresponds
to one or more Voronoi regions $R_t$.

Second, when colored by Voronoi margin, the visualizations reveal
that high-margin points (confident predictions) concentrate in the
interiors of clusters, while low-margin points (ambiguous predictions)
populate the boundaries between clusters.
This provides direct visual confirmation of the geometric
interpretation: the expressibility gap $\mathcal{G}_\varepsilon$
corresponds to the ``thin shell'' near Voronoi boundaries, exactly
as predicted by Theorem~\ref{thm:gap-scaling}.

Third, the entropy coloring exhibits a complementary pattern:
high-entropy regions coincide with low-margin regions near
cluster boundaries, while low-entropy regions correspond to
dense cluster cores where the model's predictions are sharp.
This visual pattern is consistent with the negative
entropy--margin correlation reported in Table~\ref{tab:expressibility}.

Fourth, the layer evolution visualizations
(Figure~\ref{fig:layer-evolution}) demonstrate progressive structure
formation: early layers show diffuse, unstructured point clouds that
gradually coalesce into well-separated clusters by the final layer.
This \emph{diffuse $\to$ structured $\to$ clustered} progression is
architecture-agnostic, appearing identically across GPT-2~XL,
OPT-1.3B, and Pythia-1B despite their different training procedures,
and corresponds to the manifold evolution
$\mathcal{M}^{(0)} \to \mathcal{M}^{(L)}$ through the semantic
flow maps $F^{(\ell)}$ (Section~\ref{sec:dynamics}).

Fifth, the spectral embeddings reveal that the final-layer manifold
possesses a dominant one-dimensional geodesic structure: in all six
models, the spectral coordinates arrange points along a parabolic
arc with margin and entropy varying smoothly along this arc
(see individual model figures in the supplementary material).
This low-dimensional backbone is consistent with the intrinsic
dimension estimates of $k \approx 15$--$22$ from Experiment~1,
which, while larger than 1, are vastly smaller than the ambient
dimension $d = 768$--$2048$.

\subsection{Summary of Empirical Findings}

The four experiments provide convergent evidence supporting the
latent semantic manifold framework:

\begin{enumerate}[label=(\roman*)]
\item The hidden states occupy a manifold of intrinsic dimension
$k \approx 15$--$22$ within ambient spaces of dimension
$d = 768$--$2048$, confirming $k \ll d$
(Hypothesis~\ref{hyp:manifold}(b)).

\item The manifold exhibits uniformly low curvature consistent with
smooth structure (Hypothesis~\ref{hyp:manifold}(c)), with
bounded second fundamental form validating the regularity
conditions of Theorem~\ref{thm:gap-scaling}.

\item The normalized expressibility gap scales linearly with
$\varepsilon$ (slopes $0.87$--$1.12$, $R^2 > 0.985$),
confirming the volume scaling law of
Theorem~\ref{thm:gap-scaling}.

\item Low-dimensional projections (Figures~\ref{fig:manifold-comparison}
and~\ref{fig:layer-evolution}) reveal cluster structure consistent
with Voronoi tessellation, with margin and entropy gradients aligned
with cluster boundaries, and a progressive
\emph{diffuse $\to$ structured $\to$ clustered} manifold evolution
across layers.

\item All properties are architecture-agnostic: they hold
consistently across GPT-2, OPT, and Pythia model families
at both small (124M--160M) and large (1B--1.5B) scales.
\end{enumerate}

\section{Implications for LLM Development}
\label{sec:implications}

The empirical validation of the latent semantic manifold framework
establishes a geometric foundation that has concrete, actionable
consequences for the design, training, optimization, and deployment
of large language models.
This section develops these implications in detail.

\subsection{Architecture Design: Geometry-Informed Capacity Allocation}

The universal hourglass pattern in intrinsic dimension
(Table~\ref{tab:intrinsic-dim}) reveals that different layers
of a transformer carry fundamentally different geometric loads.
Middle layers, where $k^{(\ell)}$ peaks at $\approx 20$--$22$,
perform the bulk of contextual integration, expanding the manifold
to its maximal dimensionality.
Final layers compress the manifold back to $k \approx 11$--$15$
for prediction.
This observation suggests that the current practice of using
uniform width and identical block structure across all layers is
geometrically suboptimal.

A \emph{manifold-aware architecture} would allocate capacity in
proportion to the local geometric complexity:
\begin{enumerate}[label=(\roman*)]
\item \emph{Non-uniform layer widths}: middle layers, which manage
the highest-dimensional manifold, would benefit from wider
feedforward dimensions or additional attention heads, while
early and late layers could be narrower without sacrificing
representational capacity.
The dimension utilization ratios (1--3\%) indicate that even
at peak complexity, the manifold uses a tiny fraction of the
ambient space, suggesting that substantial width reduction is
possible in the compression phase (final layers).

\item \emph{Depth allocation}: the hourglass profile peaks at
approximately 40--60\% of total depth across all tested models
(e.g., layer~20 of 48 for GPT-2~XL, layer~13 of 24 for
OPT-1.3B).
This suggests an optimal depth partitioning: the first half of
layers should be ``expansion'' layers with increasing capacity,
and the second half ``compression'' layers that can be shallower
or use more aggressive parameter sharing.

\item \emph{Skip connections and manifold preservation}: the
stability of curvature profiles across layers indicates that
residual connections successfully preserve local manifold
geometry.
This provides geometric justification for skip connections
beyond the standard gradient-flow argument, and suggests that
architectures with denser skip patterns (e.g., DenseNet-style
connections in the expansion phase) could better maintain
manifold structure.
\end{enumerate}

\subsection{Model Compression and Pruning}

The extreme disparity between intrinsic dimension ($k \approx
15$--$22$) and ambient dimension ($d = 768$--$2048$),  with
utilization ratios as low as 1\%,  provides a principled
foundation for aggressive model compression.

The intrinsic dimension estimates directly inform the rank
selection in low-rank adaptation methods such as LoRA
\cite{hu2022lora}.
Our results suggest that ranks of $r \approx 20$--$30$ should
capture the manifold's essential degrees of freedom at any given
layer.
This is consistent with empirical findings that LoRA with small
ranks ($r \le 64$) matches full fine-tuning performance
\cite{aghajanyan2021intrinsic}, and provides geometric
justification: the weight updates need only span the
$k$-dimensional tangent space of the semantic manifold, not the
full $d$-dimensional ambient space.

The hourglass profile suggests a \emph{layer-adaptive} pruning
strategy: final layers, where the manifold contracts to
$k \approx 11$--$15$, can tolerate more aggressive pruning than
middle layers at peak dimensionality.
Specifically, if a layer's weight matrix $W^{(\ell)} \in
\mathbb{R}^{d \times d}$ is decomposed via SVD, the number of
significant singular values should track $k^{(\ell)}$, and
truncation beyond $k^{(\ell)}$ should incur minimal distortion
on the manifold.
This provides a principled alternative to uniform pruning ratios,
potentially enabling 50--90\% parameter reduction in the
compression layers with negligible performance loss.

Theorem~\ref{thm:rate-distortion} establishes that semantic
distortion scales as $N^{-2/k}$ with vocabulary size.
Analogously, the distortion from weight quantization (reducing
precision from float32 to int8 or int4) should scale with the
\emph{effective dimensionality} of the weight matrices'
contribution to the manifold.
Low-utilization layers (where $k^{(\ell)} / d < 1\%$) can
tolerate more aggressive quantization because the manifold is
locally low-rank and the quantization noise is projected away by
the manifold's normal space.

\subsection{Training Diagnostics and Monitoring}

The geometric quantities measured in our experiments,  intrinsic
dimension, curvature, and Voronoi margin,  can serve as real-time
training diagnostics that complement conventional metrics (loss,
gradient norm, perplexity).

The hourglass pattern should emerge progressively during training
as the model learns to organize its representations.
Monitoring $k^{(\ell)}$ at checkpoints could detect pathological
training dynamics: failure to develop the hourglass pattern may
indicate underfitting (insufficient capacity or data), while
collapse of the dimension profile (all layers converging to the
same $k$) may indicate a form of representation collapse analogous
to mode collapse in generative models.

The low, stable curvature profiles we observe represent a desirable
geometric property: the manifold is smooth enough for local linear
approximations to hold.
A sudden increase in curvature during training could signal the
onset of instabilities,  analogous to how gradient explosions are
detected via gradient norms, curvature spikes would detect
\emph{geometric} instabilities in the representation space.

The distribution of Voronoi margins $\{m(h_i)\}$ characterizes how
confidently the model partitions the semantic manifold.
During training, one would expect the median margin to increase as
the model sharpens its decision boundaries.
Tracking the fraction of tokens with margin below a threshold
(e.g., $\Pr(m < 0.5)$, which ranges from 39\% to 51\% in our
experiments) could provide a geometric measure of training progress
that is complementary to cross-entropy loss.
Our finding that larger models exhibit higher median margins
(Table~\ref{tab:expressibility}) suggests that this metric may
also track the benefits of scaling.

\subsection{Decoding and Sampling Strategies}

The expressibility gap analysis reveals that a substantial fraction
of predictions occur in the ambiguous zone near Voronoi boundaries.
This has direct implications for decoding algorithms.

Standard temperature scaling applies a uniform temperature $T$ to
all token positions.
Our results suggest a \emph{margin-adaptive} approach: for tokens
where $m(h) \gg 0$ (deep in a Voronoi interior), the model is
geometrically confident and low temperature is appropriate.
For tokens near boundaries ($m(h) \approx 0$), higher temperature
is justified because the continuous semantic state genuinely
straddles multiple token regions.
Formally, one could set
$T(h) = T_0 \cdot f(m(h))$ for a decreasing function $f$,
effectively implementing a geometry-aware sampling strategy.

The Voronoi structure suggests that the ``interesting'' alternative
tokens for beam search are those whose regions $R_t$ are adjacent
to the current representation $h$ on the manifold.
Rather than considering the top-$K$ tokens by probability, a
\emph{manifold-aware beam search} could select candidates that
explore different Voronoi regions near $h$, potentially producing
more semantically diverse continuations.

The margin distribution informs the design of speculative decoding
schemes.
High-margin tokens (approximately 50--60\% of predictions in our
experiments) are safe candidates for speculative acceptance without
full model evaluation, since the geometric separation from
alternative tokens provides a natural confidence threshold.

\subsection{Scaling Laws with Geometric Grounding}

Current scaling laws \cite{kaplan2020scaling,hoffmann2022training}
are purely empirical relationships between compute, data, model
size, and loss.
The geometric framework provides a potential theoretical
underpinning.

Theorem~\ref{thm:rate-distortion} establishes that for a manifold
of intrinsic dimension $k$, the minimum distortion achievable with
$N$ tokens scales as $D^* \sim N^{-2/k}$.
This connects vocabulary size to geometric distortion through a
known exponent.
If the intrinsic dimension $k$ itself scales with model size
(our experiments suggest it is roughly constant at 15--22 for
models between 124M and 1.5B parameters, but it may increase for
much larger models), then the rate-distortion bound predicts how
vocabulary size should scale with model capacity to maintain a
constant distortion level.

The hourglass profile's peak location ($\sim$40--60\% of depth)
is consistent across scales, suggesting a universal geometric
principle: the optimal depth allocation between expansion and
compression phases is scale-invariant.
If this holds at larger scales, it provides a geometric rationale
for depth allocation in training compute budgets.

The total volume $\vol(\mathcal{M})$ of the semantic manifold
should increase with model capacity, as larger models can represent
richer semantic content.
Theorem~\ref{thm:rate-distortion}'s bound
$D \ge c_k (\vol(\mathcal{M})/N)^{2/k}$ then predicts that loss
(which is related to distortion) should decrease as a power law in
model size,  qualitatively matching the empirical scaling laws
and providing a geometric mechanism for why they hold.

\subsection{Interpretability and Alignment}

The manifold framework provides a geometric language for
interpretability that complements existing approaches.

Traditional probing classifiers test whether specific features are
linearly decodable from representations.
The manifold perspective generalizes this: features correspond to
directions in the tangent space $T_h \mathcal{M}$, and their
decodability depends on the alignment between feature directions
and the local manifold geometry.
This suggests \emph{geodesic probing}: testing whether features
are decodable along geodesics rather than linear directions,
which could detect nonlinearly encoded information invisible to
standard probes.

The Voronoi margin distribution characterizes the ``decisiveness''
of the model.
Alignment techniques that modify the model's behavior (e.g., RLHF)
can be understood geometrically as reshaping the Voronoi
tessellation,  expanding the regions of desired tokens and
contracting those of undesired ones.
The margin provides a local, per-token measure of how strongly
aligned the model is at each prediction step, potentially enabling
more targeted alignment interventions.

The expressibility gap has an information-theoretic interpretation:
points in $\mathcal{G}_\varepsilon$ are semantic states that the
model cannot confidently map to any single token.
These correspond to inputs where the model is ``between concepts''
--- potentially useful for detecting the boundaries of the model's
knowledge and flagging unreliable predictions.
A margin-based confidence score is more geometrically principled
than raw softmax entropy because it directly measures the distance
to the nearest Voronoi boundary rather than the overall
distributional spread.

\section{Discussion}
\label{sec:discussion}

The latent semantic manifold framework provides a unified geometric
language for describing several phenomena in language model
representations.
The empirical results presented in Section~\ref{sec:empirical}
validate the key predictions of the framework across six models
from three architectural families.

The choice of the Fisher information metric is not arbitrary.
Among all Riemannian metrics on a statistical manifold, the Fisher
metric is the unique (up to scaling) metric that is invariant under
sufficient statistics (Chentsov's theorem
\cite{amari2016information}).
Since language generation is fundamentally a statistical task, 
producing a distribution over tokens,  the Fisher metric provides
the canonical notion of distance for this purpose.

Perhaps the most striking finding is that the geometric properties
--- hourglass dimension profile, low curvature, linear gap scaling
--- are consistent across GPT-2, OPT, and Pythia, which differ in
training data (WebText, The Pile, The Pile), tokenizer (BPE, BPE,
BPE with different vocabularies), and hyperparameters.
This suggests that the manifold structure is a property of the
next-token prediction task on natural language, not of any
particular architecture or training procedure.
While Valeriani et al.\ \cite{valeriani2023geometry} observed similar
cross-task universality for the intrinsic dimension profile, our
results extend this universality to curvature, the expressibility
gap scaling, and the Voronoi margin distribution,  properties
that were not previously measured.

Our theoretical framework provides explanatory power for several
previously observed but unexplained empirical phenomena.
The effectiveness of low-rank adaptation (LoRA) with small ranks
$r \le 64$ \cite{hu2022lora} is geometrically explained by our
finding that intrinsic dimension peaks at $k \approx 20$--$22$:
weight updates need only span the tangent space of a
$\sim$20-dimensional manifold, not the full $d$-dimensional ambient
space (Section~\ref{sec:implications}).
The correlation between intrinsic dimension and cross-entropy loss
documented by Ferrara et al.\ \cite{ferrara2025geometry} is a
consequence of the rate-distortion bound
(Theorem~\ref{thm:rate-distortion}): higher-dimensional manifolds
require exponentially more Voronoi cells (tokens) to achieve the
same distortion, so positions with locally higher dimension
necessarily incur greater prediction loss.
The finding by Robinson et al.\ \cite{robinson2025token} that token
embeddings violate the manifold hypothesis is consistent with our
observation that layer~0 produces anomalous intrinsic dimension
estimates, while layers~1+ are well-described by manifold structure.
This suggests that the manifold is \emph{created} by the
transformer's processing, not inherited from the embedding layer.

\subsection{When the Manifold Hypothesis Breaks Down}

The manifold hypothesis is precisely that,  a hypothesis,  and
understanding its failure modes is essential for assessing the scope
and limitations of the framework.
We identify four categories of breakdown.

Real representations are unlikely to lie \emph{exactly} on a smooth
manifold.
A more realistic assumption is that the representations concentrate
near a manifold, with deviations due to training noise, finite
precision, and architectural artifacts.
Formally, there may exist a manifold $\mathcal{M}$ such that
$\sup_{h \in \mathcal{H}} d(h, \mathcal{M}) \le \delta$ for some
$\delta > 0$ that may vary across layers.
When $\delta$ is small relative to the curvature radius and the
inter-token distances, the framework remains a good approximation.
When $\delta$ is large, the manifold description loses its
explanatory power, and the representation space is better
characterized as a ``thick'' set rather than a submanifold.
Empirically, this can be diagnosed by comparing the intrinsic
dimension estimates at multiple scales: if the estimated dimension
changes significantly with the neighborhood radius, the data does
not lie on a single well-defined manifold.

The smoothness assumption ($C^\infty$ embedding) may fail at points
where the model's behavior changes abruptly.
Such discontinuities can arise from several sources:
\begin{enumerate}[label=(\roman*)]
\item \emph{Attention head switching}: when a small change in
input causes a different attention head to dominate, the
representation can jump discontinuously.
The argmax operation over attention heads creates non-smooth
boundaries in the representation space.

\item \emph{Polysemy resolution}: when context disambiguates a
polysemous word, the representation may undergo a rapid
transition between distant regions of the space, corresponding
to a fold or cusp in the manifold.

\item \emph{Grokking and memorization}: representations of
memorized training examples may lie off the manifold, forming
isolated clusters or ``spikes'' that violate the local
smoothness assumption.
\end{enumerate}
At such points, the tangent space is not well-defined, geodesics
may not exist or be unique, and the curvature diverges.
The framework should be understood as describing the
\emph{generic} structure of the representation space, with the
non-smooth locus forming a measure-zero set that nonetheless
plays an important role in the model's behavior.

The global manifold structure may be more complex than a single
connected manifold.
The representation set $\mathcal{H}$ could consist of multiple
disconnected components (for example, separate clusters for
different languages or modalities), or it could have nontrivial
topology (loops, handles, or higher-dimensional holes) that
affects the global geodesic structure.
In such cases, a single coordinate chart cannot cover
$\mathcal{M}$, and the geodesic distance between points in
different components may be infinite or undefined.
Topological data analysis (persistent homology) applied to the
hidden states can detect such structure and guide the choice of
an appropriate manifold model.

The Fisher metric $g^F$ may degenerate at points where the token
distribution becomes very peaked (low entropy) or very flat
(high entropy).
When $p(t \mid h) \approx 1$ for some token $t$ (high confidence),
the covariance matrix $\Sigma_p \approx 0$ and the Fisher metric
collapses: the manifold becomes metrically ``flat'' in all
directions, and the Fisher distance between nearby points becomes
negligible even if they differ semantically.
Conversely, when $p(t \mid h) \approx 1/N$ for all $t$ (maximum
entropy), the Fisher metric is maximally non-degenerate but the
semantic content is minimal,  the model is maximally uncertain.
These extremes represent regimes where the Fisher metric provides
a poor proxy for semantic distance.
A practical remedy is to use a regularized metric
$g^{\mathrm{reg}} = g^F + \epsilon \, \bar{g}$ that blends the
Fisher metric with the induced Euclidean metric $\bar{g}$,
ensuring non-degeneracy everywhere at the cost of introducing a
hyperparameter $\epsilon > 0$.

Theorems~\ref{thm:gap-scaling} and~\ref{thm:rate-distortion}
assume a smooth, compact manifold.
When the manifold hypothesis breaks down partially, these results
should be interpreted as approximate bounds that hold in regions
where the manifold structure is locally valid.
The robustness of the bounds can be assessed by computing the
fraction of hidden states that lie within $\delta$ of a
locally-fitted manifold and restricting the volume integrals to
those regions.

Computing the full Fisher information matrix requires summing over
the entire vocabulary $V$, which for modern models with
$N \sim 10^5$ tokens is expensive.
In practice, approximations such as sampling from $p(\cdot \mid h)$
or using the top-$K$ truncation of the token distribution may be
necessary.
The effect of such approximations on the metric and curvature
estimates warrants further study.

\section{Conclusion}
\label{sec:conclusion}

We have developed a rigorous geometric framework for interpreting
the internal representations of large language models as points on
a latent semantic manifold.
The framework equips this manifold with a natural Riemannian
structure derived from the Fisher information of the token
distribution, formalizes token generation as a Voronoi projection,
and interprets layer-wise inference as a dynamical system on the
manifold.

The two main theorems quantify the fundamental limitations of
discrete language as a representation of continuous meaning.
Theorem~\ref{thm:rate-distortion} establishes that any vocabulary
of size $N$ incurs semantic distortion at least $\Omega(N^{-2/k})$,
where $k$ is the intrinsic dimension of the manifold,  an
inescapable consequence of quantizing a $k$-dimensional space.
Theorem~\ref{thm:gap-scaling} shows that the expressibility gap
grows linearly with the margin threshold, with a coefficient
determined by the total area of the Voronoi boundaries and the
sharpness of the model's token decisions.

Comprehensive experiments across six models from three architectural
families (GPT-2, OPT, Pythia) at two scales (124M--160M and
1B--1.5B parameters) validate the theoretical predictions:
intrinsic dimensions of $k \approx 15$--$22$ (utilization 1--3\%),
smooth curvature profiles, and linear expressibility gap scaling
with slopes $0.87$--$1.12$ ($R^2 > 0.985$).
The architecture-agnostic nature of these findings suggests that
the geometric structure reflects fundamental properties of the
next-token prediction task rather than artifacts of particular
model designs.

The central message is that language is a lossy compression of a
continuous conceptual space, and that the geometry of this
compression,  its metric structure, curvature, boundaries, and
dynamics,  provides a rich mathematical language for understanding
both the capabilities and the limitations of language models.
The implications for LLM development (Section~\ref{sec:implications})
translate these geometric insights into concrete recommendations
for architecture design, model compression, training diagnostics,
decoding strategies, and scaling laws.
At the same time, we have identified concrete failure modes of the
manifold hypothesis,  including non-smoothness at attention
switching boundaries, topological complexity, and Fisher metric
degeneracy,  that delineate the scope of applicability of the
framework.

Future work should extend the empirical validation to larger models
(10B+ parameters) and different modalities (vision-language models),
develop efficient algorithms for computing geodesics on the
estimated manifold, and pursue the geometry-aware architectural
modifications proposed in Section~\ref{sec:implications}.
The connection between intrinsic dimension and scaling laws
deserves particular attention: if the manifold dimension $k$ scales
with model size, it would fundamentally change the predictions of
rate-distortion theory for vocabulary design.

\bibliographystyle{plain}
\bibliography{references}

@article{amari1998natural,
  author    = {S. Amari},
  title     = {Natural gradient works efficiently in learning},
  journal   = {Neural Computation},
  volume    = {10},
  number    = {2},
  pages     = {251--276},
  year      = {1998}
}

@book{amari2016information,
  author    = {S. Amari},
  title     = {Information Geometry and Its Applications},
  publisher = {Springer},
  year      = {2016}
}

@inproceedings{aghajanyan2021intrinsic,
  author    = {A. Aghajanyan and S. Gupta and L. Zettlemoyer},
  title     = {Intrinsic dimensionality explains the effectiveness of language model fine-tuning},
  booktitle = {Proceedings of the Association for Computational Linguistics (ACL)},
  year      = {2021}
}

@inproceedings{ansuini2019intrinsic,
  author    = {A. Ansuini and A. Laio and J. H. Macke and D. Zoccolan},
  title     = {Intrinsic dimension of data representations in deep neural networks},
  booktitle = {Advances in Neural Information Processing Systems (NeurIPS)},
  year      = {2019}
}

@inproceedings{bai2019deep,
  author    = {S. Bai and J. Z. Kolter and V. Koltun},
  title     = {Deep equilibrium models},
  booktitle = {Advances in Neural Information Processing Systems (NeurIPS)},
  year      = {2019}
}

@article{belkin2003laplacian,
  author    = {M. Belkin and P. Niyogi},
  title     = {Laplacian eigenmaps for dimensionality reduction and data representation},
  journal   = {Neural Computation},
  volume    = {15},
  number    = {6},
  pages     = {1373--1396},
  year      = {2003}
}

@article{brahma2016deep,
  author    = {P. P. Brahma and D. Wu and Y. She},
  title     = {Why deep learning works: A manifold disentanglement perspective},
  journal   = {IEEE Transactions on Neural Networks and Learning Systems},
  volume    = {27},
  number    = {10},
  pages     = {1997--2008},
  year      = {2016}
}

@inproceedings{brown2020language,
  author    = {T. Brown and others},
  title     = {Language models are few-shot learners},
  booktitle = {Advances in Neural Information Processing Systems (NeurIPS)},
  year      = {2020}
}

@inproceedings{chen2018neural,
  author    = {R. T. Q. Chen and Y. Rubanova and J. Bettencourt and D. Duvenaud},
  title     = {Neural ordinary differential equations},
  booktitle = {Advances in Neural Information Processing Systems (NeurIPS)},
  year      = {2018}
}

@book{cover2006elements,
  author    = {T. M. Cover and J. A. Thomas},
  title     = {Elements of Information Theory},
  publisher = {Wiley},
  edition   = {2nd},
  year      = {2006}
}

@article{elhage2021mathematical,
  author    = {N. Elhage and others},
  title     = {A mathematical framework for transformer circuits},
  journal   = {Transformer Circuits Thread},
  year      = {2021}
}

@article{facco2017estimating,
  author    = {E. Facco and M. d'Errico and A. Rodriguez and A. Laio},
  title     = {Estimating the intrinsic dimension of datasets by a minimal neighborhood information},
  journal   = {Scientific Reports},
  volume    = {7},
  pages     = {12140},
  year      = {2017}
}

@article{fefferman2016testing,
  author    = {C. Fefferman and S. Mitter and H. Narayanan},
  title     = {Testing the manifold hypothesis},
  journal   = {Journal of the American Mathematical Society},
  volume    = {29},
  number    = {4},
  pages     = {983--1049},
  year      = {2016}
}

@book{graf2000foundations,
  author    = {S. Graf and H. Luschgy},
  title     = {Foundations of Quantization for Probability Distributions},
  publisher = {Springer},
  volume    = {1730},
  year      = {2000}
}

@article{geshkovski2024mathematical,
  author    = {B. Geshkovski and C. Letrouit and Y. Polyanskiy and P. Rigollet},
  title     = {A mathematical perspective on transformers},
  journal   = {Bulletin of the American Mathematical Society},
  volume    = {62},
  number    = {3},
  pages     = {427--479},
  year      = {2025}
}

@inproceedings{hoffmann2022training,
  author    = {J. Hoffmann and others},
  title     = {Training compute-optimal large language models},
  booktitle = {Advances in Neural Information Processing Systems (NeurIPS)},
  year      = {2022}
}

@inproceedings{hu2022lora,
  author    = {E. J. Hu and others},
  title     = {{LoRA}: Low-rank adaptation of large language models},
  booktitle = {International Conference on Learning Representations (ICLR)},
  year      = {2022}
}

@article{kaplan2020scaling,
  author    = {J. Kaplan and others},
  title     = {Scaling laws for neural language models},
  journal   = {arXiv preprint arXiv:2001.08361},
  year      = {2020}
}

@inproceedings{kudo2018sentencepiece,
  author    = {T. Kudo and J. Richardson},
  title     = {{SentencePiece}: A simple and language independent subword tokenizer and detokenizer for neural text processing},
  booktitle = {Proceedings of the Conference on Empirical Methods in Natural Language Processing (EMNLP)},
  year      = {2018}
}

@inproceedings{levina2004maximum,
  author    = {E. Levina and P. J. Bickel},
  title     = {Maximum likelihood estimation of intrinsic dimension},
  booktitle = {Advances in Neural Information Processing Systems (NeurIPS)},
  year      = {2004}
}

@inproceedings{liang2019fisher,
  author    = {T. Liang and T. Poggio and A. Rakhlin and J. Stokes},
  title     = {{Fisher--Rao} metric, geometry, and complexity of neural networks},
  booktitle = {International Conference on Artificial Intelligence and Statistics (AISTATS)},
  year      = {2019}
}

@article{mcinnes2018umap,
  author    = {L. McInnes and J. Healy and J. Melville},
  title     = {{UMAP}: Uniform manifold approximation and projection for dimension reduction},
  journal   = {arXiv preprint arXiv:1802.03426},
  year      = {2018}
}

@inproceedings{merity2017pointer,
  author    = {S. Merity and C. Xiong and J. Bradbury and R. Socher},
  title     = {Pointer sentinel mixture models},
  booktitle = {International Conference on Learning Representations (ICLR)},
  year      = {2017}
}

@article{openai2023gpt4,
  author    = {{OpenAI}},
  title     = {{GPT-4} technical report},
  journal   = {arXiv preprint arXiv:2303.08774},
  year      = {2023}
}

@article{rao1945information,
  author    = {C. R. Rao},
  title     = {Information and the accuracy attainable in the estimation of statistical parameters},
  journal   = {Bulletin of the Calcutta Mathematical Society},
  volume    = {37},
  pages     = {81--91},
  year      = {1945}
}

@inproceedings{razavi2019generating,
  author    = {A. Razavi and A. {van den Oord} and O. Vinyals},
  title     = {Generating diverse high-fidelity images with {VQ-VAE-2}},
  booktitle = {Advances in Neural Information Processing Systems (NeurIPS)},
  year      = {2019}
}

@article{roweis2000nonlinear,
  author    = {S. T. Roweis and L. Saul},
  title     = {Nonlinear dimensionality reduction by locally linear embedding},
  journal   = {Science},
  volume    = {290},
  number    = {5500},
  pages     = {2323--2326},
  year      = {2000}
}

@article{tenenbaum2000global,
  author    = {J. B. Tenenbaum and V. {de Silva} and G. Langford},
  title     = {A global geometric framework for nonlinear dimensionality reduction},
  journal   = {Science},
  volume    = {290},
  number    = {5500},
  pages     = {2319--2323},
  year      = {2000}
}

@article{touvron2023llama,
  author    = {H. Touvron and others},
  title     = {{LLaMA}: Open and efficient foundation language models},
  journal   = {arXiv preprint arXiv:2302.13971},
  year      = {2023}
}

@inproceedings{valeriani2023geometry,
  author    = {D. Valeriani and D. Doimo and F. Cuturello and A. Laio and A. Ansuini and A. Cazzaniga},
  title     = {The geometry of hidden representations of large transformer models},
  booktitle = {Advances in Neural Information Processing Systems (NeurIPS)},
  year      = {2023}
}

@inproceedings{ferrara2025geometry,
  author    = {A. Ferrara and N. Ferrara and B. Luo},
  title     = {The geometry of tokens in internal representations of large language models},
  booktitle = {International Conference on Learning Representations (ICLR)},
  year      = {2025}
}

@article{robinson2025token,
  author    = {J. Robinson and M. Matic and D. Doimo and A. Laio},
  title     = {Token embeddings violate the manifold hypothesis},
  journal   = {arXiv preprint arXiv:2504.01002},
  year      = {2025}
}

@article{huang2025radio,
  author    = {J. Huang and Y. Zhu and X. Ma},
  title     = {{RADIO}: Rate--distortion optimization for large language model compression},
  journal   = {arXiv preprint arXiv:2505.03031},
  year      = {2025}
}

@inproceedings{van2017neural,
  author    = {A. {van den Oord} and O. Vinyals and K. Kavukcuoglu},
  title     = {Neural discrete representation learning},
  booktitle = {Advances in Neural Information Processing Systems (NeurIPS)},
  year      = {2017}
}

\end{document}